\documentclass[lettersize,journal]{IEEEtran}

\usepackage{amsmath,amsfonts}
\usepackage{algorithmic}
\usepackage{algorithm}
\usepackage{array}
\usepackage[caption=false,font=normalsize,labelfont=sf,textfont=sf]{subfig}
\usepackage{textcomp}
\usepackage{stfloats}
\usepackage{url}
\usepackage{verbatim}
\usepackage{graphicx}
\usepackage{cite}
\usepackage{hyperref}
\usepackage{diagbox} 
\usepackage{bm}                   
\usepackage{amsthm}

\graphicspath{{figs/}}

\newtheorem{theorem}{Theorem}[section]
\newtheorem{proposition}[theorem]{Proposition}
\newtheorem{lemma}[theorem]{Lemma}

\newtheorem{definition}[theorem]{Definition}

\newtheorem{remark}[theorem]{Remark}

\DeclareMathOperator{\KL}{KL}  
\newcommand{\vect}[1]{\bm{#1}}

\newcommand{\N}[1]{\mathcal{#1}}

\begin{document}

\title{Relaxed Triangle Inequality for Kullback-Leibler Divergence Between Multivariate Gaussian Distributions}

\author{
	Shiji Xiao,\thanks{Shiji Xiao, Yufeng Zhang, Chubo Liu, Yan Ding and Kenli Li are with the College of Computer Science and Electronic Engineering, Hunan University, Changsha 410082, China (e-mail: \{xsj, yufengzhang, liuchubo, ding, lkl\}@hnu.edu.cn). Yufeng Zhang and Kenli Li are the corresponding authors.}
	\and
	Yufeng Zhang,
	\and
	Chubo Liu,
	\and
	Yan Ding,
	\and
	Keqin Li \textsl{Fellow IEEE} \thanks{Keqin Li is with the Department of Computer Science, State University of New York at New Paltz, 1 Hawk Drive, New Paltz, NY 12561, USA (e-mail: lik@newpaltz.edu).},
	\and
	Kenli Li
}




\maketitle

\begin{abstract}
	The Kullback-Leibler (KL) divergence is not a proper distance metric and does not satisfy the triangle inequality, posing theoretical challenges in certain practical applications.
	Existing work has demonstrated that KL divergence between multivariate Gaussian distributions follows a relaxed triangle inequality. Given any three multivariate Gaussian distributions $\mathcal{N}_1, \mathcal{N}_2$, and $\mathcal{N}_3$, if $\KL(\mathcal{N}_1, \mathcal{N}_2)\leq \epsilon_1$ and $\KL(\mathcal{N}_2, \mathcal{N}_3)\leq \epsilon_2$, then $\KL(\mathcal{N}_1, \mathcal{N}_3)< 3\epsilon_1+3\epsilon_2+2\sqrt{\epsilon_1\epsilon_2}+o(\epsilon_1)+o(\epsilon_2)$. However, the supremum of $\KL(\mathcal{N}_1, \mathcal{N}_3)$ is still unknown.  
	In this paper, we investigate the relaxed triangle inequality for the KL divergence between multivariate Gaussian distributions and give the supremum of $\KL(\mathcal{N}_1, \mathcal{N}_3)$ as well as the conditions when the supremum can be attained.
	When $\epsilon_1$ and $\epsilon_2$ are small, the supremum is $\epsilon_1+\epsilon_2+2\sqrt{\epsilon_1\epsilon_2}+o(\epsilon_1)+o(\epsilon_2)$. 
	Finally, we demonstrate several applications of our results in out-of-distribution detection with flow-based generative models and safe reinforcement learning.
\end{abstract}

\begin{IEEEkeywords}
	Kullback-Leibler divergence, multivariate Gaussian distribution, relaxed triangle inequality, Lambert $W$ function, out-of-distribution detection, reinforcement learning.
\end{IEEEkeywords}
 
\section{Introduction} \label{section_introduction}
\IEEEPARstart{T}{he} Kullback-Leibler divergence serves as a cornerstone in information theory and has found broad applications in machine learning and deep learning \cite{PRML, rioux2021BlindDeblurringBarcodes, wang2012RobustActiveStereo}, such as pattern recognition~\cite{lifang2017FeatureSelectionAlgorithm}, variational inference~\cite{blei2017VariationalInferenceReview, jordan1999IntroductionVariationalMethods}, generative modeling~\cite{kingma2022AutoEncodingVariationalBayes, rezende2015VariationalInferenceNormalizinga} and reinforcement learning~\cite{schulman2015TrustRegionPolicya, schulman2017ProximalPolicyOptimization}.
The KL divergence between two probability density functions $p(\vect{x})$ and $q(\vect{x})$ defined on $\mathbb{R}^n$ is 
\begin{equation*}
	\KL(p(\vect{x}) \,\|\, q(\vect{x})) = \int p(\vect{x}) \log \frac{p(\vect{x})}{q(\vect{x})} \, d\vect{x}
\end{equation*}
It is well-known that KL divergence is not a proper distance metric because it is not symmetric and does not satisfy the triangle inequality. This restricts the applications of KL divergence in many areas.

The Gaussian distribution is a fundamental modeling tool in  statistics~\cite{casella2024StatisticalInference,  fisher1922MathematicalFoundationsTheoretical}, information theory\cite{laparra2025EstimatingInformationTheoretic}, and machine learning~\cite{kingma2022AutoEncodingVariationalBayes, rezende2015VariationalInferenceNormalizing, zhang2024KullbackLeiblerDivergenceBasedOutofDistribution, yang2023DetectingRotatedObjects}.
Given two $n$-dimensional Gaussian distributions $\mathcal{N}_1 = \mathcal{N}(\bm{\mu}_1, \bm{\Sigma}_1)$ and $\mathcal{N}_2 = \mathcal{N}(\bm{\mu}_2, \bm{\Sigma}_2)$, the KL divergence between $\mathcal{N}_1$ and $\mathcal{N}_2$ has the following closed form ~\cite{pardo2018StatisticalInferenceBased}
\begin{equation} \label{eq_closedFormKLForGauss}
	\begin{aligned}
		\KL\left(\mathcal{N}_1 \parallel \mathcal{N}_2 \right) = \frac{1}{2} \Bigl( -\log{|\bm{\Sigma}_2^{-1}||\bm{\Sigma}_1|}	+ \operatorname{Tr}(\bm{\Sigma}_2 ^ {-1} \bm{\Sigma}_1) \\
		- n+ (\vect{\mu_2} - \vect{\mu_1})^\top \bm{\Sigma}_2 ^ {-1} (\vect{\mu_2} - \vect{\mu_1}) \Bigr)
  	\end{aligned}
\end{equation}
where the logarithm is taken to base $e$.
The closed form of KL divergence between Gaussian distributions allows further investigation of its properties \cite{pardo2018StatisticalInferenceBased}. 

Recently, Zhang \textit{et al.} reveal that KL divergence between multivariate Gaussian distributions follows a relaxed triangle inequality \cite{zhang2023PropertiesKullbackleiblerDivergence}. Given any three $n$-dimensional Gaussian distributions $\mathcal{N}_1$, $\mathcal{N}_2$, and $\mathcal{N}_3$ satisfying $\KL(\mathcal{N}_1 \,\|\, \mathcal{N}_2) \leq \epsilon_1$ and $\KL(\mathcal{N}_2 \,\|\, \mathcal{N}_3) \leq \epsilon_2$ with $\epsilon_1, \epsilon_2 \geq 0$, it holds that
\begin{equation}
\begin{aligned}\label{eq:2023relaxed}
	\KL\big(\mathcal{N}_1 \,\|\, \mathcal{N}_3\big)
	< \frac{1}{2} \Bigg[ \left( w_2(2\epsilon_1) - 1\right)    \left( w_2(2\epsilon_2) - 1\right) \\
	+ w_2(2\epsilon_2) \left( \sqrt{2\epsilon_1} + \sqrt{\frac{2\epsilon_2}{w_1(2\epsilon_2)}} \right)^2  \Bigg] + \epsilon_1 + \epsilon_2
\end{aligned}
\end{equation}
where $w_1(x) = -W_{0}\!\big(-e^{-(1+x)}\big)$ and $w_2(x) = -W_{-1}\!\big(-e^{-(1+x)}\big)$, and $W_{0}(x)$ and $W_{-1}(x)$ denote the principal and lower real branches of the Lambert $W$ function \cite{corless1996LambertWFunction}, respectively. 
When $\epsilon_1$ and $\epsilon_2$ are small, the inequality \eqref{eq:2023relaxed} becomes 
$$\KL\big(\mathcal{N}_1 \,\|\, \mathcal{N}_3\big)<3\epsilon_1+3\epsilon_2+2\sqrt{\epsilon_1\epsilon_2}+o(\epsilon_1)+o(\epsilon_2)$$
The relaxed triangle inequality opens up opportunities for applying KL divergence in a broader range of settings. For example, it can be used to derive out-of-distribution detection algorithm with flow-based models \cite{zhang2024KullbackLeiblerDivergenceBasedOutofDistribution}, extend one-step safe guarantee to multiple steps in reinforcement learning \cite{liu2022ConstrainedVariationalPolicy}, \textit{etc.}

In \cite{zhang2023PropertiesKullbackleiblerDivergence}, Zhang \textit{et al.} solve the following optimization problem to attain a relaxed triangle inequality.
\begin{equation} \tag{$P$} \label{problem_generalgauss}
	\begin{aligned}
		\text{max } & \KL\left(\N{N}(\bm{\mu}_1, \bm{\Sigma}_1) \, \| \, \N{N}(\bm{\mu}_3, \bm{\Sigma}_3)\right) \\
		\text{s.t.} & \KL\left(\N{N}(\bm{\mu}_1, \bm{\Sigma}_1) \, \| \, \N{N}(\bm{\mu}_2, \bm{\Sigma}_2)\right) = \Delta_1 \\
		& \KL\left(\N{N}(\bm{\mu}_2, \bm{\Sigma}_2) \, \| \, \N{N}(\bm{\mu}_2, \bm{\Sigma}_2)\right) = \Delta_2 
	\end{aligned}
\end{equation}

Leveraging the closed form of KL divergence between Gaussians and the properties of the Lambert \textit{W} function, Zhang \textit{et al.} developed an optimization-based proof and find the upper bound (Inequality \eqref{eq:2023relaxed}). However, the bound found in previous work \cite{zhang2023PropertiesKullbackleiblerDivergence} is not a strict one because Zhang \textit{et al.} relaxed the constraints in the proof for convenience. 
 
In this work, we step further in exploring the relaxed triangle inequality of KL divergence between multivariate Gaussian distributions. We aim to answer the following research question: \textit{suppose $\KL\left(\mathcal{N}_1 \,\|\, \mathcal{N}_2 \right) = \Delta_1$ and $\KL\left(\mathcal{N}_2 \,\|\, \mathcal{N}_3\right) = \Delta_2$, where $\Delta_1, \Delta_2 > 0$ are fixed finite constants, what is the supremum of $KL(\mathcal{N}_1||\mathcal{N}_3)$}? 	

In this work, we decompose the Problem \ref{problem_generalgauss} into two  optimization problems $P_{\vect{\mu}}$ and $P_{\bm{\Sigma}}$ and solve them individually. For the first problem $P_{\vect{\mu}}$, we use Cauchy-Schwarz Inequality to find its supremum. For the second problem $P_{\bm{\Sigma}}$, which has been solved  in previous work \cite{zhang2023PropertiesKullbackleiblerDivergence} with the support of several lemmas, we provide a more concise proof for one key lemma. Furthermore, we show that $P_{\bm{\mu}}$ and $P_{\bm{\Sigma}}$ share identical conditions for achieving their supremum. As a result, the supremum of  Problem \ref{problem_generalgauss} can be attained from the solutions of $P_{\vect{\mu}}$ and $P_{\bm{\Sigma}}$. 

The contributions of this work are as follows.
\begin{enumerate}
	\item We find that the dimension-free supremum of $\KL(\N{N}_1 \| \N{N}_3)$ is $\frac{1}{2}\left[ w_2(2 \Delta_1) - 1 \right]\left[ w_2(2 \Delta_2) - 1 \right] + \Delta_1 + \Delta_2$ if $\KL(\N{N}_1 \| \N{N}_2) = \Delta_1$ and $\KL(\N{N}_2 \| \N{N}_3) = \Delta_2$. We explicitly characterize the necessary and sufficient conditions for the supremum to be attained.
	
	\item The supremum of $\KL(\N{N}_1 \| \N{N}_3)$ is $\epsilon_1 + \epsilon_2  + 2 \sqrt{\epsilon_1 \epsilon_2} + o(\epsilon_1) + o(\epsilon_2)$ if $\KL(\N{N}_1 \| \N{N}_2) = \epsilon_1$ and $\KL(\N{N}_2 \| \N{N}_3) = \epsilon_2$ for small $\epsilon_1$ and $\epsilon_2$.
	
	\item The theoretical results developed in this paper can help the application of KL divergence in various domains including out-of-distribution detection with flow-based models and safe reinforcement learning.
\end{enumerate}

The remainder of this paper is organized as follows. Section~\ref{section_relatedwork} reviews related work. Section~\ref{section_notations} introduces the notation used throughout the paper. Section~\ref{section_KLsupremum} presents the core dimension-free supremum theorem concerning KL divergences between multivariate Gaussian distributions. In Section~\ref{section_discussion}, we compare our results with existing ones and discuss their applications. Finally, Section~\ref{section_conclusion} concludes the paper. Lengthy proofs are deferred to the Appendix.

\section{Related Work} \label{section_relatedwork}
\noindent
Since its introduction by Kullback and Leibler~\cite{kullback1951InformationSufficiency}, the KL divergence has played a pivotal role across information theory~\cite{csiszar1975IDivergenceGeometryProbability}, statistical inference~\cite{gkelsinis2022StatisticalInferenceBased, pardo2018StatisticalInferenceBased}, and machine learning~\cite{blei2017VariationalInferenceReview, schulman2015TrustRegionPolicy}.
To date, researchers have extensively investigated the KL divergence between various families of complex probability distributions, including Gaussian Mixture Models (GMMs)~\cite{durrieu2012LowerUpperBounds, hershey2007ApproximatingKullbackLeibler}, multivariate generalized Gaussian distributions~\cite{bouhlel2019KullbackLeiblerDivergence}, and Markov sources~\cite{rached2004KullbackLeiblerDivergenceRate}. 

Gaussian distributions has been  employed as a fundamental building block in generative models such as Variational Autoencoders~\cite{kingma2022AutoEncodingVariationalBayes} and normalizing flows~\cite{rezende2015VariationalInferenceNormalizinga}, as well as in a wide range of probabilistic modeling frameworks.
The KL divergence between multivariate Gaussian distributions admits a closed-form expression~\cite{pardo2018StatisticalInferenceBased} as shown in~\eqref{eq_closedFormKLForGauss}. 

KL divergence is not a proper distance metric because it is not symmetric and does not satisfy triangle inequality.
Recently, Zhang \textit{et al.}  systematically investigate how severely the KL divergence between multivariate Gaussian distributions deviates from satisfying symmetry and the triangle inequality \cite{zhang2023PropertiesKullbackleiblerDivergence}. 
For symmetry, they established infimum and supremum bounds. 
However, for the triangle inequality, given $\KL(\N{N}_1 \,\|\, \N{N}_2) \leq \varepsilon_1$ and $\KL(\N{N}_2 \,\|\, \N{N}_3) \leq \varepsilon_2$, they obtain a dimension-free but not a strict upper bound on $\KL(\N{N}_1 \,\|\, \N{N}_3)$.
Although this latter result is not tight, it nonetheless fills a key theoretical gap and provides a rigorous foundation for high-dimensional applications ~\cite{zhang2023PropertiesKullbackleiblerDivergence}.

To the best of our knowledge, no prior work has further characterized the tightness of this bound. In this paper, we derive the supremum of $\KL(\N{N}_1 \,\|\, \N{N}_3)$ under fixed values of $\KL(\N{N}_1 \,\|\, \N{N}_2)$ and $\KL(\N{N}_2 \,\|\, \N{N}_3)$, thereby establishing the optimal characterization of the deviation from the triangle inequality for KL divergence between Gaussian distributions.

\section{Notations} \label{section_notations}
\noindent
We begin by introducing the multi-valued Lambert $W$ function \cite{corless1996LambertWFunction}.
\begin{definition} \cite{corless1996LambertWFunction}
	The inverse function of $y = x e^x$ is called the Lambert $W$ function, denoted by $x = W(y)$.
\end{definition}

Following \cite{zhang2023PropertiesKullbackleiblerDivergence}, we adopt the functions $w_1(t)$ and $w_2(t)$ for $t \geq 0$, where $w_1(t)$ denotes the smaller solution and $w_2(t)$ denotes the larger solution of the equation $f(x) = x - \log x = 1 + t$. Their closed-form expressions are given in the following lemma.

\begin{lemma}\cite{zhang2023PropertiesKullbackleiblerDivergence}
	For $t \geq 0$, the equation $f(x) = 1 + t$ has two solutions. The smaller solution is given by
	\[
	w_1(t) = -W_{0}\!\big(-e^{-(1+t)}\big) \in (0, 1]
	\]
	and the larger solution is given by
	\[
	w_2(t) = -W_{-1}\!\big(-e^{-(1+t)}\big) \in [1, +\infty)
	\]
	where $W_{0}$ and $W_{-1}$ denote the principal and $-1$ branches of the Lambert $W$ function, respectively.
\end{lemma}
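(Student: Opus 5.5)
We will study $f(x)=x-\log x$ on $(0,\infty)$ by elementary calculus and then identify the two preimages of $1+t$ through the defining relation of the Lambert $W$ function.

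\emph{Shape of $f$.} Since $f'(x)=1-1/x$, $f$ is strictly decreasing on $(0,1]$ and strictly increasing on $[1,\infty)$. Its minimum is $f(1)=1$. Moreover $f(x)\to+\infty$ both as $x\to0^+$ (because $-\log x\to+\infty$) and as $x\to+\infty$.

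\emph{Existence and location of solutions.} By continuity and strict monotonicity on each piece, for every $t\ge0$ the equation $f(x)=1+t$ has exactly one solution $x_1\in(0,1]$ and exactly one solution $x_2\in[1,\infty)$. They coincide, at $x=1$, exactly when $t=0$. This already gives the claimed ranges $(0,1]$ and $[1,+\infty)$, and it shows $x_1$ is the smaller solution and $x_2$ the larger.

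\emph{Lambert $W$ form.} Rewrite $x-\log x=1+t$ as $\log x - x = -(1+t)$ and exponentiate to get $xe^{-x}=e^{-(1+t)}$. Multiplying by $-1$ gives
\[
(-x)e^{-x}=-e^{-(1+t)} .
\]
Hence $-x$ is a real solution $y$ of $ye^y=z$ with $z=-e^{-(1+t)}$. Since $t\ge0$, we have $z\in[-e^{-1},0)$, and on this interval the real solutions of $ye^y=z$ are exactly $W_0(z)\in[-1,0)$ and $W_{-1}(z)\le-1$.

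\emph{Matching branches.} If $x\in(0,1]$, then $-x\in[-1,0)$, which is the range of $W_0$; so $x_1=-W_0(-e^{-(1+t)})$. If $x\ge1$, then $-x\le-1$, which is the range of $W_{-1}$; so $x_2=-W_{-1}(-e^{-(1+t)})$. At $t=0$ both branches give $-1$, consistent with $x_1=x_2=1$.

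The argument is routine throughout. The only point needing care is this branch matching: one must check that each branch's range lines up with the correct interval for $x$, including the boundary case $t=0$.
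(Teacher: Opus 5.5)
Your proof is correct and complete. There is no proof in the paper to compare it with: the paper imports this lemma directly from the cited reference.

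Your argument is the standard one and has no gaps. The monotonicity of $f$ on $(0,1]$ and on $[1,\infty)$ gives existence and uniqueness of each solution, and hence the ranges $(0,1]$ and $[1,\infty)$. Rewriting the equation as $(-x)e^{-x}=-e^{-(1+t)}$ and matching $-x$ to the ranges $[-1,0)$ of $W_0$ and $(-\infty,-1]$ of $W_{-1}$ then identifies the two closed forms. You also handle the degenerate case $t=0$ correctly: there the two solutions coincide at $x=1$, so ``two solutions'' should be read as a double root.
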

Based on $w_2(t)$, we then introduce three auxiliary functions $F(x, y)$, $G(x, y; \Delta_1, \Delta_2)$ and $H(x, y; \Delta_1, \Delta_2)$ defined on $\Omega(\Delta_1, \Delta_2)=[0, 2 \Delta_1] \times [0, 2 \Delta_2]$, which play a central role in our proofs:
\begin{equation*}
	\left\lbrace 
	\begin{aligned} 
        F(x, y) &= \left[ w_2(x) - 1 \right] \left[ w_2(y) - 1 \right] + x + y \\
		G(x, y; \Delta_1, \Delta_2) &= \left( \sqrt{w_2(y) \, (2\Delta_1 - x)} + \sqrt{2\Delta_2 - y} \right)^2 \\
		H(x, y; \Delta_1, \Delta_2) &= \frac{1}{2} \left[ F(x, y) + G(x, y; \Delta_1, \Delta_2) \right]
	\end{aligned}
	\right. 
\end{equation*}

Table \ref{tab_notations} summarizes  key notations used in this paper.

\begin{table*}[!t]
	\caption{Notations}
	\label{tab_notations}
	\centering
	\begin{tabular}{lc}
		\hline
		Notation & Description \\
		\hline
		$f(x)$ & The function $x - \log x$, with $x > 0$ \\
		$W(x)$ & The Lambert $W$ function \\
		$W_{-1}(x)$ & The $-1$ branch of $W(x)$ \\
		$w_1(t)$ & the smaller solution of $x - \log {x} = t + 1(t \geq 0)$, and satisfy $w_1(t) = -W_{0}\!\big(-e^{-(1+t)}\big)$, with $w_1(t) \leq 1 $ \\
		$w_2(t)$ & the larger solution of $x - \log {x} = t + 1(t \geq 0)$, and satisfy $w_2(t) = -W_{-1}\!\big(-e^{-(1+t)}\big)$, with $w_2(t) \geq 1 $ \\
		$\Omega(\Delta_1, \Delta_2) $ & $ [0, 2 \Delta_1] \times [0, 2 \Delta_2] $ \\
        $\operatorname{int}\left( \Omega(\Delta_1, \Delta_2)\right)$ & the interior of $\Omega(\Delta_1, \Delta_2)$, that is $(0,  2\Delta_1) \times (0, 2\Delta_2)$ \\
		$F(x,y)$ & $ \left[ w_2(x) - 1 \right] \left[ w_2(y) - 1 \right] + x + y$ \\
		$G(x,y; \Delta_1, \Delta_2)$ &  $\left( \sqrt{w_2(y) \, (2\Delta_1 - x)} + \sqrt{2\Delta_2 - y} \right)^2$, with $(x,y) \in \Omega(\Delta_1, \Delta_2)$ \\
  		$H(x, y; \Delta_1, \Delta_2)$ & $\frac{1}{2} \left[ F(x, y) + G(x, y; \Delta_1, \Delta_2) \right]$, with $(x,y) \in \Omega(\Delta_1, \Delta_2)$\\
		$\mathcal{N}(\mathbf{0}, \mathbf{I})$ & Standard normal distribution; the ambient dimension $n$ is omitted for brevity \\
		\hline
	\end{tabular}
\end{table*}

\section{Dimension-free Supremum} \label{section_KLsupremum}
\noindent
In this section, we establish a dimension-free supremum on $\KL\left(\N{N}_1 \| \N{N}_3 \right)$ given fixed values of $\KL\left(\N{N}_1 \| \N{N}_2 \right)$ and $\KL\left(\N{N}_2 \| \N{N}_3 \right)$ for any three Gaussian distributions.

\subsection{Core Lemma~\ref{lem_KLSupremumForNormal}} \label{subsection_KLSupremumForNormal}
\noindent
We first present Lemma~\ref{lem_KLSupremumForNormal}, which plays an essential role in establishing the subsequent theorem in this section.
\begin{lemma} \label{lem_KLSupremumForNormal}
	Let $\N{N}_i = \N{N}(\vect{\mu}_i, \bm{\Sigma}_i)$, $i \in \{1, 2\}$, be any two $n$-dimensional Gaussian distributions satisfying
	\begin{equation*}
		\left\{
		\begin{aligned}
			\KL\left(\mathcal{N}(\bm{\mu}_1, \bm{\Sigma}_1) \, \| \, \mathcal{N}(\vect{0}, \bm{I}) \right) &= \Delta_1 \\
			\KL\left( \mathcal{N}(\vect{0}, \bm{I}) \,\|\, \mathcal{N}(\bm{\mu}_2, \bm{\Sigma}_2) \right) &= \Delta_2
		\end{aligned}
		\right.
	\end{equation*}
	where $\Delta_1, \Delta_2 > 0$ are fixed constants, then
    \begin{equation*}
		\KL\bigl( \mathcal{N}_1 \,\|\, \mathcal{N}_2 \bigr) \leq \frac{1}{2} \left[ w_2(2 \Delta_1) - 1 \right]\left[ w_2(2 \Delta_2) - 1 \right] + \Delta_1 + \Delta_2
	\end{equation*}
	The equality holds if and only if
	\begin{equation*}
		\left\{
		\begin{aligned}
			\vect{\mu}_1 &= \vect{\mu}_2 = \vect{0} \\
			\bm{\Sigma}_1 &= \bm{Q} \operatorname{diag}\bigl(w_2(2\Delta_1), 1, \dots, 1\bigr) \bm{Q}^\top \\
			\bm{\Sigma}_2 &= \bm{Q} \operatorname{diag}\bigl(w_2(2\Delta_2)^{-1}, 1, \dots, 1\bigr) \bm{Q}^\top
		\end{aligned}
		\right.
	\end{equation*}
	where $\bm{Q}$ is an orthogonal matrix.
\end{lemma}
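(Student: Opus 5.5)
The plan is to decompose $\KL(\N{N}_1\|\N{N}_2)$ into a covariance part and a mean part, bound each one separately, and then maximize a two-variable function over $\Omega(\Delta_1,\Delta_2)$.

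\textbf{Step 1: decomposition.} Write $\bm{A}=\bm{\Sigma}_2^{-1}$ and $\bm{B}=\bm{\Sigma}_1$. Subtract the two constraint identities from the closed form \eqref{eq_closedFormKLForGauss}. This gives
\begin{equation*}
\KL(\N{N}_1\|\N{N}_2)=\Delta_1+\Delta_2+\tfrac12\operatorname{Tr}\bigl((\bm{A}-\bm{I})(\bm{B}-\bm{I})\bigr)+\tfrac12\bigl[\vect{\mu}_1^\top(\bm{A}-\bm{I})\vect{\mu}_1-2\vect{\mu}_1^\top\bm{A}\vect{\mu}_2\bigr].
\end{equation*}
Introduce the budgets $x=2\Delta_1-\|\vect{\mu}_1\|^2$ and $y=2\Delta_2-\vect{\mu}_2^\top\bm{A}\vect{\mu}_2$, so that $(x,y)\in\Omega(\Delta_1,\Delta_2)$. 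With $f(t)=t-\log t$, the constraints become $\sum_i(f(b_i)-1)=x$ and $\sum_i(f(a_i)-1)=y$, where $a_i,b_i$ are the eigenvalues of $\bm{A},\bm{B}$. In particular every eigenvalue of $\bm{A}$ lies in $[w_1(y),w_2(y)]$, so $\lambda_{\max}(\bm{A})\le w_2(y)$.

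\textbf{Step 2: covariance term.} Apply von Neumann's trace inequality to obtain $\operatorname{Tr}((\bm{A}-\bm{I})(\bm{B}-\bm{I}))\le\sum_i(a_i-1)(b_i-1)$, with both spectra sorted in the same order. Set $s_i=f(a_i)-1$ and $t_i=f(b_i)-1$.
\begin{itemize}
\item If $a_i,b_i\ge1$, the $i$-th term is $\varphi(s_i)\varphi(t_i)$ with $\varphi=w_2-1$.
\item If $a_i,b_i\le1$, the term is $(1-w_1(s_i))(1-w_1(t_i))\le\varphi(s_i)\varphi(t_i)$. This uses $1-w_1(s)\le w_2(s)-1$, which should follow from $f(1+u)\le f(1-u)$ for $u\in(0,1)$.
\item Mixed-sign terms are nonpositive.
\end{itemize}
To aggregate, I would prove that $\varphi(s)/\sqrt{s}$ is nondecreasing, by implicit differentiation of $w-\log w=1+s$. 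Then $\varphi(s_i)\le\sqrt{s_i/x}\,\varphi(x)$ and similarly for $t_i$, and Cauchy--Schwarz gives $\sum_i\varphi(s_i)\varphi(t_i)\le\varphi(x)\varphi(y)$. Equality forces all budget into a single common eigendirection with $a,b>1$, and forces strict monotonicity to be tight. This reproduces $F(x,y)-x-y$.

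\textbf{Step 3: mean term.} Use $\vect{\mu}_1^\top\bm{A}\vect{\mu}_1\le w_2(y)\|\vect{\mu}_1\|^2$. Apply Cauchy--Schwarz in the $\bm{A}$-inner product:
\begin{equation*}
|\vect{\mu}_1^\top\bm{A}\vect{\mu}_2|\le\sqrt{\vect{\mu}_1^\top\bm{A}\vect{\mu}_1}\,\sqrt{2\Delta_2-y}.
\end{equation*}
Combining these with $\Delta_1+\Delta_2=\tfrac12(x+y+\|\vect{\mu}_1\|^2+\vect{\mu}_2^\top\bm{A}\vect{\mu}_2)$ gives $\KL(\N{N}_1\|\N{N}_2)\le H(x,y;\Delta_1,\Delta_2)$.

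\textbf{Step 4: maximization (main obstacle).} It remains to show that $\max_{\Omega}H=H(2\Delta_1,2\Delta_2)=\tfrac12F(2\Delta_1,2\Delta_2)$, attained only at that corner. I would first try to show that $H$ is nondecreasing in each coordinate.
\begin{itemize}
\item In $x$: $\partial_xF=w_2'(x)(w_2(y)-1)+1$. This should dominate $-\partial_xG$, which blows up like $(2\Delta_1-x)^{-1/2}$ only near the corner where $G\to0$. So a direct derivative argument may fail near the boundary, and I would compare the values of $H$ directly instead.
\item In $y$: the analogous comparison uses $w_2'(y)\ge\ldots$.
\end{itemize}
The fallback is to fix $y$ and maximize over $x$. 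This is a one-dimensional concave-versus-convex comparison: show that $x\mapsto H$ attains its maximum at $x=2\Delta_1$, using convexity of $w_2$ and the bound $w_2'(x)\ge1$ for large $x$. Then I would handle $y$ in the same way.

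\textbf{Equality case.} Once the corner is the unique maximizer, we get $\vect{\mu}_1=\vect{0}$ and $\vect{\mu}_2=\vect{0}$. Step 2's equality conditions then give $\bm{\Sigma}_1$ and $\bm{\Sigma}_2^{-1}$ with a shared eigenvector $\bm{Q}\vect{e}_1$, eigenvalues $w_2(2\Delta_1)$ and $w_2(2\Delta_2)$ there, and $1$ elsewhere. This is exactly the stated form. Conversely, substituting that form into the closed-form KL confirms equality.
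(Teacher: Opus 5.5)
\textbf{Verdict: Steps 1--3 are correct, but Step 4 has a genuine gap --- the strategies you sketch for maximizing $H$ are false.}

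Your Steps 1--3 are correct, and Step 2 takes a different route from the paper. The paper imports the covariance bound (Lemma~\ref{lem_MaxSigmaFun}) from inequality (H.178) of prior work. You prove it directly: von Neumann's trace inequality, the comparison $1-w_1(s)\le w_2(s)-1$, monotonicity of $\varphi(s)/\sqrt{s}$ (true, since it reduces to $w-2\log w-1/w\ge 0$ for $w\ge 1$), and Cauchy--Schwarz. Together these give $\operatorname{Tr}\bigl((\bm{A}-\bm{I})(\bm{B}-\bm{I})\bigr)\le\varphi(x)\varphi(y)$ with the right equality case. Step 3 is the paper's Lemma~\ref{lem_MaxMuFun}, done directly in the $\bm{A}$-inner product.

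Step 4 is the actual content of the lemma (Lemma~\ref{lem_MaxH} in the paper). Using $w_2'=w_2/\varphi$, one gets $2\,\partial_x H=\varphi(y)/\varphi(x)-\sqrt{w_2(y)(2\Delta_2-y)/(2\Delta_1-x)}$.
\begin{itemize}
\item For every fixed $y\in(0,2\Delta_2)$, this tends to $+\infty$ as $x\to 0^+$ and to $-\infty$ as $x\to 2\Delta_1^-$. So $x\mapsto H(x,y)$ has a strictly interior maximizer; at $y=0$ the maximizer is $x=0$.
\item Likewise $\partial_y H\to-\infty$ as $y\to 2\Delta_2^-$ for each fixed $x\in(0,2\Delta_1)$.
\item Hence $H$ is monotone in neither coordinate. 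The one-variable maximum in $x$ sits at $2\Delta_1$ only on the single line $y=2\Delta_2$, so ``fix $y$, show the maximum is at $x=2\Delta_1$, then do the same in $y$'' is false.
\item The singular term is not confined to the corner. It blows up along the whole edge $x=2\Delta_1$ and vanishes exactly at $y=2\Delta_2$.
\item $w_2$ is concave, not convex: $w_2''=-w_2'/\varphi^2<0$.
\end{itemize}
The comparison near the corner is genuinely delicate. Set $u=2\Delta_1-x$, $v=2\Delta_2-y$ and $a=\varphi(2\Delta_2)/\varphi(2\Delta_1)$. The leading part of $2H(x,y)-2H(2\Delta_1,2\Delta_2)$ is $-\bigl(\sqrt{au}-\sqrt{w_2(2\Delta_2)v/a}\bigr)^2$, which vanishes along a ray. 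So the corner beats nearby points only at second order in that direction, and no coordinate-wise or crude value comparison settles it.

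What is missing is a genuinely two-dimensional argument. The paper uses that $H$ is continuous on the compact set $\Omega$ and $C^1$ in its interior, and first rules out interior critical points:
\begin{itemize}
\item The equation $\partial_y H=0$ reads $\varphi(x)+(2\Delta_1-x)=\sqrt{(2\Delta_1-x)/(w_2(y)(2\Delta_2-y))}\,\bigl[\varphi(y)-(2\Delta_2-y)\bigr]$. This is impossible by sign if $\varphi(y)\le 2\Delta_2-y$.
\item Otherwise, combining it with $\partial_x H=0$ gives $\varphi(x)/(\varphi(x)+2\Delta_1-x)=\varphi(y)/(\varphi(y)-(2\Delta_2-y))$. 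The left side lies in $(0,1)$ and the right side exceeds $1$, a contradiction.
\end{itemize}
On the boundary:
\begin{itemize}
\item $H$ increases along $y=2\Delta_2$ and along $x=2\Delta_1$.
\item $H$ decreases in $x$ along $y=0$.
\item $H$ cannot be maximal on $\{0\}\times(0,2\Delta_2)$, because $\partial_x H\to+\infty$ there.
\end{itemize}
This leaves only $(0,0)$ and $(2\Delta_1,2\Delta_2)$. The inequality $w_2(t)-1>\sqrt{2t}$ (Lemma~\ref{lem_W2tGreatSqrt2t}) shows the latter is strictly larger. With that uniqueness in hand, your equality discussion goes through as written.
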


\begin{IEEEproof}
	Please see Appendix~\ref{appendix_KLSupremumForNormal} for details.
\end{IEEEproof}

The proof of Lemma~\ref{lem_KLSupremumForNormal} consists of the following seven steps.

\begin{enumerate}
	\item \textbf{Transforming to an Equivalent Problem.}
	Our problem is equivalent to Problem~\ref{problem_normalgauss}.
	\begin{equation} \tag{$P_{\N{N}}$} \label{problem_normalgauss}
		\begin{aligned}
			\text{max } & \KL\left(\N{N}(\bm{\mu}_1, \bm{\Sigma}_1) \, \| \, \N{N}(\bm{\mu}_2, \bm{\Sigma}_2)\right) \\
			\text{s.t.} & \KL\left(\N{N}(\bm{\mu}_1, \bm{\Sigma}_1) \, \| \, \N{N}(\vect{0}, \bm{I})\right) = \Delta_1 \\
			& \KL\left(\N{N}(\vect{0}, \bm{I}) \, \| \, \N{N}(\bm{\mu}_2, \bm{\Sigma}_2)\right) = \Delta_2 
		\end{aligned}
	\end{equation}
	By leveraging the close form of KL divergence between Gaussians and introducing intermediate variables $0 \leq x \leq 2 \Delta_1$ and $0 \leq y \leq 2 \Delta_2$, we eliminate the explicit $\KL$ expressions in Problem~\ref{problem_normalgauss} and reformulate it into the form presented in Problem~\ref{question_normalgaussTrans}.
	\begin{equation} \tag{$P_{\N{N}}^{'}$} \label{question_normalgaussTrans}
		\begin{aligned}
			\text{max } & \dfrac{1}{2} \Bigl(  -\log{|\bm{\Sigma}_2^{-1}||\bm{\Sigma}_1|} + \operatorname{Tr}(\bm{\Sigma}_2 ^ {-1} \bm{\Sigma}_1) - n \\
			& + (\vect{\mu_2} - \vect{\mu_1})^\top \bm{\Sigma}_2 ^ {-1} (\vect{\mu_2} - \vect{\mu_1}) \Bigr) \\
			\text{s.t. } & -\log|\bm{\Sigma}_1| + \operatorname{Tr}(\bm{\Sigma}_1) = n + x \\
			& -\log|\bm{\Sigma}_2 ^ {-1}| + \operatorname{Tr}(\bm{\Sigma}_2 ^ {-1} ) = n + y \\
			& \vect{\mu}_1^\top \vect{\mu}_1 = 2 \Delta_1 - x \\
			& \vect{\mu}_2^\top \bm{\Sigma}_2 ^ {-1} \vect{\mu}_2 = 2 \Delta_2 - y \\
			& (x,y) \in \Omega(\Delta_1, \Delta_2) \\
		\end{aligned}
	\end{equation}

	\item \textbf{Decomposing Problem \ref{question_normalgaussTrans}.}
	For any fixed $(x_{0}, y_{0}) \in \Omega(\Delta_1, \Delta_2)$, Problem~\ref{question_normalgaussTrans} can be decomposed into two subproblems coupled only through $\bm{\Sigma}_2$: Problem~\ref{problem_mu} (concerning both the means and covariance matrix $\bm{\Sigma_2}$) and Problem~\ref{problem_sigma} (concerning the covariance matrices).
	\begin{equation} \tag{$P_{\bm{\mu}}$} \label{problem_mu}  
		\begin{aligned}
			\text{max } & (\vect{\mu_2} - \vect{\mu_1})^\top \bm{\Sigma}_2 ^ {-1} (\vect{\mu_2} - \vect{\mu_1}) \\
			\text{s.t. } & -\log|\bm{\Sigma}_2 ^ {-1}| + \operatorname{Tr}(\bm{\Sigma}_2 ^ {-1} ) = n + y_{0} \\
			& \vect{\mu}_1^\top \vect{\mu}_1 = 2 \Delta_1 - x_{0} \\ 
			& \vect{\mu}_2^\top \bm{\Sigma}_2 ^ {-1} \vect{\mu}_2 = 2 \Delta_2 - y_{0} 
		\end{aligned}
	\end{equation}
    \begin{equation} \tag{$P_{\bm{\Sigma}}$} \label{problem_sigma}
		\begin{aligned}
			\text{max } & -\log{|\bm{\Sigma}_2^{-1}||\bm{\Sigma}_1|} + \operatorname{Tr}(\bm{\Sigma}_2 ^ {-1} \bm{\Sigma}_1) - n \\
			\text{s.t. } & -\log|\bm{\Sigma}_1| + \operatorname{Tr}(\bm{\Sigma}_1) = n + x_{0} \\
			& -\log|\bm{\Sigma}_2 ^ {-1}| + \operatorname{Tr}(\bm{\Sigma}_2 ^ {-1} ) = n + y_{0} 
		\end{aligned}
	\end{equation}
		
	\item \textbf{Solving Problem~\ref{problem_mu}.}
	We prove that the supremum of the objective function in Problem~\ref{problem_mu} equals $G(x_{0}, y_{0}; \Delta_1, \Delta_2)$ and derive sufficient conditions when the supremum can be attained.
	The solution to Problem~\ref{problem_mu} presented in Appendix~\ref{section_MaxMuFun} proceeds in the following three steps.
	\begin{enumerate} 
		\item We first fix $\vect{\Sigma}_2$, equivalently fixing the $n$ eigenvalues $\{ \lambda_i \}_{i = 1}^{n}$ and the corresponding unit eigenvectors $\{ \vect{e}_i \}_{i = 1}^{n}$ of $\vect{\Sigma}_2^{-1}$. Under this fixation, Problem~\ref{problem_mu} reduces to an optimization problem over $\vect{\mu}_1$ and $\vect{\mu}_2$.
		
		\item Next, we project $\vect{\mu}_1$ and $\vect{\mu}_2$ onto the eigenvector basis $\{ \vect{e}_i \}_{i = 1}^{n}$, thereby transforming the problem into an optimization over the vector components along these eigenvectors. By leveraging the equality constraint and applying the Cauchy--Schwarz inequality to bound the objective function, we establish that the conditional supremum of this component-wise optimization problem depends only on $\lambda_{\max} = \max_{i=1}^n \lambda_i$ and is strictly increasing in $\lambda_{\max}$ .
		
		\item Finally, we vary $\vect{\Sigma}_2$, \textit{i.e.}, adjust the eigenvalues $\{ \lambda_i \}_{i = 1}^{n}$, and optimize over these eigenvalues to obtain the global supremum $G(x_{0}, y_{0}; \Delta_1, \Delta_2)$.
	\end{enumerate}
	
	\item \textbf{Solving Problem~\ref{problem_sigma}.}
	Problem \ref{problem_sigma} has been solved in \cite[Appendix H]{zhang2023PropertiesKullbackleiblerDivergence}, in which the proof are supported by several key lemmas. In this paper, we provide a more concise proof for one of the key lemma.
    The supremum of the objective function in Problem~\ref{problem_sigma} equals $F(x_{0}, y_{0})$. The necessary and sufficient conditions when the supremum can be attained are also clarified. 
	Details are in Appendix~\ref{section_MaxSigmaFun}.

	\item \textbf{Checking Compatibility.}
	Since problem~\ref{problem_mu} and Problem~\ref{problem_sigma} couple through $\bm{\Sigma}_2$, we should check whether the conditions for attaining their respective suprema are simultaneously satisfiable. Using the necessary and sufficient condition from Lemma~\ref{lem_MaxSigmaFun} and sufficient conditions Lemma~\ref{lem_MaxMuFun}, we verify that for each fixed $(x_{0},y_{0})$, these requirements are compatible and thus the supremum of problem~\ref{question_normalgaussTrans} equals $H(x_{0},y_{0}; \Delta_1, \Delta_2)$.
	
	\item \textbf{Optimizing $H(x_{0},y_{0}; \Delta_1, \Delta_2)$.}
	We prove that $H(x_{0},y_{0}; \Delta_1, \Delta_2)$ with $(x_{0}, y_{0}) \in \Omega(\Delta_1, \Delta_2)$ attains its maximum $H^*(\Delta_1, \Delta_2) = \frac{1}{2} F(2\Delta_1, 2\Delta_2) = \frac{1}{2} \left[ w_2(2 \Delta_1) - 1 \right]\left[ w_2(2 \Delta_2) - 1 \right] + \Delta_1 + \Delta_2$ if and only if $(x_{0}, y_{0}) = (2\Delta_1, 2\Delta_2)$ as shown in Lemma~\ref{lem_MaxH}. 
	The proof of Lemma~\ref{lem_MaxH} is provided in Appendix~\ref{appendix_MaxH}, which consists of the following four steps. 
	\begin{enumerate}
		\item We first establish the existence of a point $(x^*, y^*) \in \Omega(\Delta_1, \Delta_2)$ at which $H(x, y; \Delta_1, \Delta_2)$ attains its maximum.
		
		\item Since $H(x, y; \Delta_1, \Delta_2)$ is continuously differentiable on $\operatorname{int}\left( \Omega(\Delta_1, \Delta_2)\right) = (0,  2\Delta_1) \times (0, 2\Delta_2)$, a necessary condition for a local maximum at an interior point $(x, y) \in \operatorname{int}\left( \Omega(\Delta_1, \Delta_2)\right)$ is
		\begin{equation}
			\left\lbrace 
			\begin{aligned}\label{eq:Hprime_zero}
				\frac{\partial H(x, y; \Delta_1, \Delta_2)}{\partial x} = 0 \\
				\frac{\partial H(x, y; \Delta_1, \Delta_2)}{\partial y} = 0
			\end{aligned}
			\right.
		\end{equation}
		By equivalently reformulating Equation \eqref{eq:Hprime_zero}, we derive a simplified equation and prove that it has no solution in $\operatorname{int}\left( \Omega(\Delta_1, \Delta_2)\right)$. Consequently, the maximizer $(x^*, y^*)$ cannot lie in the interior of $\Omega(\Delta_1, \Delta_2)$.
		
		\item We then analyze the monotonicity of $H(x, y; \Delta_1, \Delta_2)$ along each of the four edges of the boundary of $\Omega(\Delta_1, \Delta_2)$ and show that the maximum must be attained at either $(0, 0)$ or $(2\Delta_1, 2\Delta_2)$.
		
		\item Finally, a change of variables demonstrates that $H(2\Delta_1, 2\Delta_2; \Delta_1, \Delta_2) > H(0, 0; \Delta_1, \Delta_2)$, which yields the desired conclusion.
	\end{enumerate}

    \item \textbf{Establishing Necessary and Sufficient Conditions.} After fixing $(x_{0}, y_{0})=(2\Delta_1, 2\Delta_2)$ to maximize $H(x, y; \Delta_1, \Delta_2)$, 
    we show that the conditions required for Problems~\ref{problem_mu} and~\ref{problem_sigma} to simultaneously attain their suprema are both necessary and sufficient for Problem~\ref{question_normalgaussTrans} to attain its supremum. 
	Consequently, we obtain the supremum of Problem~\ref{problem_normalgauss} together with the necessary and sufficient conditions for its attainment. 
	
\end{enumerate}

We note that Lemma~\ref{lem_MaxH} constitutes one of the core contributions of this paper. A key technical trick in its proof is utilizing the following property: when maximizing a continuous function over a compact set, if the function is differentiable in the interior of the domain but cannot attain a critical point (\textit{i.e.}, a point where all partial derivatives are zero), then the supremum must be attained on the boundary. This observation reduces the original optimization problem to a lower-dimensional one that is more tractable.

Notably, the similar technique was also employed in the proof of Lemma~\ref{lem_MaxF}, a strengthened variant of the original Lemma~G.5 in \cite{zhang2023PropertiesKullbackleiblerDivergence}, leading to a more concise argument than the original. See Appendix~\ref{section_MaxSigmaFun} for details.

\subsection{Core Dimension-free Supremum} \label{subsection_KLSupremumForGeneral}
\noindent
For general parameter regimes, the result is summarized in Theorem~\ref{theorem_KLSupremumForGeneral}, which constitutes one of the core contributions of this work.
\begin{theorem} \label{theorem_KLSupremumForGeneral}
	Let $\N{N}_i = \N{N}(\vect{\mu}_i, \bm{\Sigma}_i)$, $i \in \{1, 2, 3\}$ be three $n$-dimensional Gaussian distributions satisfying
	\begin{equation*}
		\left\{
		\begin{aligned}
			\KL\left(\N{N}_1 \, \| \, \N{N}_2 \right) &= \Delta_1 \\
			\KL\left(\N{N}_2 \,\|\, \N{N}_3\right) &= \Delta_2
		\end{aligned}
		\right.
	\end{equation*}
	where $\Delta_1, \Delta_2 > 0$ are fixed constants, $\N{N}_2$ is a fixed Gaussian distribution, and $\N{N}_1$, $\N{N}_3$ are Gaussian distributions with variable parameters, then
	\begin{equation*}
		\KL\left(\N{N}_1 \,\|\, \N{N}_3 \right) \leq \frac{1}{2} \left[ w_2(2 \Delta_1) - 1 \right]\left[ w_2(2 \Delta_2) - 1 \right] + \Delta_1 + \Delta_2
	\end{equation*}
	The equality holds if and only if
	\begin{equation*}
		\left\{
		\begin{aligned}
			\bm{\mu}_1 &= \bm{\mu}_2 = \bm{\mu}_3\\
			\bm{\Sigma}_1 &= \bm{B}_2 \bm{Q} \operatorname{diag}\bigl(w_2(2 \Delta_1), 1, \dots, 1\bigr) \bm{Q}^\top \bm{B}_2^\top \\
			\bm{\Sigma}_3 &= \bm{B}_2 \bm{Q} \operatorname{diag}\bigl(w_2(2 \Delta_2)^{-1}, 1, \dots, 1\bigr) \bm{Q}^\top \bm{B}_2^\top
		\end{aligned}
		\right.
	\end{equation*}
	where $\bm{B}_2$ is an invertible matrix such that $\bm{\Sigma}_2 = \bm{B}_2 \bm{B}_2^{\top}$ and $\bm{Q}$ is an orthogonal matrix.
\end{theorem}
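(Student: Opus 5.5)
We reduce the statement to Lemma~\ref{lem_KLSupremumForNormal} with an affine change of variables. KL divergence is invariant under invertible affine maps applied to both arguments.

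Since $\bm{\Sigma}_2$ is positive definite, fix any invertible $\bm{B}_2$ with $\bm{\Sigma}_2 = \bm{B}_2\bm{B}_2^\top$. Define $T(\vect{x}) = \bm{B}_2^{-1}(\vect{x}-\vect{\mu}_2)$. Under $T$, each Gaussian $\N{N}(\vect{\mu},\bm{\Sigma})$ is pushed forward to $\N{N}(\bm{B}_2^{-1}(\vect{\mu}-\vect{\mu}_2),\, \bm{B}_2^{-1}\bm{\Sigma}\bm{B}_2^{-\top})$. In particular, $\N{N}_2$ is sent to $\N{N}(\vect{0},\bm{I})$.

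The invariance can be checked directly on the closed form \eqref{eq_closedFormKLForGauss}. For any $\N{N}_a,\N{N}_b$, the quantities $\log|\bm{\Sigma}_b^{-1}\bm{\Sigma}_a|$, $\operatorname{Tr}(\bm{\Sigma}_b^{-1}\bm{\Sigma}_a)$ and $(\vect{\mu}_b-\vect{\mu}_a)^\top\bm{\Sigma}_b^{-1}(\vect{\mu}_b-\vect{\mu}_a)$ are unchanged when $\bm{\Sigma}\mapsto \bm{A}\bm{\Sigma}\bm{A}^\top$ and $\vect{\mu}\mapsto \bm{A}(\vect{\mu}-\vect{c})$ for invertible $\bm{A}$. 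This follows from similarity invariance of the determinant and trace, together with $\bm{A}^{-\top}\bm{\Sigma}_b^{-1}\bm{A}^{-1}$ being the transformed inverse. Hence all three divergences $\KL(\N{N}_1\|\N{N}_2)$, $\KL(\N{N}_2\|\N{N}_3)$ and $\KL(\N{N}_1\|\N{N}_3)$ are preserved.

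Write $\tilde{\N{N}}_1 = \N{N}(\tilde{\vect{\mu}}_1,\tilde{\bm{\Sigma}}_1)$ and $\tilde{\N{N}}_3 = \N{N}(\tilde{\vect{\mu}}_3,\tilde{\bm{\Sigma}}_3)$ for the transformed distributions. They satisfy $\KL(\tilde{\N{N}}_1\|\N{N}(\vect{0},\bm{I}))=\Delta_1$ and $\KL(\N{N}(\vect{0},\bm{I})\|\tilde{\N{N}}_3)=\Delta_2$. Applying Lemma~\ref{lem_KLSupremumForNormal}, with its second distribution taken to be $\tilde{\N{N}}_3$, gives
\[
\KL(\N{N}_1\|\N{N}_3)=\KL(\tilde{\N{N}}_1\|\tilde{\N{N}}_3)\le \tfrac12[w_2(2\Delta_1)-1][w_2(2\Delta_2)-1]+\Delta_1+\Delta_2 .
\]

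For the equality case, the map $T$ is a bijection between admissible triples with fixed $\N{N}_2$ and admissible pairs relative to $\N{N}(\vect{0},\bm{I})$. So equality holds if and only if $\tilde{\N{N}}_1,\tilde{\N{N}}_3$ meet the equality conditions of Lemma~\ref{lem_KLSupremumForNormal}, namely
\[
\tilde{\vect{\mu}}_1=\tilde{\vect{\mu}}_3=\vect{0},\quad
\tilde{\bm{\Sigma}}_1=\bm{Q}\operatorname{diag}\bigl(w_2(2\Delta_1),1,\dots,1\bigr)\bm{Q}^\top,\quad
\tilde{\bm{\Sigma}}_3=\bm{Q}\operatorname{diag}\bigl(w_2(2\Delta_2)^{-1},1,\dots,1\bigr)\bm{Q}^\top .
\]
Undoing $T$ via $\vect{\mu}_i=\vect{\mu}_2+\bm{B}_2\tilde{\vect{\mu}}_i$ and $\bm{\Sigma}_i=\bm{B}_2\tilde{\bm{\Sigma}}_i\bm{B}_2^\top$ gives $\vect{\mu}_1=\vect{\mu}_2=\vect{\mu}_3$ and the stated forms of $\bm{\Sigma}_1,\bm{\Sigma}_3$.

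The one point needing care is that the characterization must not depend on which factor $\bm{B}_2$ is chosen. Any other factor has the form $\bm{B}_2' = \bm{B}_2\bm{R}$ with $\bm{R}$ orthogonal. Then $\bm{B}_2'\bm{Q}=\bm{B}_2(\bm{R}\bm{Q})$, and $\bm{R}\bm{Q}$ ranges over all orthogonal matrices as $\bm{Q}$ does. So the family of extremal $(\bm{\Sigma}_1,\bm{\Sigma}_3)$ is the same for every factor, and "some invertible $\bm{B}_2$ and some orthogonal $\bm{Q}$" describes exactly the set produced by the fixed factor. Beyond this bookkeeping, all the substantive difficulty lies in Lemma~\ref{lem_KLSupremumForNormal}, which we take as given.
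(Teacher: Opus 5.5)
Your proposal is correct and follows the paper's proof: normalize $\N{N}_2$ to $\N{N}(\vect{0},\bm{I})$ via $T(\vect{x})=\bm{B}_2^{-1}(\vect{x}-\vect{\mu}_2)$, apply Lemma~\ref{lem_KLSupremumForNormal}, and pull the equality conditions back through $T$. The differences are minor improvements: you check affine invariance directly on the closed form instead of citing Proposition~\ref{proposition_DiffPreservesKLDivergence}; you use the correct pushforward covariance $\bm{B}_2^{-1}\bm{\Sigma}\bm{B}_2^{-\top}$, where the paper's $\bm{B}_2^{-1}\bm{\Sigma}\bm{B}_2^{\top}$ is a typo; and you note that the characterization does not depend on the choice of factor $\bm{B}_2$.
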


\begin{remark}
	Theorem~\ref{theorem_KLSupremumForGeneral} addresses the nontrivial case where $\Delta_1, \Delta_2 > 0$. When $\Delta_1 = 0$, we have $\bm{\mu}_1 = \bm{\mu}_2$ and $\bm{\Sigma}_1 = \bm{\Sigma}_2$, implying $\KL\left(\N{N}_1 \,\|\, \N{N}_3 \right) = \KL\left(\N{N}_2 \,\|\, \N{N}_3 \right) = \Delta_2$. Similarly, when $\Delta_2 = 0$, it follows that $\KL\left(\N{N}_1 \,\|\, \N{N}_3 \right) = \KL\left(\N{N}_1 \,\|\, \N{N}_2 \right) = \Delta_1$. These two cases can be unified as follows: if $\Delta_1\Delta_2 = 0$, then
	\[
	\KL\left(\N{N}_1 \,\|\, \N{N}_3 \right) = \Delta_1 + \Delta_2
	\]
	Thus, the boundary cases $\Delta_1\Delta_2 = 0$ are consistent with the formula in Theorem~\ref{theorem_KLSupremumForGeneral} and can be regarded as its natural extension.
\end{remark}

\begin{IEEEproof}
	Please see Appendix~\ref{section_KLSupremumForGeneral} for details.
\end{IEEEproof}

The proof of Theorem~\ref{theorem_KLSupremumForGeneral} mainly consists of the following three steps.
\begin{enumerate}
	\item \textbf{Applying an Invertible Linear Transformation.} \label{item_1}
	To solve Problem~\ref{problem_generalgauss}, we first apply the same invertible linear transformation to $\N{N}_1$, $\N{N}_2$, and $\N{N}_3$ simultaneous such that $\N{N}_2$ is transformed to the standard Gaussian $\N{N}(\vect{0}, \bm{I})$. 
	Then we can reformulate Problem~\ref{problem_generalgauss} into the form presented in Problem~\ref{problem_P'}.
	\begin{equation} \tag{$P^{'}$} \label{problem_P'}
		\begin{aligned}
			\text{max } & \KL \left( \N{N}_1^{'} \left( \bm{\mu}_1^{'}, \bm{\Sigma}_1^{'} \right)  \,\|\, \N{N}_3^{'} \left( \bm{\mu}_3^{'}, \bm{\Sigma}_3^{'} \right) \right) \\
			\text{s.t.} & \KL\left(\N{N}_1^{'} \left( \bm{\mu}_1^{'}, \bm{\Sigma}_1^{'} \right) \, \| \, \N{N}(\vect{0}, \bm{I})\right) = \Delta_1 \\
			& \KL\left(\N{N}(\vect{0}, \bm{I}) \, \| \, \N{N}_3^{'} \left( \bm{\mu}_3^{'}, \bm{\Sigma}_3^{'} \right) \right) = \Delta_2 
		\end{aligned}
	\end{equation}
	where 
	\begin{equation*} 
		\left\{\begin{aligned}
			\bm{\mu}_1^{'} &= \bm{B}_2^{-1} (\vect{\mu}_1 - \vect{\mu}_2) \\ 
			\bm{\Sigma}_1^{'} &= \bm{B}_2^{-1} \bm{\Sigma}_1 \bm{B}_2^\top \\ 
			\bm{\mu}_3^{'} &= \bm{B}_2^{-1} (\vect{\mu}_3 - \vect{\mu}_2) \\ 
			\bm{\Sigma}_3^{'} &= \bm{B}_2^{-1} \bm{\Sigma}_3 \bm{B}_2^\top \\ 
		\end{aligned} \right.
	\end{equation*}
	
	\item \textbf{Applying Lemma~\ref{lem_KLSupremumForNormal}.} \label{item_2}
	By applying Lemma~\ref{lem_KLSupremumForNormal}, we directly obtain the necessary and sufficient conditions that $\bm{\mu}_1^{'}, \bm{\Sigma}_1^{'}, \bm{\mu}_3^{'}, \bm{\Sigma}_3^{'}$ must satisfy for Problem~\ref{problem_P'} to attain its optimal solution.
	
	\item \textbf{Inverting the Linear Transformation.} \label{item_4}
	Solving the above system of equations yields the necessary and sufficient conditions that $\bm{\mu}_1, \bm{\Sigma}_1, \bm{\mu}_3, \bm{\Sigma}_3$ must satisfy.
	This step amounts to applying the inverse linear transformation simultaneously to $\N{N}_1^{'}$, the standard Gaussian $\N{N}(\vect{0}, \bm{I})$, and $\N{N}_3^{'}$, thereby mapping $\N{N}(\vect{0}, \bm{I})$ back to $\N{N}_2$. This transformation transports the optimal parameters and equality conditions from the normalized setting back to the original coordinate system. Details are provided in Appendix~\ref{section_KLSupremumForGeneral}.
\end{enumerate}

For small divergences regimes, the supremum in Theorem~\ref{theorem_KLSupremumForGeneral} can be rewritten as follows.

\begin{theorem} \label{theorem_KLUpperBoundForEpsilon}
	Let $\N{N}_i = \N{N}(\vect{\mu}_i, \bm{\Sigma}_i)$, $i \in \{1, 2, 3\}$, be three $n$-dimensional Gaussian distributions satisfying
	\begin{equation*}
		\left\{
		\begin{aligned}
			\KL\left(\N{N}_1 \, \| \, \N{N}_2 \right) &= \epsilon_1 \\
			\KL\left(\N{N}_2 \, \| \, \N{N}_3 \right) &= \epsilon_2
		\end{aligned}
		\right.
	\end{equation*}
	where $\epsilon_1, \epsilon_2 > 0$ are fixed small constants, $\N{N}_2$ is a fixed Gaussian distribution, and $\N{N}_1$, $\N{N}_3$ are Gaussian distributions with variable parameters. Then it holds that
	\begin{equation*}
		\KL\left(\N{N}_1 \, \| \, \N{N}_3 \right) \leq \epsilon_1 + \epsilon_2  + 2 \sqrt{\epsilon_1 \epsilon_2} + o(\epsilon_1) + o(\epsilon_2)
	\end{equation*}
\end{theorem}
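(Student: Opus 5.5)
The plan is to apply Theorem~\ref{theorem_KLSupremumForGeneral} with $\Delta_1=\epsilon_1$ and $\Delta_2=\epsilon_2$. This reduces the claim to an asymptotic expansion of the supremum
\[
\frac{1}{2}\left[ w_2(2\epsilon_1)-1\right]\left[ w_2(2\epsilon_2)-1\right]+\epsilon_1+\epsilon_2
\]
as $\epsilon_1,\epsilon_2\to 0^+$. Since this quantity is an upper bound on $\KL(\N{N}_1\|\N{N}_3)$ for all admissible triples, it suffices to show that it equals $\epsilon_1+\epsilon_2+2\sqrt{\epsilon_1\epsilon_2}+o(\epsilon_1)+o(\epsilon_2)$.

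The key step is a local expansion of $w_2(t)$ near $t=0$. Write $w_2(t)=1+u$ with $u\ge 0$. The defining equation $f(w_2(t))=1+t$ becomes
\[
u-\log(1+u)=t .
\]
The left-hand side is $\tfrac{u^2}{2}-\tfrac{u^3}{3}+O(u^4)$. Since the function $u\mapsto u-\log(1+u)$ is strictly increasing on $[0,\infty)$ and vanishes at $0$, we have $u\to0$ as $t\to 0$. Inverting the series then gives $u=\sqrt{2t}+\tfrac{2}{3}t+O(t^{3/2})$. In particular,
\[
w_2(2\epsilon)-1=2\sqrt{\epsilon}+O(\epsilon).
\]
To keep this rigorous I would either invoke the analytic inverse function theorem applied to $\sqrt{2(u-\log(1+u))}$, which is analytic with nonzero derivative at $u=0$, or use the known series of $W_{-1}$ near the branch point $-1/e$. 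Only the leading term $2\sqrt{\epsilon}$ and an $O(\epsilon)$ remainder bound are actually needed.

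Substituting into the product gives
\[
\tfrac12\left(2\sqrt{\epsilon_1}+O(\epsilon_1)\right)\left(2\sqrt{\epsilon_2}+O(\epsilon_2)\right)=2\sqrt{\epsilon_1\epsilon_2}+O(\epsilon_1\sqrt{\epsilon_2})+O(\sqrt{\epsilon_1}\,\epsilon_2)+O(\epsilon_1\epsilon_2).
\]
The cross terms are absorbed as follows:
\begin{itemize}
\item $\epsilon_1\sqrt{\epsilon_2}=o(\epsilon_1)$ as $\epsilon_2\to0$;
\item $\sqrt{\epsilon_1}\,\epsilon_2=o(\epsilon_2)$ as $\epsilon_1\to0$;
\item $\epsilon_1\epsilon_2=o(\epsilon_1)$.
\end{itemize}
Adding $\epsilon_1+\epsilon_2$ yields the stated bound.

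The main obstacle is the rigorous inversion near the branch point, where $f'(1)=0$ and the expansion is in $\sqrt{t}$ rather than $t$. I would handle it with the substitution $s=\sqrt{2(u-\log(1+u))}$, which is smooth and invertible at $u=0$ with $ds/du=1$ there.

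A secondary point is that the $o(\cdot)$ notation involves two parameters jointly. I would state explicitly that the remainder $R$ satisfies $|R|\le C(\epsilon_1\sqrt{\epsilon_2}+\sqrt{\epsilon_1}\,\epsilon_2+\epsilon_1\epsilon_2)$ for $\epsilon_1,\epsilon_2$ below a fixed threshold, which is exactly the form $o(\epsilon_1)+o(\epsilon_2)$.
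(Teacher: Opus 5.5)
Your proposal is correct and follows the same route as the paper: apply Theorem~\ref{theorem_KLSupremumForGeneral} with $\Delta_i=\epsilon_i$, then substitute the branch-point expansion $w_2(t)=1+\sqrt{2t}+O(t)$ into $\frac{1}{2}F(2\epsilon_1,2\epsilon_2)$. The only difference is that you derive this expansion yourself, via the substitution $s=\sqrt{2(u-\log(1+u))}$, and make the joint remainder bound explicit, whereas the paper simply cites the expansion from prior work.
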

\begin{IEEEproof}
	Please see Appendix~\ref{section_KLUpperBoundForEpsilon} for details.
\end{IEEEproof}

\subsection{Experiments}
\noindent
\subsubsection{The Properties of Supremum}
We begin with a numerical investigation of the supremum of $\KL\left(\N{N}_1 \,\|\, \N{N}_3 \right)$, which is expressed by the function 
\[
\frac{1}{2}\left[ w_2(2\Delta_1) - 1 \right] \left[ w_2(2\Delta_2) - 1 \right] + \Delta_1 + \Delta_2
\]
in Theorem~\ref{theorem_KLSupremumForGeneral}. Figure~\ref{fig_HStar} displays the heatmap and surface of this supremum. Table~\ref{table_Supremum} presents the numerical results of the supremum for various values of $\Delta_1$ and $\Delta_2$.
When $\Delta_1=\Delta_2=0.001$, the supremum is approximately $0.0041$, which is close to $4\Delta_1$.

As shown in Figure~\ref{fig_HStar}, the supremum increases monotonically with respect to both $\Delta_1$ and $\Delta_2$ and exhibits symmetry under the exchange of $\Delta_1$ and $\Delta_2$. 
Meanwhile, Table~\ref{table_Supremum} shows that when $\Delta_1 = \Delta_2$ and both are small, for instance less than $0.01$, the supremum is slightly greater than four times their common value, which is consistent with Theorem~\ref{theorem_KLSupremumForGeneral}.

\begin{figure*}[ht]
	\vskip 0.2in
	\begin{center}
		\centering
		\includegraphics[width=0.99 \linewidth]{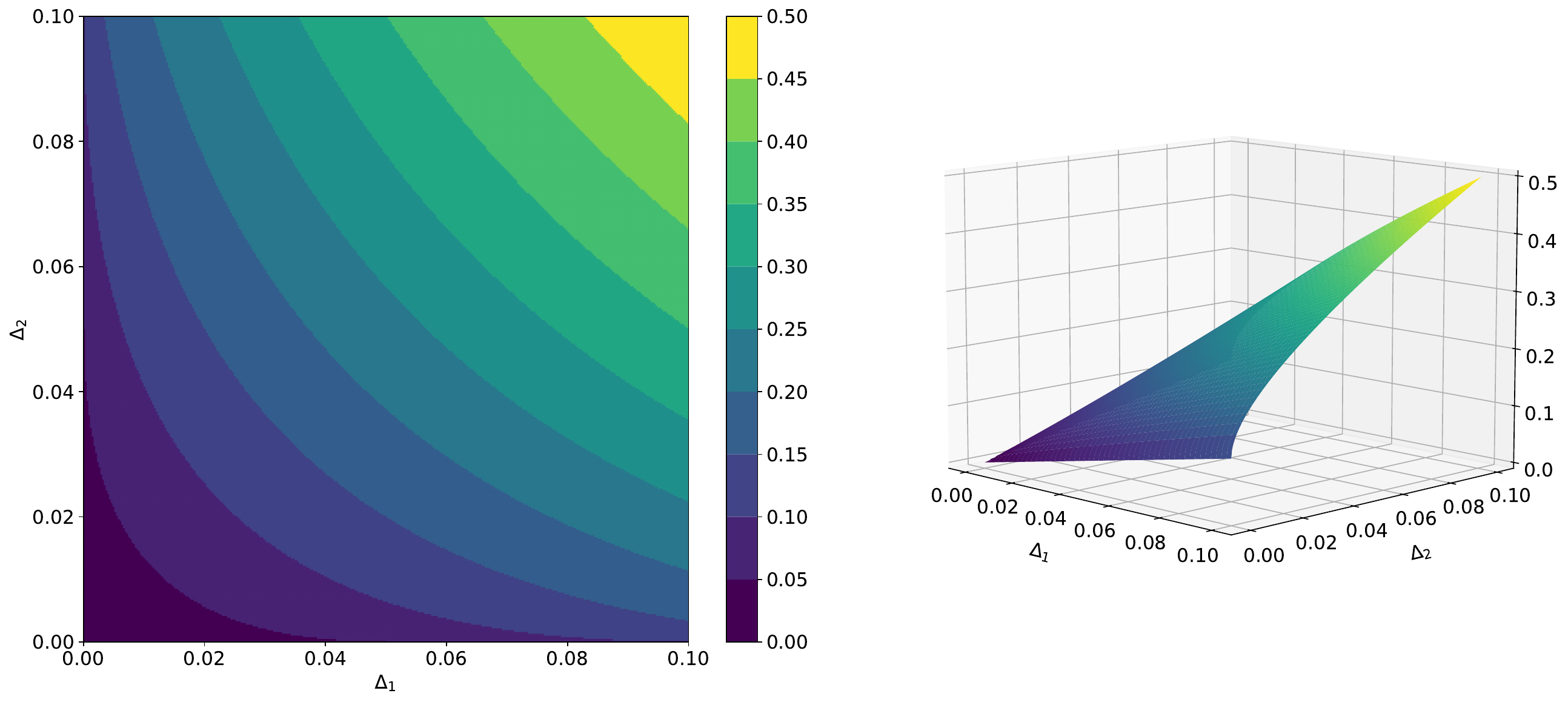}
		\caption{Heatmap and surface of the supremum of $\KL\left(\N{N}_1 \,\|\, \N{N}_3 \right)$, which is equal to $\frac{1}{2}\left[ w_2(2\Delta_1) - 1 \right] \left[ w_2(2\Delta_2) -  1 \right] + \Delta_1 + \Delta_2$.}
		\label{fig_HStar}
	\end{center}
\end{figure*}

\begin{table}[!t] 
	\begin{center}
		\caption{Supremum values for various values of $\Delta_1$ and $\Delta_2$.}
		\label{table_Supremum}
		\begin{tabular}{| c | c | c | c | c |}
			\hline
			\diagbox[width=2.5em, height=2.0em, innerleftsep=0.4em, innerrightsep=0pt]{$\Delta_1$}{$\Delta_2$} & 0.001 & 0.01 & 0.1 & 1  \\
			\hline
			0.001 & 0.0041 & 0.0179 & 0.1259 & 1.1142 \\
			\hline
			0.01 & 0.0179 & 0.0428 & 0.1925 & 1.3843 \\
			\hline
			0.1 & 0.1259 & 0.1925 & 0.4982 & 2.4535 \\
			\hline
			1 & 1.1142 & 1.3843 & 2.4535 & 8.1434 \\
			\hline
		\end{tabular}
	\end{center}
\end{table}

\subsubsection{The Condition for Supremum}
\textbf{One-dimensional Case.}
We conduct a numerical experiment to validate the condition for achieving the supremum in Theorem~\ref{theorem_KLSupremumForGeneral} in the one-dimensional case. Without loss of generality, we directly analyze $\KL\left(\N{N}_1 \, \| \, \N{N}_2  \right)$ for different $\N{N}_1$ and $\N{N}_2$ under the constraints
\begin{equation*}
	\left\{
	\begin{aligned}
		\KL\left(\N{N}({\mu}_1, {\sigma}_1^{2}) \, \| \, \mathcal{N}(0, 1) \right) &= \Delta_1 \\
		\KL\left( \mathcal{N}(0, 1) \,\|\, \N{N}({\mu}_2, {\sigma}_2^{2}) \right) &= \Delta_2
	\end{aligned}
	\right.
\end{equation*}
with $\Delta_1 = \Delta_2 = 0.1$.
These constraints can be equivalently rewritten as
\begin{equation*}
	\left\{
	\begin{aligned}
		- \log {\sigma}_1^{2} +  {\sigma}_1^{2} - 1 +  {\mu}_1^{2} &= 2 \Delta_1 \\
		- \log \dfrac{1}{{\sigma}_2^{2}} +  \dfrac{1}{{\sigma}_2^{2}} - 1 +  \dfrac{{\mu}_2^{2}}{{\sigma}_2^{2}}  &= 2 \Delta_2
	\end{aligned}
	\right.
\end{equation*}
Using the definitions of $w_1(x)$ and $w_2(x)$, one can verify that the parametric equation $k x - \log x = 1 + t$ for $x$ admits two solutions given by $x = \dfrac{w_1(t - \log k)}{k}$ or $x = \dfrac{w_2(t - \log k)}{k}$, where $k > 0$ and $t \geq 0$ are parameters.
Consequently, ${\sigma}_1^{2}$ and ${\sigma}_2^{2}$ are each determined by ${\mu}_1$ and ${\mu}_2$ up to two possible branches.
On the other hand, Theorem~\ref{theorem_KLSupremumForGeneral} states that $\KL\left(\N{N}_1 \, \| \, \N{N}_2  \right)$ attains its supremum when ${\sigma}_1^{2} = w_2(2 \Delta_1)$ and ${\sigma}_2^{2} = w_2(2 \Delta_2)^{-1}$.
Thus we focus exclusively on the branch associated with $w_2(x)$. This yields
\begin{equation} \label{eq_sigmaSolution}
	\left\{
	\begin{aligned}
		{\sigma}_1^{2} &= w_2\left( 2 \Delta_1 - {\mu}_1^{2} \right) \\
		\dfrac{1}{{\sigma}_2^{2}} &= \dfrac{ w_2\left[ 2 \Delta_2 - \log \left( {1 + {\mu}_2^{2}} \right) \right]  }{1 + {\mu}_2^{2}}
	\end{aligned}
	\right.
\end{equation}
where $\mu_1 \in \left[ -\sqrt{2\Delta_1}, \sqrt{2\Delta_1} \right]$ and ${\mu}_2 \in \left[ -\sqrt{e^{2\Delta_2} - 1}, \sqrt{e^{2\Delta_2} - 1} \right]$.
Under these conditions, for $i \in \{1, 2\}$, the distribution $\N{N}_i = \N{N}({\mu}_i, {\sigma}_i^{2})$ is uniquely determined by ${\mu}_i$. Figure~\ref{fig_N12_mu} displays the probability density functions of $\N{N}_1$ and $\N{N}_2$ for different values of their means under the imposed constraints.
\begin{figure}[ht]
	\vskip 0.2in
	\begin{center}
		\centering
		\includegraphics[width=0.99 \linewidth]{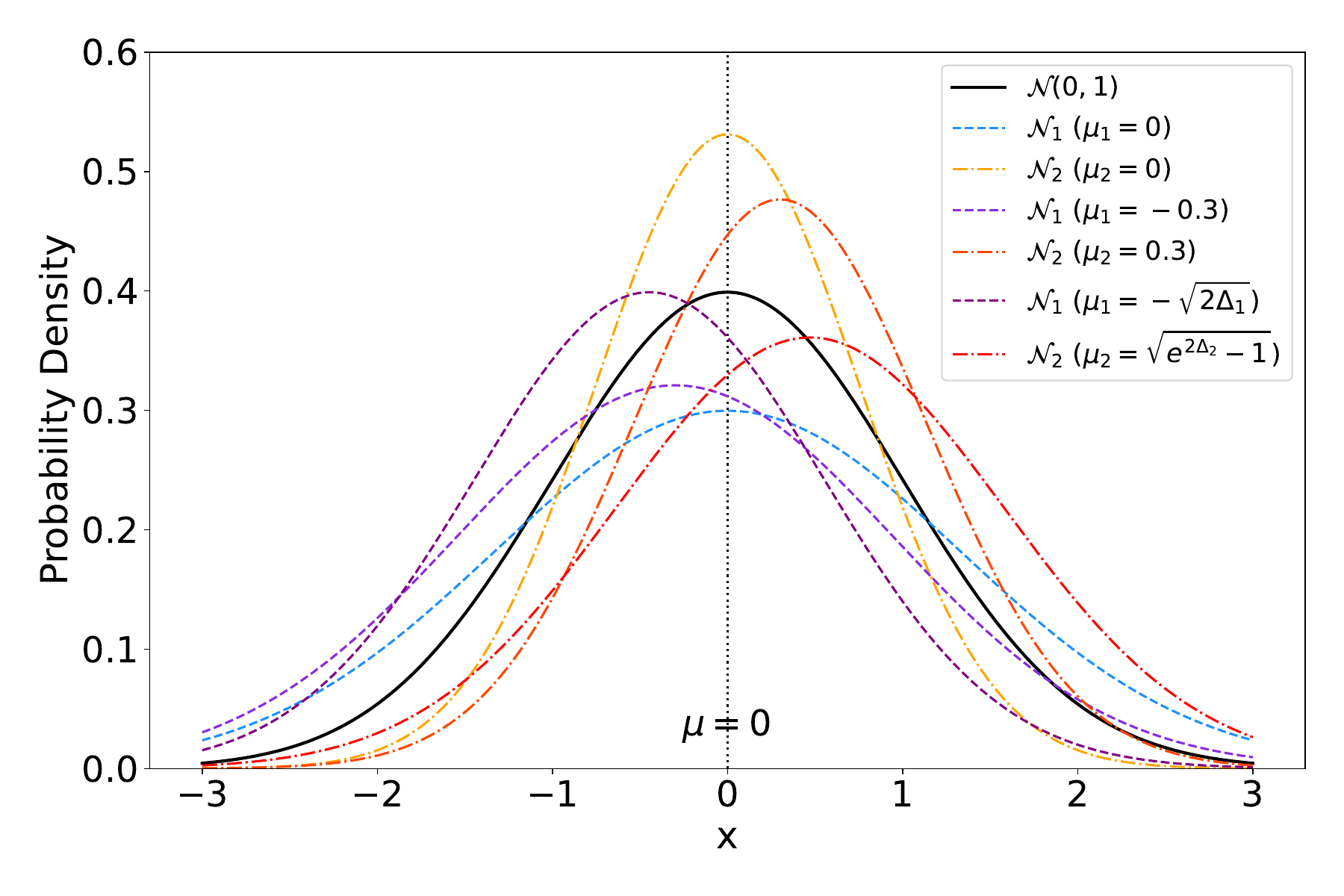}
		\caption{Probability density functions of $\N{N}_1({\mu}_1, {\sigma}_1^{2})$ and $\N{N}_2({\mu}_2, {\sigma}_2^{2})$ for varying means $\mu_1$ and $\mu_2$, respectively, where $\N{N}_1$ satisfies ${\sigma}_1^{2} = w_2\left( 2 \Delta_1 - {\mu}_1^{2} \right)$ and $\N{N}_2$ satisfies $\dfrac{1}{{\sigma}_2^{2}} = \dfrac{ w_2\left[ 2 \Delta_2 - \log \left( {1 + {\mu}_2^{2}} \right) \right]  }{1 + {\mu}_2^{2}}$, with $\Delta_1 = \Delta_2 = 0.1$.}
		\label{fig_N12_mu}
	\end{center}
\end{figure}

Accordingly, $\KL\left(\N{N}_1 \, \| \, \N{N}_2  \right)$ depends only on ${\mu}_1$ and ${\mu}_2$.
Substituting Equation \eqref{eq_sigmaSolution} into
\begin{equation*}
	\begin{aligned}
		\KL(\mathcal{N}_1 \| \mathcal{N}_2) = \dfrac{1}{2} \left( - \log \dfrac{{\sigma}_1^{2}}{{\sigma}_2^{2}} +  \dfrac{{\sigma}_1^{2}}{{\sigma}_2^{2}} - 1 +  \dfrac{ \left({\mu}_2 - {\mu}_1 \right) ^{2}}{{\sigma}_2^{2}} \right)
	\end{aligned}
\end{equation*}
yields an explicit expression for $\KL(\mathcal{N}_1 \| \mathcal{N}_2)$ as a function of ${\mu}_1$ and ${\mu}_2$.
The resulting heatmap is shown in Figure~\ref{fig_KL12_mu}.
As illustrated in Figure~\ref{fig_KL12_mu}, $\KL(\mathcal{N}_1 \| \mathcal{N}_2)$ attains its maximum at $({\mu}_1, {\mu}_2) = (0, 0)$, that is, when both means coincide with the mean of $\mathcal{N}(\vect{0}, \bm{I})$ and thus 
${\sigma}_1^{2} = w_2(2 \Delta_1)$,  ${\sigma}_2^{2} = w_2(2 \Delta_2)^{-1}$. This observation confirms all the conditions stated in Theorem~\ref{theorem_KLSupremumForGeneral}.

\begin{figure}[ht]
	\vskip 0.2in
	\begin{center}
		\centering
		\includegraphics[width=0.99 \linewidth]{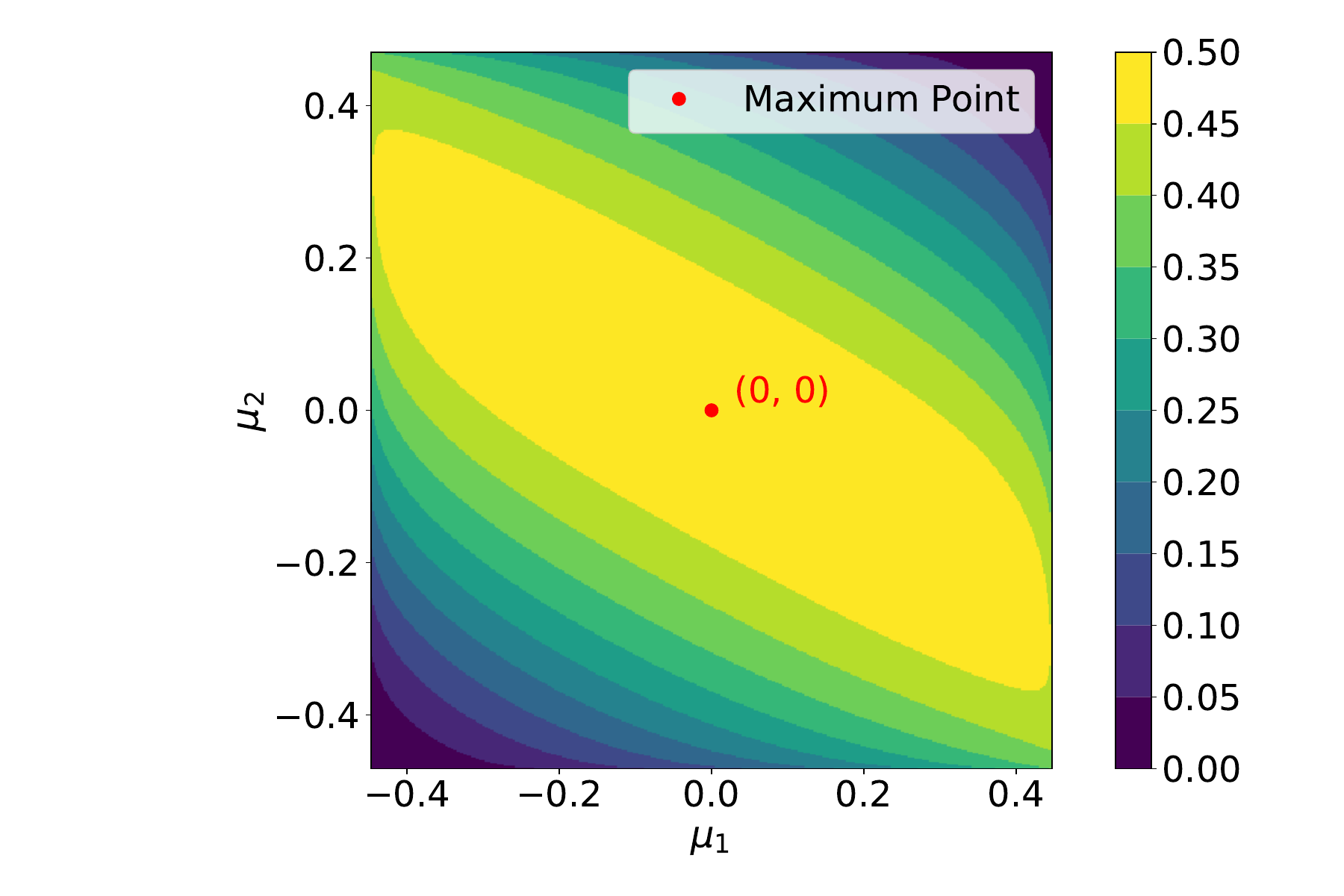}
		\caption{Heatmap of $\KL(\mathcal{N}_1 \| \mathcal{N}_2)$ with respect to $({\mu}_1, {\mu}_2)$, where $\N{N}_1$ satisfies ${\sigma}_1^{2} = w_2\left( 2 \Delta_1 - {\mu}_1^{2} \right)$ and $\N{N}_2$ satisfies $\dfrac{1}{{\sigma}_2^{2}} = \dfrac{ w_2\left[ 2 \Delta_2 - \log \left( {1 + {\mu}_2^{2}} \right) \right]  }{1 + {\mu}_2^{2}}$, with $\Delta_1 = \Delta_2 = 0.1$.}
		\label{fig_KL12_mu}
	\end{center}
\end{figure}

\textbf{Two-dimensional Case.}
We present a concrete example to illustrate the condition for achieving the supremum in Theorem~\ref{theorem_KLSupremumForGeneral} in the two-dimensional case. 

Specifically, we examine the relationship between $\N{N}_1$ and $\N{N}_2$ when $\KL\left( \N{N}_1 \, \| \, \N{N}_2 \right)$ reaches its supremum under the constraints
\begin{equation*}
	\left\{
	\begin{aligned}
		\KL\left(\N{N}(\vect{\mu}_1, \bm{\Sigma}_1) \, \| \, \mathcal{N}(\vect{0}, \bm{I}) \right) &= \Delta_1 \\
		\KL\left( \mathcal{N}(\vect{0}, \bm{I}) \,\|\, \N{N}(\vect{\mu}_2, \bm{\Sigma}_2) \right) &= \Delta_2
	\end{aligned}
	\right.
\end{equation*}
where $\N{N}_i = \N{N}(\vect{\mu}_i, \bm{\Sigma}_i)$ for $i \in \{1, 2\}$ denotes two two-dimensional Gaussian distributions and $\Delta_1 = \Delta_2 = 0.1$.

Figure~\ref{fig_TwoDGaussian} displays one possible realization of the heatmaps and surfaces of the probability density functions of $\N{N}_1$ and $\N{N}_2$ under the condition $\bm{Q} = \bm{I}$. 
By Theorem~\ref{theorem_KLSupremumForGeneral}, we have 
\begin{equation*}
	\bm{\Sigma}_1 = \left[ 
	\begin{aligned}
		w_2(0.2)~ &  \\
		& 1 
	\end{aligned} 
	\right]
\end{equation*}
and 
\begin{equation*}
	\bm{\Sigma}_2 = \left[ 
	\begin{aligned}
		w_2(0.2)^{-1}~ &  \\
		& 1 
	\end{aligned} 
	\right]
\end{equation*} Since $w_2(0.2) > 1$, in Figure~\ref{fig_TwoDGaussian} the distribution $\N{N}_1$ is stretched along the $x$ axis while $\N{N}_2$ is compressed along the $x$ axis compared with $\N{N}(\vect{0}, \bm{I})$, resulting in elliptical contour lines whose major axes are mutually orthogonal. 

\begin{figure*}[ht]
	\vskip 0.2in
	\begin{center}
		\centering
		\includegraphics[width=0.99 \linewidth]{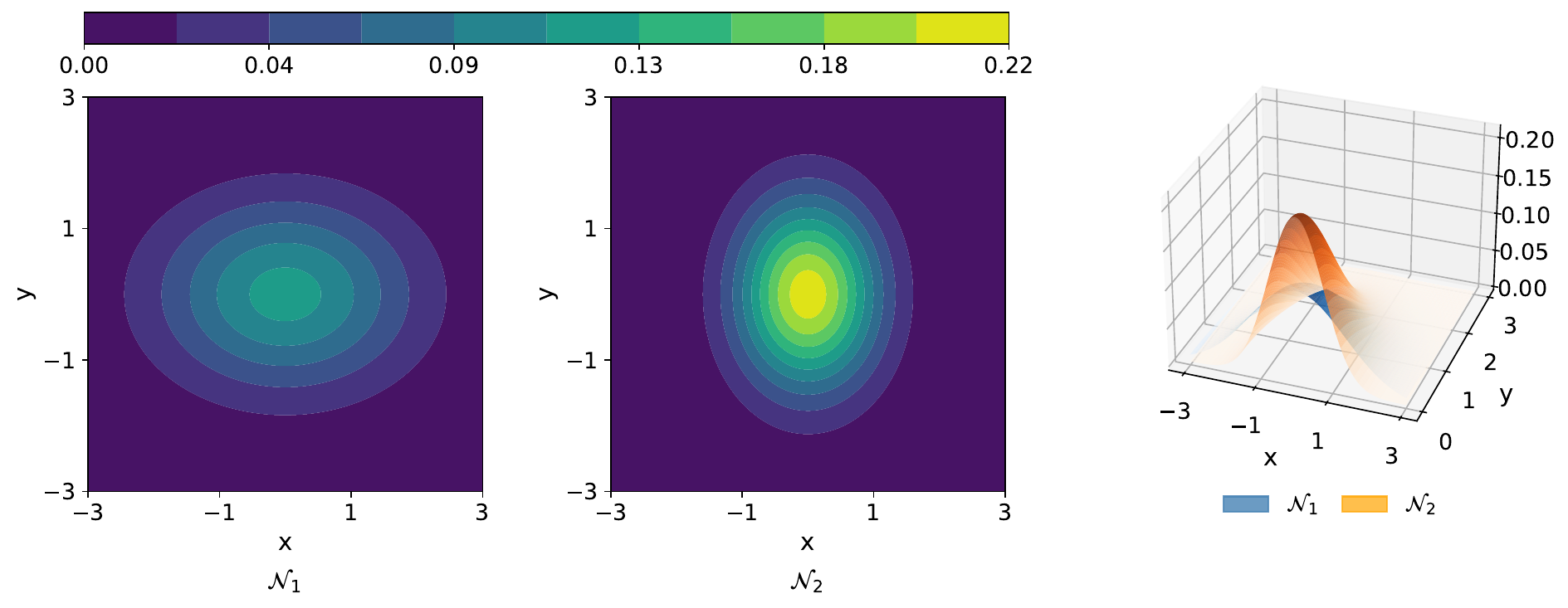}
		\caption{Heatmaps and surfaces of the two-dimensional Gaussian probability density functions of $\N{N}_1$ and $\N{N}_2$ when $\KL\left( \N{N}_1 \, \| \, \N{N}_2 \right)$ attains its supremum and $\bm{Q} = \bm{I}$. $\N{N}_1$ and $\N{N}_2$ satisfy the constraints $\KL\left(\N{N}(\vect{\mu}_1, \bm{\Sigma}_1) \, \| \, \mathcal{N}(\vect{0}, \bm{I}) \right) = \Delta_1 $ and $\KL\left( \mathcal{N}(\vect{0}, \bm{I}) \,\|\, \N{N}(\vect{\mu}_2, \bm{\Sigma}_2) \right) = \Delta_2$, respectively, with $\Delta_1 = \Delta_2 = 0.1$.}
		\label{fig_TwoDGaussian}
	\end{center}
\end{figure*}

\subsubsection{The Experiments on Lemma~\ref{lem_MaxH}}
Finally, we conduct numerical experiments to substantiate our core Lemma~\ref{lem_MaxH}. Details are shown in Appendix~\ref{appendix_HExperiment}.
  
\section{Discussions and Applications} \label{section_discussion}

\subsection{Comparison with Existing Results}
Theorem~\ref{theorem_KLSupremumForGeneral} establishes the supremum of $\KL(\mathcal{N}_1 \| \mathcal{N}_3)$ under fixed values of $\KL(\mathcal{N}_1 \| \mathcal{N}_2) = \Delta_1$ and $\KL(\mathcal{N}_2 \| \mathcal{N}_3) = \Delta_2$. In contrast, the original Theorem~4 of \cite{zhang2023PropertiesKullbackleiblerDivergence} provides a loose upper bound under the constraints $\KL(\mathcal{N}_1 \| \mathcal{N}_2) \leq \epsilon_1$ and $\KL(\mathcal{N}_2 \| \mathcal{N}_3) \leq \epsilon_2$. In principle, the constraints on $\KL(\mathcal{N}_1 \| \mathcal{N}_2)$ and $\KL(\mathcal{N}_2 \| \mathcal{N}_3)$ are the same in the problem.

Crucially, our bound is \emph{tight} and \emph{attainable}, the supremum is achieved by explicit Gaussian parameters (see Theorem~\ref{theorem_KLSupremumForGeneral})

For small divergence constraints, the asymptotic bound $\epsilon_1 + \epsilon_2 + 2\sqrt{\epsilon_1 \epsilon_2} + o(\epsilon_1) + o(\epsilon_2)$(Theorem~\ref{theorem_KLSupremumForGeneral}) is strictly tighter than the bound $3\epsilon_1 + 3\epsilon_2 + 2\sqrt{\epsilon_1 \epsilon_2} + o(\epsilon_1) + o(\epsilon_2)$ in the original Theorem~5 in \cite{zhang2023PropertiesKullbackleiblerDivergence}.
 
The main results of this work is achieved through three critical contributions.
\begin{enumerate}
	\item For Problem~\ref{problem_mu}, we provide a refined analysis that fully characterizes the supremum together with its attainment conditions.
	\item We show the conditions for achieve the optimum of Problem ~\ref{problem_mu} and ~\ref{problem_sigma} are the same. Therefore, the supremum of the whole relaxed triangle inequality can be attained.
	\item For the objective function $H(x, y; \Delta_1, \Delta_2)$, we develop a concise and tractable approach to its global optimization by leveraging the first order necessary conditions for an interior maximum.
\end{enumerate}

Moreover, we acknowledge that the proof of Theorem~\ref{theorem_KLSupremumForGeneral} closely follows the overall structure of the proof of the original  Theorem~4 in \cite{zhang2023PropertiesKullbackleiblerDivergence}. The main novelty lies in the proof of Lemma~\ref{lem_KLSupremumForNormal}, which constitutes the most technical part of our work. Nevertheless, it builds directly upon several key results from \cite{zhang2023PropertiesKullbackleiblerDivergence}, which enable us to substantially streamline the overall argument.
\subsection{Applications}
In \cite{zhang2023PropertiesKullbackleiblerDivergence}, Zhang \textit{et al.} have discussed several applications of relaxed triangle inequality of KL divergence between Gaussian distributions. The theorem presented in this paper improves the bound in  \cite{zhang2023PropertiesKullbackleiblerDivergence}, and hence further strengthen the theoretical foundation in applications. 
In the following, we discuss two applications in out-of-distribution detection and reinforcement learning, respectively.
\subsubsection{Application in Out-of-distribution Detection with Flow-based Models}
\noindent 
Flow-based generative models define a bijective and differentiable mapping $z = f(x)$ from the data space $\mathcal{X}$ to a latent space $\mathcal{Z}$. 
The training objective of flow-based model is usually maximizing likelihood estimation. 
However, a well-documented yet counterintuitive phenomenon arises after training. The model often assigns higher log-likelihood values to out-of-distribution data (OOD data, also referred to as anomalies) \cite{nalisnick2018deep, shafaei2018digitnotcat, choi2018generative, vskvara2018generative,nalisnick2019detecting,whyflowfailood}, yet fails to generate realistic OOD samples from the corresponding regions in the latent space.
For example, Glow \cite{kingma2018glow} trained on CIFAR-10 dataset assigns higher likelihoods for SVHN dataset, but cannot generate new SVHN-like data.

Building on the approximate symmetry and relaxed triangle inequality proved in \cite{zhang2023PropertiesKullbackleiblerDivergence}, Zhang \textit{et al.} provided a rigorous explanation for this long-standing counterintuitive behavior~\cite{zhang2024KullbackLeiblerDivergenceBasedOutofDistribution}. The theoretical analysis is summarized as follows.

Let $p_Z^r$ be the most commonly used Gaussian prior distribution for flow-based model $f$.
Let $p_X$, $q_X$ be the distributions of ID and OOD data, respectively, $p_Z$, $q_Z$ be the distributions of representations of ID and OOD data, respectively. $p_X^r$ be the model induced distribution such that $Z_r\sim p_Z^r$ and $X_r=f^{-1}(Z_r)\sim p^r_X$.
For flow-based models, maximizing likelihood estimation is equal to minimizing  forward KL divergence $\KL(p_X||p^r_X)$ \cite{papamakarios2019flow_model_survey, goodfellow2016deep}.

From empirical observations and normality tests, both $p_Z$ and $q_Z$ are approximately Gaussian distributions for many real-world OOD problems. 
We have the following two conditions.
\begin{enumerate}
	\item Since flow-based model preserves KL divergence, $\mathrm{KL}(p_X \| p_X^r)=\mathrm{KL}(p_Z \| p_Z^r)$ is small due to maximum likelihood training.
	\item $\mathrm{KL}(p_Z \| q_Z) = \mathrm{KL}(p_X \| q_X)$ is supposed to be arbitrarily large, since ID and OOD data originate from distinct distributions.
\end{enumerate}
Now, suppose on the contrary that $\KL(p_Z^r \| q_Z)$ were also small. Then, by applying relaxed triangle inequality with the substitution $(\N{N}_1, \N{N}_2, \N{N}_3) = (p_Z, p_Z^r, q_Z)$, we would conclude that $\mathrm{KL}(p_Z \| q_Z)$ must be small, contradicting the fact that $\mathrm{KL}(p_Z \| q_Z) = \mathrm{KL}(p_X \| q_X)$ is large. 
Furthermore, according to the approximate symmetry theorem presented in \cite{zhang2023PropertiesKullbackleiblerDivergence}, we can know both $\KL(p_Z^r \| q_Z)$ and $\KL(q_Z || p_Z^r)$ are large. This explains why OOD samples cannot be generated by sampling from the prior, even when the likelihoods of ID and OOD data overlap. More analysis can be referred to \cite{zhang2024KullbackLeiblerDivergenceBasedOutofDistribution}.

Since Theorem~\ref{theorem_KLSupremumForGeneral} is the most optimized improvement of the relaxed triangle inequality \cite{zhang2023PropertiesKullbackleiblerDivergence}, the above analysis can also be regarded as an application instance of Theorem~\ref{theorem_KLSupremumForGeneral}.

\subsubsection{Application in Safe Reinforcement Learning}
\noindent
Liu \textit{et al.}~\cite{liu2022ConstrainedVariationalPolicy} propose an Expectation-Maximization-style approach for learning safe policies in reinforcement learning. They employ the original relaxed triangle inequality in \cite{zhang2023PropertiesKullbackleiblerDivergence} to extend single-step safety guarantee to multiple steps. In \cite{liu2022ConstrainedVariationalPolicy}, the authors suppose  $\epsilon_1=\epsilon_2$ and the bound in the inequality becomes $3\epsilon_1 + 3\epsilon_2 + 2 \sqrt{\epsilon_1 \epsilon_2} + o(\epsilon_1) + o(\epsilon_2)=8\epsilon_1+o(\epsilon_1)$.

In this paper, Theorem~\ref{theorem_KLUpperBoundForEpsilon} tightens the bound in the relaxed triangle inequality to the optimal upper bound $\epsilon_1 + \epsilon_2 + 2 \sqrt{\epsilon_1 \epsilon_2} + o(\epsilon_1) + o(\epsilon_2) = 4 \varepsilon_1 + o(\varepsilon_1)$ for $\epsilon_1=\epsilon_2$, achieving a $50\%$ reduction compared to the bound $8 \epsilon_1 + o(\epsilon_1)$. This improvement substantially strengthens the theoretical guarantee for multiple-step safety guarantee in reinforcement learning.

\section{Conclusion} \label{section_conclusion}
\noindent
In this paper, we investigate the relaxed triangle inequality for the Kullback--Leibler (KL) divergence between Gaussian distributions.  
First, we rigorously prove that, given fixed values of $\KL(\N{N}_1 \| \N{N}_2) = \Delta_1$ and $\KL(\N{N}_2 \| \N{N}_3) = \Delta_2$, the dimension-free supremum equals the closed-form expression $\frac{1}{2}\left[ w_2(2 \Delta_1) - 1 \right]\left[ w_2(2 \Delta_2) - 1 \right] + \Delta_1 + \Delta_2$. We explicitly characterize the necessary and sufficient conditions under which this supremum is attained.  
Then, to improve practical application, we consider small divergence values  $\KL(\N{N}_1 \| \N{N}_2) = \epsilon_1$ and $\KL(\N{N}_2 \| \N{N}_3) = \epsilon_2$, and show that the supremum of $\KL(\N{N}_1 \| \N{N}_3)$ is $\epsilon_1 + \epsilon_2 + 2 \sqrt{\epsilon_1 \epsilon_2} + o(\epsilon_1) + o(\epsilon_2)$.  
Finally, the theoretical results developed in this paper extend the applicability of the KL divergence in various domains, including out-of-distribution detection with flow-based models and safe reinforcement learning.

\bibliography{refs} 
\bibliographystyle{IEEEtran}

\begin{IEEEbiography}[{\includegraphics[width=1in,height=1.25in,clip,keepaspectratio]{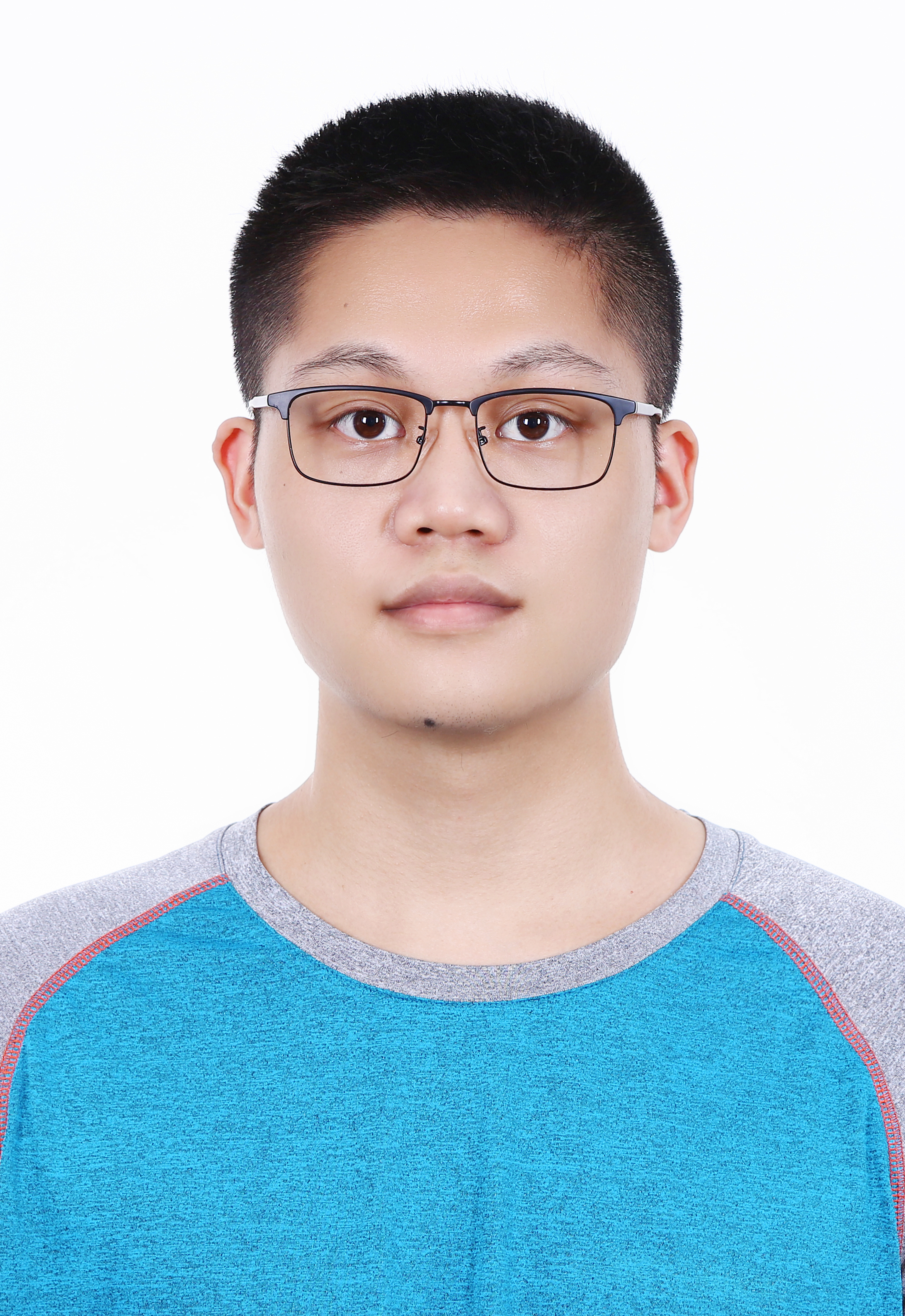}}]{Shiji Xiao}
    received the BEng and MEng degrees in aerospace engineering from Beihang University, Beijing, China. He is currently a first year PhD student in computer science at Hunan University, Changsha, Hunan, China. His research interests include artificial intelligence.
\end{IEEEbiography}

\begin{IEEEbiography}[{\includegraphics[width=1in,height=1.25in,clip,keepaspectratio]{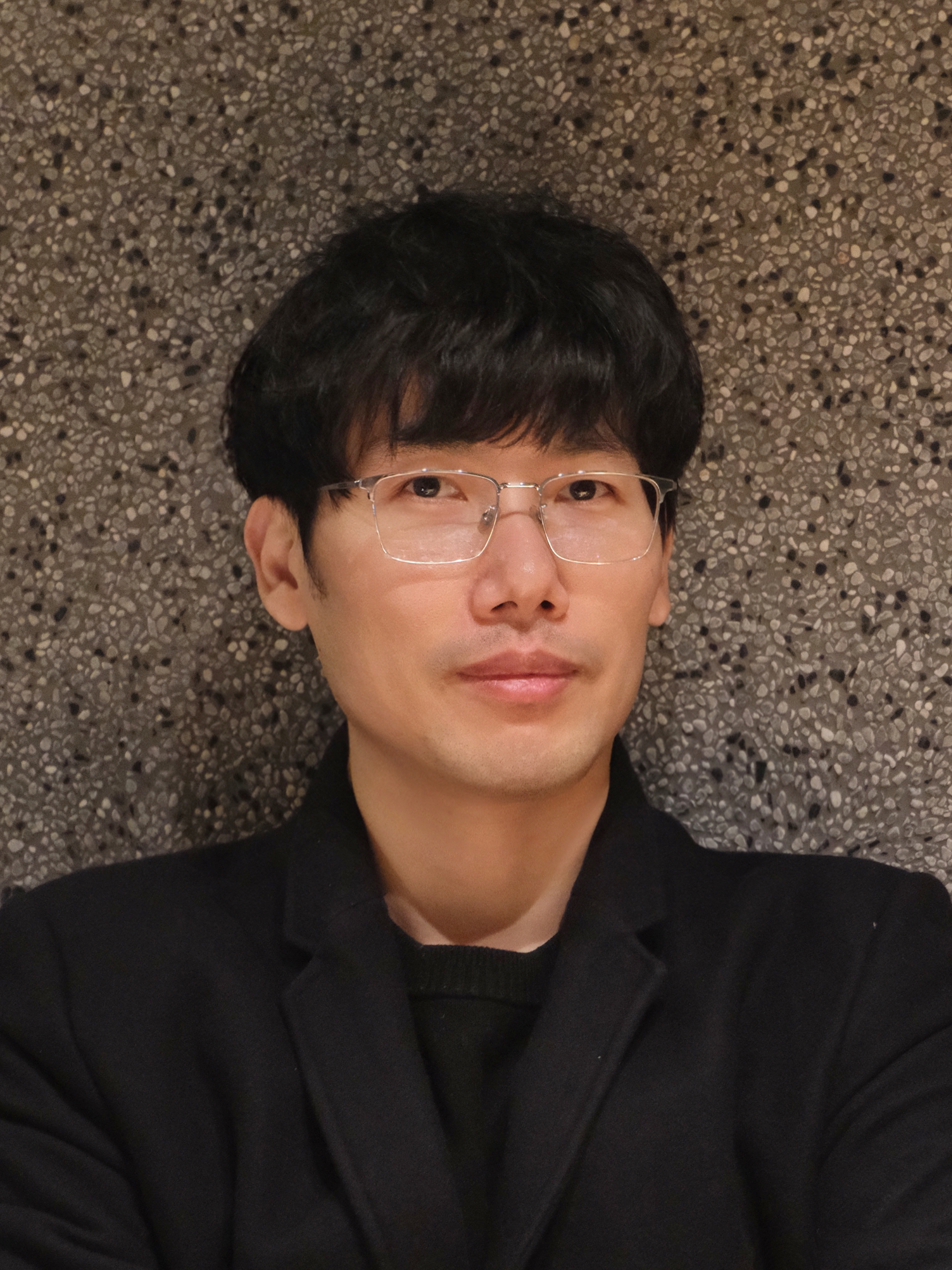}}]{Yufeng Zhang}
     received the PhD degree from the College of Computer, National University of Defense Technology, Changsha, China, in 2013. He is a full professor with the College of Computer Science and Electronic Engineering, Hunan University, Changsha, Hunan, China. His research interests include artificial intelligence and software engineering.
\end{IEEEbiography}

\begin{IEEEbiography}[{\includegraphics[width=1in,height=1.25in,clip,keepaspectratio]{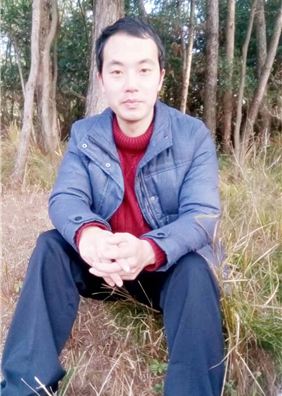}}]{Chubo Liu}
    (Member, IEEE) received the B.S. and Ph.D. degrees in computer science and technology from Hunan University, China, in 2011 and 2016, respectively. Currently, he is a Full Professor of computer science and technology with Hunan University. He has published over 40 papers in journals and conferences such as IEEE Transactions on Parallel and Distributed Systems, IEEE Transactions on Cloud Computing, IEEE Transactions on Mobile Computing, IEEE Transactions on Industrial Informatics, IEEE Internet of Things Journal, ACM Transactions on Modeling and Performance Evaluation of Computing Systems, Theoretical Computer Science, ISCA, DAC, and NPC. He won the IEEE TCSC Early Career Researcher (ECR) Award in 2019. He is a member of ACM.
\end{IEEEbiography}

\begin{IEEEbiography}[{\includegraphics[width=1in,height=1.25in,clip,keepaspectratio]{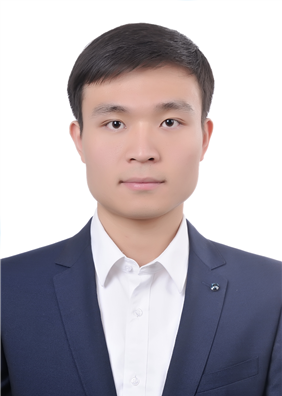}}]{Yan Ding}
    received the PhD degree in computer science from Hunan University, China, in 2022. He is currently an associate professor with Hunan University. He has published dozens of papers in journals and conferences. His research interests include parallel computing, mobile edge computing, Big Data, artificial intelligence, and architecture. He received the Outstanding Paper Award in IEEE ISPA 2019.
\end{IEEEbiography}

\begin{IEEEbiography}[{\includegraphics[width=1in,height=1.25in,clip,keepaspectratio]{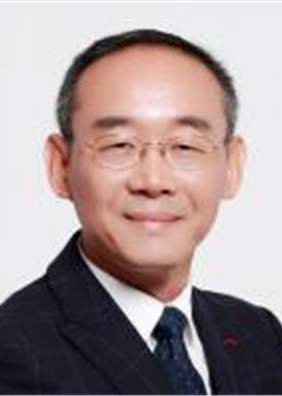}}]{Keqin Li}
    is a SUNY Distinguished Professor of computer science. His research interests are mainly in the areas of design and analysis of algorithms, parallel and distributed computing, and computer networking. He has contributed extensively to approximation algorithms, parallel algorithms, job scheduling, task dispatching, load balancing, performance evaluation, dynamic tree embedding, scalability analysis, parallel computing using optical interconnects, wireless networks, and optical networks. His current research interests include power-aware computing, location management in wireless communication networks, lifetime maximization in sensor networks, and file sharing in peer-to-peer systems.
Dr. Li has published over 225 journal articles, book chapters, and research papers in refereed international conference proceedings.
\end{IEEEbiography}

\begin{IEEEbiography}[{\includegraphics[width=1in,height=1.25in,clip,keepaspectratio]{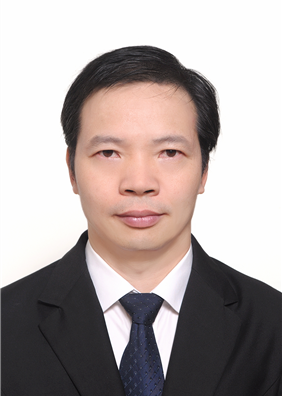}}]{Kenli Li}
    received the PhD degree in computer science from Huazhong University of Science and Technology, China, in 2003. He was a visiting scholar at the University of Illinois at Urbana- Champaign from 2004 to 2005. He is currently a full professor of computer science and technology at Hunan University and the deputy director in the National Supercomputing Center in Changsha. His major research areas include parallel computing, high-performance computing, and grid and cloud computing. He has published more than 130 research papers in international conferences and journals such as IEEE Transactions on Computers, IEEE Transactions on Parallel and Distributed Systems, Journal of Parallel and Distributed Computing, ICPP, CCGrid. He is an outstanding member of CCF. He serves on the editorial board of the IEEE Transactions on Computers. He is a senior member of the IEEE.
\end{IEEEbiography}

\appendices  
\section{Lemma~\ref{lem_MaxMuFun} and Its Proof} \label{section_MaxMuFun}

\begin{lemma} \label{lem_MaxMuFun}
	Let $\bm{\Sigma} \in \mathbb{S}^n_{++}$ be a variable positive definite matrix and $\vect{\mu_1}, \vect{\mu_2} \in \mathbb{R}^n$ be variable vectors satisfying
	\begin{equation} \label{eq_sigmaConstraints}
		-\log|\bm{\Sigma}| + \operatorname{Tr}(\bm{\Sigma})  = n + y
	\end{equation}
	and
	\begin{equation} \label{eq_muConstraints}
		\left\{
		\begin{array}{c}
			\vect{\mu_1}^\top \vect{\mu_1} = \varepsilon_1 \\
			\vect{\mu_2}^\top \bm{\Sigma} \vect{\mu_2} = \varepsilon_2
		\end{array}
		\right.
	\end{equation}
	where $\varepsilon_1 \geq 0$, $\varepsilon_2 \geq 0$, and $y \geq 0$ are fixed constants,  then
	\begin{equation} \label{ieq_KernelMuSigmaIeq}
		(\vect{\mu_2} - \vect{\mu_1})^\top \bm{\Sigma} (\vect{\mu_2} - \vect{\mu_1}) 
		\leq \left( \sqrt{w_2(y) \varepsilon_1} + \sqrt{\varepsilon_2} \right)^2
	\end{equation}
	Equality holds when
	\begin{equation} \label{eq_KernelMuSigmaConditions}
		\left\{ 
		\begin{aligned}
			\vect{\mu}_1 &= x_{1} \vect{e}_{1} \\
			\vect{\mu}_2 &= y_{1} \vect{e}_{1} \\
			\bm{\Sigma} &= \bm{Q} \operatorname{diag} \left(w_2(y), 1, \dots, 1 \right) \bm{Q}^{\top}
		\end{aligned} 
		\right.
	\end{equation}
	where $\bm{Q} \operatorname{diag} \left(w_2(y), 1, \dots, 1 \right) \bm{Q}^{\top}$ is the eigenvalue decomposition of $\bm{\Sigma}$, $(x_1, y_1) = \pm \left(  \sqrt{\varepsilon_1}, - \sqrt{\frac{\varepsilon_2}{w_2(y)}} \right)$,  and $\vect{e}_{1}$ is the first column of $\bm{Q}$.
\end{lemma}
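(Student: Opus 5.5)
Our plan follows the three-step scheme sketched in the main text for Problem~\ref{problem_mu}. Set $\vect{d} = \bm{\Sigma}^{1/2}(\vect{\mu}_2 - \vect{\mu}_1)$, so that the objective equals $\|\vect{d}\|^2$. We write $\vect{d} = \bm{\Sigma}^{1/2}\vect{\mu}_2 - \bm{\Sigma}^{1/2}\vect{\mu}_1$ and apply the triangle inequality for the Euclidean norm, which is the Cauchy--Schwarz step:
\begin{equation*}
\sqrt{(\vect{\mu_2} - \vect{\mu_1})^\top \bm{\Sigma} (\vect{\mu_2} - \vect{\mu_1})} \leq \|\bm{\Sigma}^{1/2}\vect{\mu}_2\| + \|\bm{\Sigma}^{1/2}\vect{\mu}_1\| = \sqrt{\varepsilon_2} + \sqrt{\vect{\mu}_1^\top \bm{\Sigma}\vect{\mu}_1}.
\end{equation*}
Equivalently, in the eigenbasis $\{\vect{e}_i\}$ of $\bm{\Sigma}$ with eigenvalues $\lambda_i$, we expand the quadratic form componentwise. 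In either form, the Rayleigh quotient then gives $\vect{\mu}_1^\top\bm{\Sigma}\vect{\mu}_1 \leq \lambda_{\max}\varepsilon_1$. So for fixed $\bm{\Sigma}$ the bound is $(\sqrt{\lambda_{\max}\varepsilon_1}+\sqrt{\varepsilon_2})^2$, which depends only on $\lambda_{\max}$ and is nondecreasing in it.

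The second step is to bound $\lambda_{\max}$ using constraint \eqref{eq_sigmaConstraints}. This constraint reads $\sum_{i}(\lambda_i - \log\lambda_i - 1) = y$. Each summand $f(\lambda_i)-1$ is nonnegative, since $f(x)=x-\log x\geq 1$. Hence $f(\lambda_{\max}) \leq 1+y$. Because $f$ is increasing on $[1,\infty)$ and $f(w_2(y)) = 1+y$, we get $\lambda_{\max}\leq w_2(y)$ whenever $\lambda_{\max}\geq 1$. If $\lambda_{\max} < 1$, the inequality $\lambda_{\max}\le 1\le w_2(y)$ holds trivially. Combining the two steps yields \eqref{ieq_KernelMuSigmaIeq}.

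For the equality claim, we only need sufficiency, so we verify the configuration \eqref{eq_KernelMuSigmaConditions} directly. With $\bm{\Sigma}=\bm{Q}\operatorname{diag}(w_2(y),1,\dots,1)\bm{Q}^\top$, the constraint holds because the only nonzero summand is $w_2(y)-\log w_2(y)-1=y$. With $\vect{\mu}_1=\pm\sqrt{\varepsilon_1}\vect{e}_1$ we have $\|\vect{\mu}_1\|^2=\varepsilon_1$. With $\vect{\mu}_2=\mp\sqrt{\varepsilon_2/w_2(y)}\,\vect{e}_1$ we have $\vect{\mu}_2^\top\bm{\Sigma}\vect{\mu}_2=\varepsilon_2$. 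The difference $\vect{\mu}_2-\vect{\mu}_1$ is along $\vect{e}_1$ with squared length $(\sqrt{\varepsilon_1}+\sqrt{\varepsilon_2/w_2(y)})^2$. Multiplying by $w_2(y)$ gives exactly $(\sqrt{w_2(y)\varepsilon_1}+\sqrt{\varepsilon_2})^2$.

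The main subtlety is the eigenvalue bound in the second step, because $f$ is not monotone and small eigenvalues must be handled separately. That case is dispatched by $w_2(y)\ge 1$. We also need to note that the equality cases of the triangle inequality (the two vectors antiparallel) and of the Rayleigh quotient (the vector in the top eigenspace) are simultaneously realizable. The explicit verification above settles this.
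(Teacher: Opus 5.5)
Your proof is correct and follows the paper's approach: you bound the objective for fixed $\bm{\Sigma}$ by a quantity depending only on $\lambda_{\max}$, derive $\lambda_{\max}\le w_2(y)$ from $\sum_i\bigl(f(\lambda_i)-1\bigr)=y$, and then check the configuration \eqref{eq_KernelMuSigmaConditions} directly. The only difference is in packaging. You get the fixed-$\bm{\Sigma}$ bound from the triangle inequality for $\|\bm{\Sigma}^{1/2}\cdot\|$ plus a single Rayleigh-quotient bound, while the paper expands the square in the eigenbasis and applies Cauchy--Schwarz to the cross term with the split $(x_i)$ versus $(\lambda_i y_i)$ (Lemma~\ref{lem_KernelXYInequality}); that route needs the extra estimate $\sum_i\lambda_i^2y_i^2\le\lambda_{\max}\varepsilon_2$ but also gives necessary-and-sufficient equality conditions for fixed $\bm{\Sigma}$, which this statement does not require.
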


To prove Lemma~\ref{lem_MaxMuFun}, we first establish Lemma~\ref{lem_KernelXYInequality} in Subsection~\ref{subsection_KernelXYInequality} and then prove Lemma~\ref{lem_MaxMuFun} in Subsection~\ref{subsection_MaxMuFun}.

\subsection{Lemma~\ref{lem_KernelXYInequality} and Its Proof} \label{subsection_KernelXYInequality}

\begin{lemma} \label{lem_KernelXYInequality}
	Suppose the variables $x_i$ and $y_i$ for $i = 1, \dots, n$ satisfy
	\begin{equation} \label{eq_xyConstraints}
		\left\{
		\begin{array}{c}
			\sum\limits_{i=1}^n x_i^2 = \varepsilon_1 \\
			\sum\limits_{i=1}^n \lambda_i y_i^2 = \varepsilon_2
		\end{array}
		\right.
	\end{equation}
	where $\varepsilon_1, \varepsilon_2 \geq 0$ and $\lambda_i > 0$ for $i = 1, \dots, n$ are fixed constants, then
	\begin{equation} \label{ieq_lambdaxyIeq}
		\sum_{i=1}^n \lambda_i (y_i - x_i)^2 \leq \left( \sqrt{\lambda_{\max} \varepsilon_1} + \sqrt{\varepsilon_2} \right)^2 
	\end{equation}
	If $\varepsilon_1 = 0$, equality holds for all admissible $x_i$ and $y_i$ satisfying \eqref{eq_xyConstraints}.
	If $\varepsilon_2 = 0$, equality holds if and only if
	\begin{equation*}
		x_i = 0 \text{ for all } i \in U \setminus I(\lambda_{\max})
	\end{equation*}
	If $\varepsilon_1 > 0$ and $\varepsilon_2 > 0$, equality holds if and only if
	\begin{equation*}
		\begin{cases}
			y_i = k x_i & \text{for } i \in I(\lambda_{\max}) \\
			y_i = x_i = 0 & \text{for } i \in U \setminus I(\lambda_{\max})
		\end{cases}
	\end{equation*}
	where $U = \left\lbrace 1, \dots, n \right\rbrace $, $\lambda_{\max} = \max_{i=1}^n \lambda_i$, $I(\lambda_{\max}) = \left\{i \mid \lambda_i = \lambda_{\max}, \, i \in U \right\}$ and $k = -\sqrt{\dfrac{\varepsilon_2}{\lambda_{\max} \varepsilon_1}}$.
\end{lemma}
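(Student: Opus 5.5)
We prove Lemma~\ref{lem_KernelXYInequality} by viewing the left-hand side as a squared weighted Euclidean norm and applying the triangle (Minkowski) inequality in that norm, followed by a crude bound on the $x$-part. Define $\|\vect{v}\|_\lambda = \bigl(\sum_{i=1}^n \lambda_i v_i^2\bigr)^{1/2}$, which is a norm since all $\lambda_i>0$. Then
\begin{equation*}
	\Bigl(\sum_{i=1}^n \lambda_i (y_i - x_i)^2\Bigr)^{1/2} = \|\vect{y}-\vect{x}\|_\lambda \leq \|\vect{y}\|_\lambda + \|\vect{x}\|_\lambda = \sqrt{\varepsilon_2} + \Bigl(\sum_{i=1}^n \lambda_i x_i^2\Bigr)^{1/2},
\end{equation*}
and since $\sum_i \lambda_i x_i^2 \leq \lambda_{\max}\sum_i x_i^2 = \lambda_{\max}\varepsilon_1$, squaring gives \eqref{ieq_lambdaxyIeq}. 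Equivalently, one can expand the square, bound $\sum_i \lambda_i x_i^2 \le \lambda_{\max}\varepsilon_1$, and control the cross term by Cauchy--Schwarz, $-2\sum_i \lambda_i x_i y_i \leq 2\|\vect{x}\|_\lambda\|\vect{y}\|_\lambda$. This matches the Cauchy--Schwarz route announced in the paper's outline.

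The equality analysis is where the care lies. Equality holds if and only if two conditions hold simultaneously.
\begin{enumerate}
	\item[(a)] $\sum_i \lambda_i x_i^2 = \lambda_{\max}\varepsilon_1$. Writing this as $\sum_i(\lambda_{\max}-\lambda_i)x_i^2=0$, it holds if and only if $x_i=0$ for all $i\in U\setminus I(\lambda_{\max})$.
	\item[(b)] Equality in the Minkowski/Cauchy--Schwarz step, i.e.\ $\vect{y}$ and $-\vect{x}$ are nonnegatively proportional, or one of them vanishes.
\end{enumerate}
The cases then split as follows.
\begin{enumerate}
	\item[(i)] If $\varepsilon_1=0$, then $\vect{x}=\vect{0}$ and both sides equal $\varepsilon_2$ for every admissible $\vect{y}$.
	\item[(ii)] If $\varepsilon_2=0$, then $\vect{y}=\vect{0}$, so (b) is automatic and only (a) remains. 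This gives exactly the stated condition.
	\item[(iii)] If $\varepsilon_1,\varepsilon_2>0$, both vectors are nonzero, so (b) forces $\vect{y}=k\vect{x}$ with $k\le 0$. Combined with (a), this gives $y_i=x_i=0$ off $I(\lambda_{\max})$ and $y_i=kx_i$ on $I(\lambda_{\max})$. The constant is then pinned down by
	\begin{equation*}
		\varepsilon_2=\sum_i\lambda_i y_i^2 = k^2\lambda_{\max}\sum_{i\in I(\lambda_{\max})}x_i^2 = k^2\lambda_{\max}\varepsilon_1,
	\end{equation*}
	so $k=-\sqrt{\varepsilon_2/(\lambda_{\max}\varepsilon_1)}$. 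Conversely, substituting these conditions back in gives equality directly.
\end{enumerate}

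The main point to check is the equality characterization of the Cauchy--Schwarz step in the weighted inner product, specifically that proportionality is with a nonpositive constant. Separately, the vanishing cases $\varepsilon_1=0$ or $\varepsilon_2=0$ must be handled on their own rather than through the proportionality argument. Everything else is routine.
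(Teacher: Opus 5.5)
Your proof is correct. It differs from the paper's mainly in where the weights $\lambda_i$ go in the Cauchy--Schwarz step.

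The paper expands the square and applies the \emph{unweighted} Cauchy--Schwarz inequality to the pair $(-x_i)$ and $(\lambda_i y_i)$. This forces a second auxiliary estimate, $\sum_i \lambda_i^2 y_i^2 \le \lambda_{\max}\varepsilon_2$ (with equality iff $y_i=0$ off $I(\lambda_{\max})$), on top of $\sum_i \lambda_i x_i^2 \le \lambda_{\max}\varepsilon_1$. Its equality analysis therefore tracks four conditions: the constraints, the two vanishing conditions, and $-x_i=\bar k\lambda_i y_i$. It then solves for $\bar k$ and converts to $k=-(\bar k\lambda_{\max})^{-1}$.

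You instead work in the weighted norm $\|\cdot\|_\lambda$, where $\|\vect{y}\|_\lambda=\sqrt{\varepsilon_2}$ holds exactly. So you need only the single estimate $\|\vect{x}\|_\lambda\le\sqrt{\lambda_{\max}\varepsilon_1}$. This shortens the equality case. Proportionality comes out directly as $\vect{y}=k\vect{x}$ with $k\le 0$. The vanishing of $y_i$ off $I(\lambda_{\max})$ then follows from the vanishing of $x_i$, rather than being a separate requirement.

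The paper's route has the minor merit of using only the Euclidean Cauchy--Schwarz inequality, but yours loses nothing. Your separate handling of the degenerate cases $\varepsilon_1=0$ and $\varepsilon_2=0$ matches the paper's case split.
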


\begin{IEEEproof} \label{app_proof_lem_P>0_Cauchy}
	By the conditions in \eqref{eq_xyConstraints} and the definition of $\lambda_{\max}$ and $I(\lambda_{\max})$, we have $(\lambda_{\max} - \lambda_{i}) > 0$ for all $i \in U \setminus I(\lambda_{\max})$ and thus
	\begin{equation*}
		\begin{aligned}
			&\sum_{i=1}^n {\lambda_{i} {x_i}^2} - \lambda_{\max} \varepsilon_1 \\
			=& \sum_{i=1}^n {\lambda_{i} {x_i}^2} - \lambda_{\max} \sum_{i=1}^n {{x_i}^2} = -\sum_{i=1}^n {(\lambda_{\max} - \lambda_{i}) {x_i}^2} \\
			=& - \sum_{i \in U \setminus I(\lambda_{\max})}{(\lambda_{\max} - \lambda_{i}) {x_i}^2} \leq 0
		\end{aligned}
	\end{equation*}
	that is,
	\begin{equation}
		\sum_{i=1}^n {\lambda_{i} {x_i}^2} \leq \lambda_{\max} \varepsilon_1 \label{ieq:lambda_x}
	\end{equation}
	with equality if and only if
	\begin{equation}
		x_i = 0, ~ i \in U \setminus I(\lambda_{\max}) \label{eq:lambda_x}
	\end{equation}
	Similarly, we have 
	\begin{equation*}
		\begin{aligned}
			& \sum_{i=1}^n {\lambda_{i}^2 {y_i}^2} - \lambda_{\max} \varepsilon_2 \\
			=& \sum_{i=1}^n {\lambda_{i}^2 {y_i}^2} - \lambda_{\max} \sum_{i=1}^n { \lambda_{i} {y_i}^2} = -\sum_{i=1}^n {(\lambda_{\max} - \lambda_{i}) \lambda_{i} {y_i}^2} \\
			=& - \sum_{i \in U \setminus I(\lambda_{\max})}{(\lambda_{\max} - \lambda_{i}) \lambda_{i} {y_i}^2} \leq 0
		\end{aligned}
	\end{equation*}
	that is,
	\begin{equation}
		\sum_{i=1}^n {\lambda_{i}^2 {y_i}^2} \leq \lambda_{\max} \varepsilon_2 \label{ieq:lambda_y}
	\end{equation}
	with equality if and only if
	\begin{equation}
		y_i = 0, ~ i \in U \setminus I(\lambda_{\max}) \label{eq:lambda_y}
	\end{equation}
	
	If $\varepsilon_1 = 0$, condition $\sum_{i=1}^n x_i^2 = \varepsilon_1 = 0$ together with $\lambda_i > 0$ for $i \in U$ implies $x_i = 0$ for all $i \in U$. Moreover, applying the condition $\sum_{i=1}^n \lambda_i y_i^2 = \varepsilon_2$, we have
	\begin{equation*}
		\sum_{i=1}^n \lambda_i (y_i - x_i)^2 = \sum_{i=1}^n \lambda_i y_i^2 = \varepsilon_2 = \left( \sqrt{\lambda_{\max} \varepsilon_1} + \sqrt{\varepsilon_2} \right)^2
	\end{equation*}
	which shows that the inequality \eqref{ieq_lambdaxyIeq} holds with equality for all admissible $x_i$ and $y_i$ satisfying \eqref{eq_xyConstraints}.
	
	If $\varepsilon_2 = 0$, we similarly obtain $y_i = 0$ for all $i \in U$.
	Thus, combining the condition $\sum_{i=1}^n x_i^2 = \varepsilon_1$, we obtain
	\begin{equation*}
		\begin{aligned}
			\sum_{i=1}^n \lambda_i (y_i - x_i)^2 =& \sum_{i=1}^n \lambda_i x_i^2 
			\\
			\leq& \lambda_{\max} \varepsilon_1 \text{ ~ (by \eqref{ieq:lambda_x})} \\
			=& \left( \sqrt{\lambda_{\max} \varepsilon_1} + \sqrt{\varepsilon_2} \right)^2
		\end{aligned}
	\end{equation*}
    The equality  holds if and only if
	\begin{equation*}
		x_i = 0 \text{ for all } i \in U \setminus I(\lambda_{\max}) .
	\end{equation*} 
	
	If $\varepsilon_1 > 0$ and $\varepsilon_2 > 0$, we have
	\begin{equation} \label{ieq:lambda_xy}
		\begin{aligned}
			&\sum_{i=1}^n {(-x_i) \lambda_i  y_i} \\
			\leq& \sqrt{\sum_{i=1}^n {(-x_i) ^ 2}} \sqrt{\sum_{i=1}^n {(\lambda_i y_i) ^ 2}} \text{ (Cauchy--Schwarz inequality)} \\
			\leq& \sqrt{\varepsilon_1} \sqrt{\lambda_{\max} \varepsilon_2 } \text{ (by \eqref{eq_xyConstraints} and \eqref{ieq:lambda_y})}
		\end{aligned} 
	\end{equation}
	and then
	\begin{equation*} 
		\begin{aligned}
			&\sum_{i=1}^n \lambda_i (y_i - x_i)^2 \\
			=& \sum\limits_{i=1}^n \lambda_i y_i^2 + \sum_{i=1}^n {\lambda_{i} {x_i}^2} + \sum_{i=1}^n {2 \lambda_i (-x_i) y_i} \\
			\leq& \varepsilon_2 + \lambda_{\max} \varepsilon_1 + 2\sqrt{\varepsilon_1} \sqrt{\lambda_{\max} \varepsilon_2 } \text{ (by \eqref{eq_xyConstraints}, \eqref{ieq:lambda_x}, and \eqref{ieq:lambda_xy})} \\
			=& \left( \sqrt{\lambda_{\max} \varepsilon_1} + \sqrt{\varepsilon_2} \right)^2 \\
		\end{aligned}
	\end{equation*}
	that is,
	\begin{equation*}
		\sum_{i=1}^n \lambda_i (y_i - x_i)^2 \leq \left( \sqrt{\lambda_{\max} \varepsilon_1} + \sqrt{\varepsilon_2} \right)^2
	\end{equation*}
	Equality holds if and only if \eqref{eq_xyConstraints}, \eqref{eq:lambda_x}, \eqref{eq:lambda_y} and the following condition hold
	\begin{equation}
		\label{eq:lambda_xy}
		\exists \bar{k} \geq 0 \text{ such that } - x_i = \bar{k} \lambda_i y_i ,~  i \in U 
	\end{equation}
	Then we have
	\begin{align*}
		\varepsilon_1 &= \sum_{i \in U} x_i^2  \text{ (by \eqref{eq_xyConstraints})}\\
		&= \sum_{i \in I(\lambda_{\max})} x_i^2  \text{ (by \eqref{eq:lambda_x})} \\
		&= \sum_{i \in I(\lambda_{\max})} (-\bar{k} \lambda_i y_i)^2 \text{ (by \eqref{eq:lambda_xy})} \\
		&= \bar{k}^2 \lambda_{\max} \sum_{i \in I(\lambda_{\max})} \lambda_i y_i^2 \\
		&= \bar{k}^2 \lambda_{\max} \sum_{i \in U} \lambda_i y_i^2 \text{ (by \eqref{eq:lambda_y})} \\
		&= \bar{k}^2 \lambda_{\max} \varepsilon_2  \text{ (by \eqref{eq_xyConstraints})}
	\end{align*}
	Since $\bar{k} \geq 0$, it follows that $\bar{k} = \sqrt{\frac{\varepsilon_1}{\lambda_{\max} \varepsilon_2}}$. Thus, from \eqref{eq_xyConstraints}, \eqref{eq:lambda_x}, \eqref{eq:lambda_y}, and \eqref{eq:lambda_xy}, we obtain
	\begin{equation}
		\left\{
		\begin{aligned}
			&y_i = k x_i, && i \in I(\lambda_{\max}) \\
			&y_i = x_i = 0, && i \in U \setminus I(\lambda_{\max}) 
		\end{aligned}
		\right.
		\label{eq_FinalForm}
	\end{equation}
	where $k = - \left( \bar{k} \lambda_{\max} \right)^{-1}  = -\sqrt{\frac{\varepsilon_2}{\lambda_{\max} \varepsilon_1}}$. Moreover, it is straightforward to verify that \eqref{eq_FinalForm} implies \eqref{eq_xyConstraints}, \eqref{eq:lambda_x}, \eqref{eq:lambda_y}, and \eqref{eq:lambda_xy}. Hence, \eqref{eq_FinalForm} is equivalent to these four conditions. Therefore, when $\varepsilon_1 > 0$ and $\varepsilon_2 > 0$, inequality \eqref{ieq_lambdaxyIeq} holds and equality holds if and only if \eqref{eq_FinalForm} is satisfied with $k = -\sqrt{\dfrac{\varepsilon_2}{\lambda_{\max} \varepsilon_1}}$.
	
	The lemma is thus proved for all cases.
\end{IEEEproof}

\subsection{Proof of Lemma~\ref{lem_MaxMuFun}} \label{subsection_MaxMuFun}

\begin{IEEEproof}
	Let $\lambda_{i} > 0$ $(i=1,\dots,n)$ be the $n$ eigenvalues of $\bm{\Sigma}$ with corresponding unit eigenvectors $\vect{e}_1,\dots,\vect{e}_n$, and denote $\lambda_{\max} = \max_{i=1}^n \lambda_i$.
	
	From Equations~(C-25) and~(C-26) in~\cite[Appendix C]{zhang2023PropertiesKullbackleiblerDivergence} and the condition \eqref{eq_sigmaConstraints}, we have
	\begin{equation*}
		-\log|\bm{\Sigma}| + \operatorname{Tr}(\bm{\Sigma}) = \sum_{i=1}^n \bigl( \lambda_i - \log \lambda_i \bigr) = n + y
	\end{equation*}
	Since the function $f(\lambda) = \lambda - \log \lambda$ satisfies $f(\lambda) \geq 1$ for all $\lambda > 0$, with equality if and only if $\lambda = 1$, we obtain
	\begin{align*}
		\lambda_{\max} - \log \lambda_{\max}
		&= n + y - \sum_{i \neq i_{\max}} \bigl( \lambda_i - \log \lambda_i \bigr) \\ 
		&\leq n + y - (n - 1) = y + 1
	\end{align*}
    where $i_{\max} \in U$ satisfying $\lambda_{i_{\max}} = \lambda_{\max}$.
	Then we obtain 
	\begin{equation} \label{eq_lambdaLessThanW2(y)}
		\lambda_{\max} \leq w_2(y)
	\end{equation}
	
	Meanwhile, $\{\vect{e}_{1},\dots,\vect{e}_{n}\}$ forms an orthonormal basis of $\mathbb{R}^n$ and 
	\begin{equation*}
		\vect{e}_{i}\cdot\vect{e}_{j}=\begin{cases}
			1 & i = j \\
			0 & i \neq j
		\end{cases}
	\end{equation*}
	We express $\vect{\mu_1}$ and $\vect{\mu_2}$ in this basis as $\vect{\mu_1} = \sum_{i=1}^n x_i \vect{e_i}$ and $\vect{\mu_2} = \sum_{i=1}^n y_i \vect{e_i}$.
	Applying the constraints \eqref{eq_muConstraints}, we obtain
	\begin{equation*}
		\vect{\mu_1}^\top\vect{\mu_1} = \sum_{i=1}^n x_i^2 = \varepsilon_1   
	\end{equation*}
	and
	\begin{equation*}
		\vect{\mu_2}^\top\bm{\Sigma}\vect{\mu_2} = \left(\sum\limits_{i=1}^n y_i \vect{e_i}\right)^\top\left(\sum\limits_{j=1}^n y_j \lambda_j \vect{e_j}\right) = \sum_{i=1}^n \lambda_i y_i^2 = \varepsilon_2  
	\end{equation*}
	That is,
	\begin{equation} \label{eq_xyEq}
		\left\{ \begin{array}{c}
			\sum\limits_{i=1}^n x_i^2 = \varepsilon_1\\
			\sum\limits_{i=1}^n \lambda_i y_i^2 = \varepsilon_2\\
		\end{array} \right.
	\end{equation}
	Consequently,
	\begin{equation*}
		\begin{aligned}
			&(\vect{\mu_2}-\vect{\mu_1})^\top \bm{\Sigma} (\vect{\mu_2}-\vect{\mu_1}) \\
			=& \sum_{i=1}^n \lambda_i (y_i-x_i)^2 \\
			\leq& \left(\sqrt{\lambda_{\max} \varepsilon_1}+\sqrt{\varepsilon_2}\right)^2 \text{ (by \eqref{eq_xyEq} and Lemma~\ref{lem_KernelXYInequality})} \\
			\leq & \left(\sqrt{w_2(y) \varepsilon_1}+\sqrt{\varepsilon_2}\right)^2 \text{ (by \eqref{eq_lambdaLessThanW2(y)})}
		\end{aligned}
	\end{equation*}
	which establishes inequality \eqref{ieq_KernelMuSigmaIeq}.
	
	When condition \eqref{eq_KernelMuSigmaConditions} holds, we have 
    \begin{equation*} 
		\lambda_i = \left\{ \begin{array}{cc}
			w_2(y) & \text{if } i = 1 \\
			1 & \text{if } i \neq 1
		\end{array} \right.
	\end{equation*}
    and the first column of $\bm{Q}$ is an eigenvector $\vect{e}_{1}$ corresponding to $\lambda_{1}$. Then
    \begin{equation*}
        \begin{aligned}
		      &-\log|\bm{\Sigma}| + \operatorname{Tr}(\bm{\Sigma}) \\
              =& \sum_{i=1}^n \left( \lambda_i - \log \lambda_i \right) =(y + 1) +  (n - 1) = n + y
		\end{aligned}
	\end{equation*}
	\begin{equation*} 
		\left\{
		\begin{aligned}
			\vect{\mu_1}^\top \vect{\mu_1} &= x_1^{2} = \varepsilon_1 \\
			\vect{\mu_2}^\top \bm{\Sigma} \vect{\mu_2} &= y_1^{2} \lambda_{1} = \varepsilon_2 \\
		\end{aligned}
		\right.
	\end{equation*}
	and
	\begin{equation*} 
		\begin{aligned}
			&(\vect{\mu_2} - \vect{\mu_1})^\top \bm{\Sigma} (\vect{\mu_2} - \vect{\mu_1}) = (y_1 - x_1) \vect{e_1}^\top \bm{\Sigma} (y_1 - x_1) \vect{e_1}  \\
			=& (y_1 - x_1)^{2} \lambda_{1} = \left( \sqrt{\varepsilon_2} + \sqrt{\lambda_{1} \varepsilon_1} \right)^2 = \left( \sqrt{w_2(y) \varepsilon_1} + \sqrt{\varepsilon_2} \right)^2
		\end{aligned}
	\end{equation*}
	That is when condition \eqref{eq_KernelMuSigmaConditions} holds, both constraints \eqref{eq_sigmaConstraints} and \eqref{eq_muConstraints} are satisfied and the equality in inequality \eqref{ieq_KernelMuSigmaIeq} can be attained
	
	The lemma is thus proved.
\end{IEEEproof}

\section{Lemma~\ref{lem_MaxSigmaFun} and Its New Proof} \label{section_MaxSigmaFun}

\begin{lemma} \cite[Appendix H]{zhang2023PropertiesKullbackleiblerDivergence} \label{lem_MaxSigmaFun}
	Let $\bm{\Sigma}_1, \bm{\Sigma}_2 \in \mathbb{S}^n_{++}$ be variable positive definite matrices satisfying
	\begin{equation*}
		\left\{\begin{aligned}
			-\log{|\bm{\Sigma}_1|} + \operatorname{Tr}(\bm{\Sigma}_1) = n + x \\
			-\log{|\bm{\Sigma}_2^{-1}|} + \operatorname{Tr}(\bm{\Sigma}_2^{-1}) = n + y \\
		\end{aligned} \right.
	\end{equation*}
    where $x, y \geq 0$ are fixed constants. Then
	\begin{equation*}
		-\log{|\bm{\Sigma}_2^{-1}||\bm{\Sigma}_1|} + \operatorname{Tr}(\bm{\Sigma}_2^{-1} \bm{\Sigma}_1) - n \leq F(x, y)
	\end{equation*}
    and equality holds if and only if $\bm{\Sigma}_1$ and $\bm{\Sigma}_2$ satisfy
	\begin{equation*}
		\left\{\begin{aligned}
			\bm{\Sigma}_1 & = \bm{Q} \operatorname{diag} \left(w_2(x), 1, \dots, 1 \right) \bm{Q}^{\top} \\
			\bm{\Sigma}_2 &= \bm{Q} \operatorname{diag} \left(w_2(y)^{-1}, 1, \dots, 1 \right) \bm{Q}^{\top} \\
		\end{aligned} \right. 
	\end{equation*}
	where $\bm{Q}$ is an arbitrary orthogonal matrix.
\end{lemma}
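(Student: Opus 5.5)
We reduce Lemma~\ref{lem_MaxSigmaFun} to a statement about eigenvalues. Write $\bm{A} = \bm{\Sigma}_1$ and $\bm{B} = \bm{\Sigma}_2^{-1}$, both positive definite. The constraints only involve the spectra $\{a_i\}$ of $\bm{A}$ and $\{b_i\}$ of $\bm{B}$:
\[
\sum_i f(a_i) = n + x, \qquad \sum_i f(b_i) = n + y .
\]
The objective is $-\sum_i \log a_i - \sum_i \log b_i + \operatorname{Tr}(\bm{B}\bm{A}) - n$. With the spectra fixed, only the trace term depends on the relative orientation of the eigenbases.

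\textbf{Step 1: fix the orientation.} By von Neumann's trace inequality,
\[
\operatorname{Tr}(\bm{B}\bm{A}) \leq \sum_i a_{(i)} b_{(i)},
\]
where both spectra are sorted in decreasing order. Equality holds if and only if $\bm{A}$ and $\bm{B}$ commute and are simultaneously diagonalized by a common orthogonal $\bm{Q}$, with eigenvalues paired in the same order. This reduces the problem to maximizing
\[
\Phi(a,b) = \sum_i \bigl(a_i b_i - \log a_i - \log b_i - 1\bigr)
\]
over similarly ordered vectors subject to the two constraints.

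\textbf{Step 2: rewrite the summand.} Write $a_i b_i - \log a_i - \log b_i - 1 = (a_i - 1)(b_i - 1) + g(a_i) + g(b_i)$, where $g(t) = f(t) - 1 = t - \log t - 1 \geq 0$. Set $s_i = g(a_i)$ and $t_i = g(b_i)$, so that $\sum_i s_i = x$ and $\sum_i t_i = y$. Then
\[
\Phi = x + y + \sum_i (a_i - 1)(b_i - 1).
\]
Given $s_i$, the magnitude $|a_i - 1|$ is largest on the branch $a_i = w_2(s_i)$, which gives $|a_i - 1| = w_2(s_i) - 1$; the other branch gives $1 - w_1(s_i) < w_2(s_i) - 1$ for $s_i > 0$. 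The product $(a_i - 1)(b_i - 1)$ is nonnegative only when the signs agree. Hence
\[
\Phi \leq x + y + \sum_i h(s_i)\, h(t_i), \qquad h = w_2 - 1,
\]
with equality only if, for every $i$ with $s_i t_i > 0$, we have $a_i = w_2(s_i)$ and $b_i = w_2(t_i)$. (Choosing both below $1$ gives the strictly smaller product $(1 - w_1)(1 - w_1)$.)

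\textbf{Step 3: concentrate the mass.} We must show that $\sum_i h(s_i) h(t_i) \leq h(x) h(y)$ whenever $\sum_i s_i = x$, $\sum_i t_i = y$ and $s_i, t_i \geq 0$, with equality only if a single index carries all the mass (both $s$ and $t$ concentrated on the same $i$). Since $h(0) = 0$ and $h$ is increasing, it suffices to prove superadditivity of the bilinear form: $h(s)h(t) + h(s')h(t') \leq h(s + s')\,h(t + t')$, with equality only in the trivial cases; induction on $n$ then finishes.

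This is the main obstacle. I would try to show that $h$ is superadditive, $h(s) + h(s') \leq h(s + s')$, which follows from convexity of $w_2$ together with $h(0) = 0$. Differentiating $w - \log w = 1 + t$ gives
\[
w_2'(t) = \frac{w_2}{w_2 - 1},
\]
which blows up at $0$ and then decreases, so convexity fails near $0$. I therefore expect to need the paper's ``no interior critical point'' trick instead. For the two-index case, parametrize $s' = x - s$ and $t' = y - t$, maximize $h(s)h(t) + h(x - s)h(y - t)$ over $[0,x] \times [0,y]$, and show that the gradient equations have no interior solution. The identity $h'(t) = (h + 1)/h$ turns these equations into an algebraic relation between the $h$-values that can be ruled out. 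The maximum then lies on the boundary, where a one-variable monotonicity check puts it at a corner.

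\textbf{Step 4: equality case.} Equality in Step 3 forces $s_1 = x$, $t_1 = y$ and all other $s_i = t_i = 0$, i.e.\ $a_i = b_i = 1$ for $i \neq 1$. Step 2 gives $a_1 = w_2(x)$ and $b_1 = w_2(y)$, and Step 1 gives a common eigenbasis $\bm{Q}$. Inverting $\bm{B} = \bm{\Sigma}_2^{-1}$ yields exactly the stated forms of $\bm{\Sigma}_1$ and $\bm{\Sigma}_2$, and direct substitution shows they attain $F(x, y)$.
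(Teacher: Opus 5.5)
Your Steps 1 and 2 are sound, and the overall architecture matches the paper. The paper's proof of this lemma simply cites inequality (H.178) of Zhang et al.\ and reads off the equality case. Its own contribution here is Lemma~\ref{lem_MaxF}, which, since $F(s,t) = h(s)h(t) + s + t$, is exactly your two-index merge inequality.

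\textbf{The gap is in how you plan to prove that inequality.} The function $K(s,t) = h(s)h(t) + h(x-s)h(y-t)$ satisfies $K(s,t) = K(x-s,y-t)$. Hence $\nabla K(s,t) = -\nabla K(x-s,y-t)$, and the gradient always vanishes at the centre $(x/2,y/2)$. So the gradient equations do have an interior solution and cannot be ``ruled out''. With $h' = (h+1)/h$ they reduce to $w_2(s)w_2(t) = w_2(x-s)w_2(y-t)$, and then to exactly $(s,t) = (x/2,y/2)$. The no-interior-critical-point trick works for $H$ in Lemma~\ref{lem_MaxH}, but not for $K$.

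What your argument is missing is a value comparison that excludes the centre, namely $h(x)h(y) > 2h(x/2)h(y/2)$. This requires $h(t) > \sqrt{2}\,h(t/2)$ for $t>0$, which is the paper's Lemma~\ref{lem_IeqOfW2(x/2)}. It is not a throwaway estimate: both sides equal $\sqrt{2t} + O(t)$ as $t \to 0$. Concavity of $w_2$ gives only the reverse bound $h(t) \le 2h(t/2)$; note that $w_2' = 1 + 1/(w_2 - 1)$ is decreasing on all of $(0,\infty)$, so your superadditivity attempt fails everywhere, not only near $0$.

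A cleaner repair makes the critical-point analysis unnecessary. Since $(h^2)' = 2hh' = 2w_2$ is strictly increasing, $h^2$ is strictly convex with $h(0)=0$, hence strictly superadditive. Cauchy--Schwarz then gives, for all $n$ at once,
\[
\sum_i h(s_i)h(t_i) \le \Bigl(\sum_i h(s_i)^2\Bigr)^{1/2}\Bigl(\sum_i h(t_i)^2\Bigr)^{1/2} \le h(x)\,h(y).
\]
For $x,y>0$, equality forces exactly one nonzero $s_i$, exactly one nonzero $t_j$, and $i=j$.

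\textbf{Second, the equality case fails when exactly one of $x,y$ is zero.} Your Step 3 claim that a single index carries all the mass, and hence your Step 4, break down there, and this is a defect of the statement itself. If $x=0$, then $\bm{\Sigma}_1 = \bm{I}$, and the objective equals $-\log|\bm{\Sigma}_2^{-1}| + \operatorname{Tr}(\bm{\Sigma}_2^{-1}) - n = y = F(0,y)$ for every admissible $\bm{\Sigma}_2$. For example, $\bm{\Sigma}_2^{-1} = \operatorname{diag}(w_1(y),1,\dots,1)$ attains the bound but is not of the stated form when $y>0$. The ``only if'' direction, in your proof and in the lemma, should be restricted to $x,y>0$. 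That is the only case used later, at $(x_0,y_0) = (2\Delta_1,2\Delta_2)$ in Lemma~\ref{lem_KLSupremumForNormal}.
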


\begin{IEEEproof}
    Lemma \ref{lem_MaxSigmaFun} is a rephrase of the original inequality  (H.178) in Appendix H of \cite{zhang2023PropertiesKullbackleiblerDivergence}. 
    Here we clarify the conditions when the equality holds as follows.
	\begin{equation*}
		\begin{aligned}
			&-\log{|\bm{\Sigma}_2^{-1}||\bm{\Sigma}_1|} + \operatorname{Tr}(\bm{\Sigma}_2^{-1} \bm{\Sigma}_1) - n \\
			\leq& x + y + w_2(x)w_2(y) - w_2(x) - w_2(y) + 1 \\
			& \text{~(by the original inequality (H.178) in \cite[Appendix H]{zhang2023PropertiesKullbackleiblerDivergence})} \\
			=& \left[ w_2(x) - 1 \right] \left[ w_2(y) - 1 \right] + x + y \\
			=& F(x, y) \text{~(by the definition of $F(x, y)$)}
		\end{aligned}
	\end{equation*}
	where equality holds if and only if the positive definite matrices $\bm{\Sigma}_1$ and $\bm{\Sigma}_2^{-1}$ have the following eigenvalue decompositions 
	\begin{equation*}
		\left\{\begin{aligned}
			\bm{\Sigma}_1 = \bm{Q} \operatorname{diag} \left(w_2(x), 1, \dots, 1 \right) \bm{Q}^{\top} \\
			\bm{\Sigma}_2^{-1} = \bm{Q} \operatorname{diag} \left(w_2(y), 1, \dots, 1 \right) \bm{Q}^{\top} \\
		\end{aligned} \right.
	\end{equation*}
	  Consequently,
	\begin{equation*}
		\begin{aligned}
			\bm{\Sigma}_2 =& (\bm{\Sigma}_2^{-1})^{-1} = \left( \bm{Q} \operatorname{diag} \left(w_2(y), 1, \dots, 1 \right) \bm{Q}^{\top} \right)^{-1} \\
			=& (\bm{Q}^{\top})^{-1} \operatorname{diag} \left(w_2(y), 1, \dots, 1 \right)^{-1} \bm{Q}^{-1} \\
			=& \bm{Q} \operatorname{diag} \left( w_2(y)^{-1}, 1, \dots, 1 \right) \bm{Q}^{\top}
		\end{aligned} 
	\end{equation*}
	which completes the proof.
\end{IEEEproof}

In particular, the proof  in \cite[Appendix H]{zhang2023PropertiesKullbackleiblerDivergence} relies on a core Lemma, the original Lemma G.5 in \cite[Appendix G]{zhang2023PropertiesKullbackleiblerDivergence}. 
In the following, we provide a significantly more concise proof for Lemma~\ref{lem_MaxF}, which is a strengthened version of the original Lemma~G.5 in \cite[Appendix G]{zhang2023PropertiesKullbackleiblerDivergence}. 
We prove the following Lemma~\ref{lem_IeqOfW2(x/2)} in Subsection~\ref{subsection_IeqOfW2(x/2)} first, and then prove Lemma~\ref{lem_MaxF} in Subsection~\ref{subsection_MaxF}.

\subsection{Lemma~\ref{lem_IeqOfW2(x/2)} and Its Proof} \label{subsection_IeqOfW2(x/2)}
\begin{lemma} \label{lem_IeqOfW2(x/2)}
	For all $x \geq 0$, it holds that
	\[
	w_2(x) - 1 \geq \sqrt{2}\left[ w_2\!\left(\frac{x}{2}\right) - 1 \right]
	\]
	with equality if and only if $x = 0$.
\end{lemma}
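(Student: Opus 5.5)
We reduce the inequality to a one-variable comparison by parametrizing through $u = w_2(x/2)$. Since $w_2$ is the inverse of $f(s)=s-\log s$ restricted to $[1,+\infty)$, and $f$ is strictly increasing there, the map $x \mapsto w_2(x/2)$ is a bijection from $[0,+\infty)$ onto $[1,+\infty)$. Set $t = u - 1 \geq 0$. By the definition of $w_2$,
\[
1 + \frac{x}{2} = u - \log u = 1 + t - \log(1+t),
\]
so $x = 2\bigl(t - \log(1+t)\bigr)$. Moreover, $x = 0$ if and only if $t = 0$.

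Next we rephrase the claim through $f$. We want $w_2(x) \geq 1 + \sqrt{2}\,t$. Both sides lie in $[1,+\infty)$, where $f$ is strictly increasing. Hence the claim is equivalent to $f\bigl(1+\sqrt{2}\,t\bigr) \leq f\bigl(w_2(x)\bigr) = 1 + x$, with equality in one exactly when equality holds in the other. Substituting the expression for $x$, the target becomes
\[
h(t) := 2t - 2\log(1+t) - \sqrt{2}\,t + \log\bigl(1+\sqrt{2}\,t\bigr) \;\geq\; 0, \qquad t \geq 0,
\]
with equality if and only if $t = 0$.

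The remaining step is a derivative computation. We have $h(0) = 0$, and
\[
h'(t) = \frac{2t}{1+t} - \frac{2t}{1+\sqrt{2}\,t} = \frac{2(\sqrt{2}-1)\,t^2}{(1+t)(1+\sqrt{2}\,t)}.
\]
This is strictly positive for $t > 0$. Therefore $h$ is strictly increasing on $[0,+\infty)$, so $h(t) > 0$ for all $t > 0$ and $h(0) = 0$.

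Translating back, $w_2(x) - 1 \geq \sqrt{2}\bigl[w_2(x/2) - 1\bigr]$ for all $x \geq 0$. Equality holds exactly when $t = 0$, that is, when $x = 0$.

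The only delicate point is the equivalence in the second step. It requires both $1 + \sqrt{2}\,t$ and $w_2(x)$ to lie in the region $[1,+\infty)$ where $f$ is monotone, and this holds because $t \geq 0$ and $w_2 \geq 1$. Once that reduction is in place, the derivative calculation is routine.
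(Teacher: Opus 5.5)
Your proof is correct, but it takes a different route from the paper's.

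\textbf{The paper's route.} The paper fixes $\Delta>0$ and studies $k(x)=[w_2(x)-1]^2+[w_2(\Delta-x)-1]^2$ on $[0,\Delta/2]$. Using the implicit derivative $w_2'(x)=w_2(x)/(w_2(x)-1)$, it gets $k'(x)=2[w_2(x)-w_2(\Delta-x)]<0$. Hence $k(0)>k(\Delta/2)$, i.e.\ $[w_2(\Delta)-1]^2>2[w_2(\Delta/2)-1]^2$, and it then takes square roots. In effect this shows that $\phi(s)=[w_2(s)-1]^2$ is strictly convex with $\phi(0)=0$, since $\phi'(s)=2w_2(s)$ is increasing. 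That gives for free the more general superadditivity $\phi(a)+\phi(b)<\phi(a+b)$ for all $a,b>0$, with the $\sqrt{2}$ inequality as the symmetric case. It also reuses the same derivative identity that drives the later analysis of $K$ and $H$.

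\textbf{Your route.} You parametrize by $t=w_2(x/2)-1$, which makes $x=2(t-\log(1+t))$ explicit. You then push the claim through the strictly increasing map $f$ on $[1,+\infty)$, reducing it to the elementary inequality $h(t)\ge 0$.

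\textbf{What each buys.} Your argument never differentiates the implicitly defined $w_2$, so no care is needed at $x=0$, where $w_2'$ blows up. The equality case falls out directly from the strict monotonicity of $f$ and $h$. It is essentially the same substitution device the paper uses for Lemma~\ref{lem_W2tGreatSqrt2t}, and it is more elementary and self-contained. The trade-off is that your argument is tailored to this one comparison, whereas the paper's convexity viewpoint covers all splittings $a+b$ at once.
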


\begin{IEEEproof}
	For any $\Delta > 0$, define
	\begin{equation*}
		k(x) = \left[ w_2(x) - 1 \right]^{2} + \left[ w_2(\Delta - x) - 1 \right]^{2}, \quad x \in \left[0, \frac{\Delta}{2} \right]
	\end{equation*}
	Since for all $x > 0$,
	\begin{equation*}
		w_{2}^{\prime}(x) = \frac{ w_{2}(x) }{ w_{2}(x) - 1}
	\end{equation*}
	it follows that
	\begin{equation*}
		\begin{aligned}
			k^{\prime}(x) =&\; 2 \left[ w_2(x) - 1 \right] \frac{ w_{2}(x) }{ w_{2}(x) - 1} \\
			&+ 2 \left[ w_2(\Delta - x) - 1 \right] \frac{ -w_{2}(\Delta - x) }{ w_{2}(\Delta - x) - 1} \\
			=&\; 2\left[ w_2(x) - w_{2}(\Delta - x) \right]
		\end{aligned}
	\end{equation*}
	Because $w_2(x), x \geq 0$ is strictly increasing in $x$, we have $k^{\prime}(x) = 2\left[ w_2(x) - w_{2}(\Delta - x) \right] < 0$ for all $x \in \left(0, \frac{\Delta}{2} \right)$. Hence, $k(0) > k\!\left(\frac{\Delta}{2}\right)$, which yields
	\begin{equation*}
		\left[ w_2(\Delta) - 1 \right]^{2} > 2 \left[ w_2\!\left(\frac{\Delta}{2}\right) - 1 \right]^{2}
	\end{equation*}
	Moreover, since $w_2(x) \geq w_2(0) = 1$ for all $x \geq 0$, it follows from the above that we have:
	\begin{equation*}
		w_2(x) - 1 > \sqrt{2} \left[ w_2\!\left(\frac{x}{2}\right) - 1 \right]
	\end{equation*}
	for all $x > 0$. Moreover, when $x = 0$, both sides equal zero, i.e.,
	\[
	w_2(0) - 1 = \sqrt{2} \left[ w_2(0) - 1 \right] = 0
	\]
	Thus, the lemma is proved.
\end{IEEEproof}

\subsection{Lemma~\ref{lem_MaxF} and Its Proof} \label{subsection_MaxF}

\begin{lemma} \label{lem_MaxF}
	For any $x_1, x_2 \geq 0$ and $y_1, y_2 \geq 0$, the following inequality holds
	\begin{equation*}
		F(x_1,y_1) + F(x_2,y_2) \leq F(x_1 + x_2, y_1 + y_2)
	\end{equation*}
	with equality if and only if $x_1 = y_1 = 0$ or $x_2 = y_2 = 0$.
\end{lemma}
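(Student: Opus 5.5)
The plan is to cancel the linear part of $F$, prove a superadditivity property for the square of $a(t) := w_2(t) - 1$, and then combine the two coordinates with the Cauchy--Schwarz inequality, in the same spirit as the proof of Lemma~\ref{lem_IeqOfW2(x/2)}.

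\emph{Reduction.} Since $x+y$ is additive, the linear terms $x_1+x_2+y_1+y_2$ appear on both sides and cancel. The claim therefore reduces to
\begin{equation*}
a(x_1)a(y_1) + a(x_2)a(y_2) \leq a(x_1+x_2)\,a(y_1+y_2),
\end{equation*}
where $a(t) \geq 0$ and $a(0) = 0$. Note that superadditivity of $a$ itself cannot be used: from $w_2'(t) = \frac{w_2(t)}{w_2(t)-1}$ we get $w_2''(t) = -\frac{w_2'(t)}{(w_2(t)-1)^2} < 0$, so $w_2$ is concave. Instead I will work with $b(t) = a(t)^2$.

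\emph{Key step: $b$ is superadditive.} Exactly as in the computation of $k'$ in Lemma~\ref{lem_IeqOfW2(x/2)}, we have $b'(t) = 2a(t)\,w_2'(t) = 2w_2(t)$. This is strictly increasing, so $b$ is strictly convex with $b(0) = 0$. For $s,t > 0$, write $b(s+t) - b(s) = \int_s^{s+t} b' > \int_0^t b' = b(t)$. Hence
\begin{equation*}
b(x_1+x_2) \geq b(x_1) + b(x_2),
\end{equation*}
with equality if and only if $x_1 x_2 = 0$. This is the generalization of Lemma~\ref{lem_IeqOfW2(x/2)}.

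\emph{Combining via Cauchy--Schwarz.} Applying Cauchy--Schwarz to the vectors $(a(x_1),a(x_2))$ and $(a(y_1),a(y_2))$ gives
\begin{equation*}
a(x_1)a(y_1)+a(x_2)a(y_2) \leq \sqrt{b(x_1)+b(x_2)}\,\sqrt{b(y_1)+b(y_2)}.
\end{equation*}
By the key step, the right-hand side is at most $\sqrt{b(x_1+x_2)}\,\sqrt{b(y_1+y_2)} = a(x_1+x_2)\,a(y_1+y_2)$. This proves the inequality.

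\emph{Equality analysis (the main obstacle).} First, if $x_1 = x_2 = 0$, then both sides of the reduced inequality vanish, because $F(0,y) = y$. So equality also holds in this degenerate case, and symmetrically when $y_1 = y_2 = 0$, even if neither pair $(x_i,y_i)$ is zero. The stated ``only if'' therefore has to be read as excluding these cases, or amended to include them; I would flag this in the proof.

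Now suppose $x_1+x_2 > 0$ and $y_1+y_2 > 0$, so that $a(x_1+x_2)$ and $a(y_1+y_2)$ are both positive. Equality forces equality in both superadditivity bounds, so $x_1x_2 = 0$ and $y_1y_2 = 0$. It also forces Cauchy--Schwarz equality, so the nonzero, nonnegative vectors $(a(x_1),a(x_2))$ and $(a(y_1),a(y_2))$ are proportional.

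Say $x_1 = 0 < x_2$. The first vector is then $(0, a(x_2))$, and proportionality forces $a(y_1) = 0$, that is, $y_1 = 0$; hence $x_1 = y_1 = 0$. The case $x_2 = 0$ symmetrically gives $x_2 = y_2 = 0$.

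Conversely, if $x_1 = y_1 = 0$ (or $x_2 = y_2 = 0$), equality is immediate from $F(0,0) = 0$. The care needed lies in this bookkeeping of the degenerate cases rather than in the inequality itself.
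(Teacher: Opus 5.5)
Your proof is correct, and it takes a genuinely different route from the paper's. The paper fixes $\Delta_1=x_1+x_2$ and $\Delta_2=y_1+y_2$, then maximizes $K(x,y)=F(x,y)+F(\Delta_1-x,\Delta_2-y)$ over $[0,\Delta_1]\times[0,\Delta_2]$. Multiplying the two first-order conditions gives $w_2(x)w_2(y)=w_2(\Delta_1-x)w_2(\Delta_2-y)$, so the only possible interior critical point is the center. Edge monotonicity and the symmetry $K(x,y)=K(\Delta_1-x,\Delta_2-y)$ reduce the boundary to the corners $(0,0)$ and $(\Delta_1,\Delta_2)$. Finally, Lemma~\ref{lem_IeqOfW2(x/2)} shows the center value is strictly smaller; that lemma is exactly the case $s=t$ of your superadditivity.

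You instead observe that the identity $\frac{d}{dt}\bigl(w_2(t)-1\bigr)^2=2w_2(t)$, which the paper uses only inside the proof of Lemma~\ref{lem_IeqOfW2(x/2)}, already makes $b=(w_2-1)^2$ strictly convex with $b(0)=0$. Hence $b$ is strictly superadditive for every split, and one Cauchy--Schwarz step finishes the argument. This removes the critical-point and boundary case analysis, and the equality case follows directly from the equality conditions of convexity and Cauchy--Schwarz. The paper's route, by contrast, is the same ``no interior critical point'' template it reuses for Lemma~\ref{lem_MaxH}, which is its methodological point.

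Your flag on the degenerate cases is correct, and the paper gets this wrong. In its case $\Delta_1=0$ it writes $F(x_1,y_1)+F(x_2,y_2)=0+0$, but in fact $F(0,y)=y$. So both sides equal $y_1+y_2$, and equality holds for all $y_1,y_2\ge 0$; for example, take $x_1=x_2=0$ and $y_1=y_2=1$. The ``only if'' clause therefore holds only when $x_1+x_2>0$ and $y_1+y_2>0$, which is exactly the regime your equality analysis covers.
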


\begin{remark}
	By the identities $f(w_2(x)w_2(y)) = F(x,y) + 1$, which follows directly from the derivation
	\begin{equation*}
		\begin{aligned}
			&f(w_2(x)w_2(y)) \\
			=& w_2(x)w_2(y) - \log{w_2(x)w_2(y)} \\ 
			=& w_2(x)w_2(y) - \log{w_2(x)} - \log{w_2(y)} \\
			=& w_2(x)w_2(y) - \left[ w_2(x) - (x + 1) \right] - \left[ w_2(y) - (y + 1) \right]  \\
			=& \left[ w_2(x) - 1 \right] \left[ w_2(y) - 1 \right] + x + y + 1 \\
			=& F(x,y) + 1
		\end{aligned}
	\end{equation*} it can be observed that Lemma~\ref{lem_MaxF} is a strengthened version of the original Lemma~G.5 in \cite[Appendix G]{zhang2023PropertiesKullbackleiblerDivergence}, whereas the original lemma assumes $x_1 \geq x_2 \geq 0$ and $y_1 \geq y_2 \geq 0$, our result requires only $x_1, x_2 \geq 0$ and $y_1, y_2 \geq 0$. 
\end{remark}

\begin{IEEEproof}
	Without loss of generality, assume $x_1 + x_2 = \Delta_1$ and $y_1 + y_2 = \Delta_2$. 
	If $\Delta_1 = 0$, then $x_1 = x_2 = 0$, and we have
	\begin{equation*}
		\left\lbrace 
		\begin{aligned}
			F(x_1,y_1) + F(x_2,y_2) =&\; 0 + 0 = 0 \\
			F(x_1 + x_2, y_1 + y_2) =&\; F(0, y_1 + y_2) = 0 \\
		\end{aligned}
		\right. 
	\end{equation*}
	Thus, Lemma~\ref{lem_MaxF} holds trivially. 
	A similar argument applies when $\Delta_2 = 0$.
	
	Now consider the case where $\Delta_1 > 0$ and $\Delta_2 > 0$.
	Define
	\begin{equation*}
		K(x, y) = F(x,y) + F(\Delta_1 - x, \Delta_2 - y)
	\end{equation*}
	where $(x, y) \in \hat{\Omega} = \hat{\Omega}(\Delta_1, \Delta_2) = [0,  \Delta_1] \times [0, \Delta_2]$.
	Since
	\begin{equation*}
		\begin{aligned}
			K(x, y) =& \left[ w_2(x) - 1 \right] \left[ w_2(y) - 1 \right] + \Delta_1 + \Delta_2 \\
			&+ \left[ w_2(\Delta_1 - x) - 1 \right] \left[ w_2(\Delta_2 - y) - 1 \right]
		\end{aligned}
	\end{equation*}
	is continuously differentiable on $\operatorname{int}\left( \hat{\Omega} \right) = (0,  \Delta_1) \times (0, \Delta_2)$, a necessary condition for $K(x, y)$ to attain a maximum at $(x, y) \in \operatorname{int}(\Omega)$ is 
	\begin{equation*}
		\left\lbrace 
		\begin{aligned}
			\frac{\partial K(x, y)}{\partial x} = 0 \\
			\frac{\partial K(x, y)}{\partial y} = 0
		\end{aligned}
		\right. 
	\end{equation*}
	Meanwhile, for any $(x, y) \in \operatorname{int}\left( \hat{\Omega} \right)$, we have:
	\begin{equation} \label{eq_partKx}
		\begin{aligned}
			\frac{\partial K(x, y) }{\partial x} =& \frac{ w_{2}(x) }{ w_{2}(x) - 1} \left[ w_2(y) - 1 \right] \\
			&- \frac{ w_{2}(\Delta_1 - x) }{ w_{2}(\Delta_1 - x) - 1}   \left[ w_2(\Delta_2 - y) - 1 \right] \\
		\end{aligned}
	\end{equation}
	Setting $\frac{\partial K(x, y) }{\partial x} = 0$, we obtain:
	\begin{equation} \label{eq_partKxEq0}
		\begin{aligned}
			\frac{ w_{2}(x) }{ w_{2}(x) - 1} \left[ w_2(y) - 1 \right] = \frac{ w_{2}(\Delta_1 - x) }{ w_{2}(\Delta_1 - x) - 1}   \left[ w_2(\Delta_2 - y) - 1 \right]
		\end{aligned}
	\end{equation}
	Similarly, setting $\frac{\partial K(x, y) }{\partial y} = 0$, we obtain:
	\begin{equation} \label{eq_partKyEq0}
		\begin{aligned}
			\left[ w_2(x) - 1 \right] \frac{ w_{2}(y) }{ w_{2}(y) - 1} = \left[ w_2(\Delta_1 - x) - 1 \right] \frac{ w_{2}(\Delta_2 - y) }{ w_{2}(\Delta_2 - y) - 1}
		\end{aligned}
	\end{equation}
	From~\eqref{eq_partKxEq0} and~\eqref{eq_partKyEq0}, it follows that
	\begin{equation} \label{eq_partKxyEq0}
		\begin{aligned}
			w_2(x) w_{2}(y) = w_2(\Delta_1 - x) w_{2}(\Delta_2 - y)
		\end{aligned}
	\end{equation}
	Substituting~\eqref{eq_partKxyEq0} into~\eqref{eq_partKx} yields
	\begin{equation*} 
		\begin{aligned}
			&\frac{\partial K(x, y) }{\partial x} \\
			=& \frac{ w_2(x) w_{2}(y) - w_{2}(x) }{ w_{2}(x) - 1} - \frac{ w_2(x) w_{2}(y) - w_{2}(\Delta_1 - x) }{ w_{2}(\Delta_1 - x) - 1}  \\
			=& \frac{w_2(x) w_{2}(y) \left[ w_{2}(\Delta_1 - x) - w_{2}(x) \right] + w_{2}(x) - w_{2}(\Delta_1 - x) }{\left[ w_{2}(x) - 1 \right] \left[ w_{2}(\Delta_1 - x) - 1 \right]} \\
			=& \frac{w_2(x) w_{2}(y) - 1 }{\left[ w_{2}(x) - 1 \right] \left[ w_{2}(\Delta_1 - x) - 1 \right]} \left[ w_{2}(\Delta_1 - x) - w_{2}(x) \right]
		\end{aligned}
	\end{equation*}
	Since $\Delta_1 > x > 0$ and $y > 0$, we have $w_2(x), w_{2}(\Delta_1 - x), w_2(y) > 1$, and therefore
	\begin{equation*} 
		\frac{w_2(x) w_{2}(y) - 1 }{\left[ w_{2}(x) - 1 \right] \left[ w_{2}(\Delta_1 - x) - 1 \right]} > 0
	\end{equation*}
	Consequently, $\frac{\partial K(x, y) }{\partial x} = 0$ holds if and only if $x = \frac{\Delta_1}{2}$.
	Similarly, we obtain
	\begin{equation*} 
		\begin{aligned}
			&\frac{\partial K(x, y) }{\partial y} \\
			=& \frac{w_2(x) w_{2}(y) - 1 }{\left[ w_{2}(y) - 1 \right] \left[ w_{2}(\Delta_2 - y) - 1 \right]} \left[ w_{2}(\Delta_2 - y) - w_{2}(y) \right]
		\end{aligned}
	\end{equation*}
	and $\frac{\partial K(x, y) }{\partial y} = 0$ holds if and only if $y = \frac{\Delta_2}{2}$.
	
	Therefore, the maximum of $K(x, y)$ can only be attained at $(x, y) \in \left\{ \left( \frac{\Delta_1}{2}, \frac{\Delta_2}{2} \right) \right\} \cup \partial \hat{\Omega}$, where $\partial \hat{\Omega} = \hat{\Omega} \setminus \operatorname{int}\left( \hat{\Omega} \right)$ denotes the boundary of $\hat{\Omega}$, which can be decomposed as
	\begin{equation*}
		\partial \hat{\Omega} = \partial \hat{\Omega}_{L} \cup \partial \hat{\Omega}_{U} \cup \partial \hat{\Omega}_{R} \cup \partial \hat{\Omega}_{D}
	\end{equation*}
	and
	\begin{equation*}
		\left\lbrace 
		\begin{aligned}
			\partial \hat{\Omega}_{L} &= \{0\} \times [0, \Delta_2] \\
			\partial \hat{\Omega}_{U} &= [0, \Delta_1] \times \{\Delta_2\} \\
			\partial \hat{\Omega}_{R} &= \{\Delta_1\} \times [0, \Delta_2] \\
			\partial \hat{\Omega}_{D} &= [0, \Delta_1] \times \{0\}
		\end{aligned}
		\right. 
	\end{equation*}
	
	On $\partial \hat{\Omega}_{L}$, we have
	\begin{equation*}
		\begin{aligned}
			\left. K(x, y) \right|_{\partial \hat{\Omega}_{L}} =& K(0, y) \\
			=& \Delta_1 + \Delta_2 + \left[ w_2(\Delta_1) - 1 \right] \left[ w_2(\Delta_2 - y) - 1 \right]
		\end{aligned}
	\end{equation*}
	which is decreasing in $y$. Hence, $\left. K(x, y) \right|_{\partial \hat{\Omega}_{L}} \leq K(0, 0)$. Similarly, $\left. K(x, y) \right|_{\partial \hat{\Omega}_{D}} \leq K(0, 0)$.
	Furthermore, $K(x, y)$ satisfies the symmetry
	\begin{equation*}
		K(x, y) = K(\Delta_1 - x, \Delta_2 - y)
	\end{equation*}
	for all $(x, y) \in \hat{\Omega}$. Therefore, $\left. K(x, y) \right|_{\partial \hat{\Omega}_{R}} \leq K(\Delta_1, \Delta_2)$ and $\left. K(x, y) \right|_{\partial \hat{\Omega}_{U}} \leq K(\Delta_1, \Delta_2)$.
	
	Direct computation shows that 
	\begin{equation*}
		K(0, 0) = K(\Delta_1, \Delta_2) = F(\Delta_1, \Delta_2)
	\end{equation*}
	Moreover, by Lemma~\ref{lem_MaxF}, we have
	\begin{equation*}
		\begin{aligned}
			K(0, 0) =& \Delta_1 + \Delta_2 + \left[ w_2(\Delta_1) - 1 \right] \left[ w_2(\Delta_2) - 1 \right] \\
			>& \Delta_1 + \Delta_2 + \sqrt{2}\left[ w_2(\Delta_1 / 2) - 1 \right] \sqrt{2}\left[ w_2(\Delta_2 / 2) - 1 \right] \\
			=& K(\Delta_1 / 2, \Delta_2 / 2)
		\end{aligned}
	\end{equation*}
	Consequently, for all $\Delta_1, \Delta_2 > 0$, we have $K(x, y) \leq K(0, 0) = F(\Delta_1, \Delta_2)$ for all $(x, y) \in \hat{\Omega}$, with equality if and only if $(x, y) = (0, 0)$ or $(x, y) = (\Delta_1, \Delta_2)$. This establishes Lemma~\ref{lem_MaxF} for the case $\Delta_1, \Delta_2 > 0$.
	
	The lemma is thus proved.
	
\end{IEEEproof}

\section{Lemma~\ref{lem_MaxH} and Its Proof} \label{appendix_MaxH}
\noindent
\begin{lemma} \label{lem_MaxH}
	For all $\Delta_1 > 0$ and $\Delta_2 > 0$, the function $H(x, y; \Delta_1, \Delta_2)$ with $(x, y) \in \Omega(\Delta_1, \Delta_2)$ satisfies
	\begin{equation*}
		H(x, y; \Delta_1, \Delta_2) \leq \frac{1}{2} F(2\Delta_1, 2\Delta_2)
	\end{equation*}
    with equality if and only if $(x, y) = (2\Delta_1, 2\Delta_2)$.
\end{lemma}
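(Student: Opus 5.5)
The plan follows the compact-domain strategy already used for Lemma~\ref{lem_MaxF}. First, $H(\cdot,\cdot;\Delta_1,\Delta_2)$ is continuous on the compact rectangle $\Omega(\Delta_1,\Delta_2)$, since $w_2$ is continuous and so are the square roots. Hence a maximizer $(x^*,y^*)$ exists. Next I show that it cannot lie in $\operatorname{int}(\Omega)$, reduce to the four edges, and finally compare the two surviving corners.

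Throughout I use $w_2'(t)=w_2(t)/(w_2(t)-1)$ from the proof of Lemma~\ref{lem_IeqOfW2(x/2)}. I also use the elementary bound
$$w_2(t)-1\ge \sqrt{2t},\qquad t\ge 0,$$
with equality only at $t=0$. It follows from $\log w\ge (w-1)-\tfrac12(w-1)^2$ for $w\ge1$, which gives $t=w_2-1-\log w_2\le \tfrac12(w_2-1)^2$.

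\textbf{Interior.} Write $a=\sqrt{w_2(y)(2\Delta_1-x)}$ and $b=\sqrt{2\Delta_2-y}$, so that $G=(a+b)^2$. Then
$$\partial_x G=-(a+b)\,\frac{w_2(y)}{a},\qquad \partial_y G=(a+b)\Bigl(\frac{w_2'(y)(2\Delta_1-x)}{a}-\frac1b\Bigr),$$
while
$$\partial_x F=\frac{w_2(x)}{w_2(x)-1}\bigl(w_2(y)-1\bigr)+1,\qquad \partial_y F=\frac{w_2(y)}{w_2(y)-1}\bigl(w_2(x)-1\bigr)+1.$$
Setting $\partial_x H=\partial_y H=0$ gives two equations in which the common factor $(a+b)$ can be eliminated. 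The ratio $r=b/a$ then appears as the only new unknown. I expect the resulting simplified equation to force a sign contradiction in $\operatorname{int}(\Omega)$, using the monotonicity of $w_2$ and the bound $w_2-1\ge\sqrt{2t}$.

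This interior step is the main obstacle, because the algebra is genuinely two-variable. My first attempt will be to solve $\partial_x H=0$ for $1+r$ and substitute into $\partial_y H=0$, then check the sign of the resulting expression. If that fails, I will instead try to show directly that $\partial_x H+\lambda\,\partial_y H>0$ for a suitable positive combination $\lambda$, which would also exclude interior critical points.

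\textbf{Edges.}
\begin{itemize}
\item On $x=2\Delta_1$ we have $G=2\Delta_2-y$, so $H=\tfrac12[(w_2(2\Delta_1)-1)(w_2(y)-1)+2\Delta_1+2\Delta_2]$. This is strictly increasing in $y$, so its maximum on this edge is at $(2\Delta_1,2\Delta_2)$.
\item On $y=2\Delta_2$, write $W=w_2(2\Delta_2)$, so that $G=W(2\Delta_1-x)$. Then $2\partial_x H=\frac{w_2(x)(W-1)}{w_2(x)-1}+1-W=\frac{W-1}{w_2(x)-1}>0$. So $H$ is strictly increasing in $x$ and again the maximum is at $(2\Delta_1,2\Delta_2)$.
\item On $x=0$ and on $y=0$, $F$ and $G$ reduce to one-variable functions. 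I will differentiate and show monotonicity, or else repeat the ``no critical point'' argument in one variable. This should leave only the endpoint candidates $(0,0)$ and $(2\Delta_1,2\Delta_2)$; the other corners are already dominated by the first two edges.
\end{itemize}

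\textbf{Corner comparison.} At the origin, $H(0,0)=\tfrac12(\sqrt{2\Delta_1}+\sqrt{2\Delta_2})^2=\Delta_1+\Delta_2+2\sqrt{\Delta_1\Delta_2}$. At the other corner, $G$ vanishes, so $H(2\Delta_1,2\Delta_2)=\tfrac12F(2\Delta_1,2\Delta_2)$. By the bound $w_2(t)-1\ge\sqrt{2t}$,
$$\tfrac12F(2\Delta_1,2\Delta_2)\ \ge\ \Delta_1+\Delta_2+\tfrac12\sqrt{4\Delta_1}\sqrt{4\Delta_2}=\Delta_1+\Delta_2+2\sqrt{\Delta_1\Delta_2},$$
and the inequality is strict because $\Delta_1,\Delta_2>0$. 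Combining this with the strict monotonicity along the edges shows that $(2\Delta_1,2\Delta_2)$ is the unique maximizer, which is the equality case of the lemma.
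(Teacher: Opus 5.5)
Your overall architecture is the paper's: a maximizer exists by compactness, there is no interior critical point, then an edge analysis, then the corner comparison via $w_2(t)-1>\sqrt{2t}$. Your handling of the edges $x=2\Delta_1$, $y=2\Delta_2$, $y=0$ and of the corner comparison is correct. However, the interior step is never actually carried out, and your plan for the edge $x=0$ would fail.

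\textbf{Interior.} You only state an expectation. The route works, but you need the following simplification. Note that $(a+b)$ is not a common factor, since $\partial F$ does not carry it. Put $\rho=\sqrt{2\Delta_1-x}/\sqrt{w_2(y)(2\Delta_2-y)}$, which is $1/(w_2(y)r)$ in your notation, and use $w_2'=w_2/(w_2-1)$. Then
\[
2\partial_x H=\frac{w_2(y)-1}{w_2(x)-1}-\frac{1}{\rho},\qquad
2\partial_y H=\frac{w_2(y)}{w_2(y)-1}\Bigl[w_2(x)-1+(2\Delta_1-x)-\rho\bigl(w_2(y)-1-(2\Delta_2-y)\bigr)\Bigr].
\]
Setting $\partial_x H=0$ forces $\rho=(w_2(x)-1)/(w_2(y)-1)$. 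Substituting this into $\partial_y H=0$ gives $(2\Delta_1-x)+\frac{(w_2(x)-1)(2\Delta_2-y)}{w_2(y)-1}=0$. This is impossible in $\operatorname{int}(\Omega)$, because both terms are positive. Only $w_2(t)>1$ for $t>0$ is used here, not $w_2(t)-1\ge\sqrt{2t}$.

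\textbf{Edge $x=0$.} Here your plan breaks down. On this edge,
\[
2H(0,y)=2\Delta_2+2\Delta_1w_2(y)+2\sqrt{2\Delta_1w_2(y)(2\Delta_2-y)}.
\]
Its derivative tends to $+\infty$ as $y\to0^+$, because $w_2'(0^+)=+\infty$. It tends to $-\infty$ as $y\to2\Delta_2^-$, because of the square root. So for every $\Delta_1,\Delta_2>0$ this edge function is non-monotone and has a genuine one-variable critical point in $(0,2\Delta_2)$. Neither of your tactics (monotonicity, or a one-variable no-critical-point argument) removes it, and the edge does not reduce to its endpoints.

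The missing idea, which is what the paper uses, is to look at the transverse derivative instead. For fixed $y\in(0,2\Delta_2)$, $2\partial_xH\to+\infty$ as $x\to0^+$: the term $w_2(x)-1\to0$ while $w_2(y)-1>0$, and $1/\rho$ stays bounded. Hence $H(0,y)<H(x_0,y)$ for small $x_0>0$. In turn, $H(x_0,y)$ is strictly below the maximum, because by the interior step the maximum is not attained in $\operatorname{int}(\Omega)$. With these two repairs, the corners $(0,0)$ and $(0,2\Delta_2)$ are handled by your $y=0$ and $y=2\Delta_2$ edges, and the rest of your argument goes through.
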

To prove Lemma~\ref{lem_MaxH}, we need prove the following Lemma~\ref{lem_MaxHNotOnInternal} in Lemma~\ref{subappendix_MaxHNotOnInternal} and Lemma~\ref{lem_W2tGreatSqrt2t} in Lemma~\ref{subsection_W2tGreatSqrt2t} first. Then we prove Lemma~\ref{lem_MaxH} in Lemma~\ref{subappendix_MaxH}.

\subsection{Lemma~\ref{lem_MaxHNotOnInternal} and Its Proof} \label{subappendix_MaxHNotOnInternal}

\begin{lemma} \label{lem_MaxHNotOnInternal}
	For all  $\Delta_1, \Delta_2 > 0$, $H(x, y; \Delta_1, \Delta_2)$ cannot attain a critical point in $\operatorname{int}\left( \Omega(\Delta_1, \Delta_2) \right)$. 
\end{lemma}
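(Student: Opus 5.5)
The plan is to compute both partial derivatives of $H$ on $\operatorname{int}(\Omega(\Delta_1,\Delta_2))$ explicitly and show that the system $\partial_x H=\partial_y H=0$ forces an identity that is impossible. Fix an interior point and abbreviate
\[
a=w_2(x),\quad b=w_2(y),\quad u=2\Delta_1-x,\quad v=2\Delta_2-y,\quad S=\sqrt{bu}+\sqrt{v}.
\]
In the interior we have $a,b>1$ and $u,v>0$, so every expression below is smooth and all square roots are positive. The only tool needed is the derivative formula $w_2'(t)=w_2(t)/(w_2(t)-1)$ for $t>0$, already used in the proof of Lemma~\ref{lem_IeqOfW2(x/2)}.

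\textbf{Computing the partials.} From the definitions of $F$ and $G$,
\begin{align*}
\partial_x F &= \frac{a(b-1)}{a-1}+1, & \partial_y F &= \frac{(a-1)b}{b-1}+1,\\
\partial_x G &= -S\sqrt{\frac{b}{u}}, & \partial_y G &= S\left(\frac{\sqrt{ub}}{b-1}-\frac{1}{\sqrt v}\right).
\end{align*}
For $\partial_y G$, differentiate $\sqrt{b\,u}$ in $y$ using $b'=b/(b-1)$. Since $H=\tfrac12(F+G)$, a critical point means $\partial_x F=-\partial_x G$ and $\partial_y F=-\partial_y G$.

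\textbf{Eliminating variables.} The first equation gives
\[
S=\sqrt{u/b}\,\Bigl[\tfrac{a(b-1)}{a-1}+1\Bigr].
\]
Substitute $S=\sqrt{bu}+\sqrt v$ and use the simplification $\frac{a(b-1)}{a-1}+1-b=\frac{b-1}{a-1}$. This yields
\[
\sqrt v=\sqrt{u/b}\,\frac{b-1}{a-1},\qquad\text{equivalently}\qquad \sqrt{bu/v}=\frac{b(a-1)}{b-1}.
\]
Next write $S/\sqrt v=\sqrt{bu/v}+1$. Using the expression for $S$,
\[
S\sqrt{ub}/(b-1)=u\Bigl[\tfrac{a}{a-1}+\tfrac{1}{b-1}\Bigr].
\]
Inserting both into the second equation gives
\[
\frac{(a-1)b}{b-1}+1=\frac{b(a-1)}{b-1}+1-u\Bigl[\frac{a}{a-1}+\frac{1}{b-1}\Bigr],
\]
that is, $u\bigl[\frac{a}{a-1}+\frac{1}{b-1}\bigr]=0$. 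This contradicts $u>0$ and $a,b>1$. Hence no interior critical point exists.

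The main obstacle is bookkeeping rather than any conceptual step. The sign of $\partial_y G$ is easy to get wrong: it has a positive part coming from $b$ increasing in $y$ and a negative part coming from $v$ decreasing in $y$, and a sign slip would produce a solvable equation instead of a contradiction. So I would double-check $\partial_y G$ carefully, and also verify that the elimination step never divides by zero. The latter holds because $a-1$, $b-1$, $u$ and $v$ are all strictly positive in the interior.
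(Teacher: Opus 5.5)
Your proof is correct and takes essentially the same approach as the paper. Both compute the partials of $H$ explicitly using $w_2'=w_2/(w_2-1)$, and both reduce $\partial_x H=0$ to the same relation $\sqrt{u}/(a-1)=\sqrt{bv}/(b-1)$ before reaching a sign contradiction. The only difference is the last elimination step: the paper divides the two critical-point equations to get $u/(a-1)=-v/(b-1)$, while you substitute to get $u(ab-1)/\bigl((a-1)(b-1)\bigr)=0$, which also avoids the paper's unjustified division by $w_2(y)-1-(2\Delta_2-y)$.
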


\begin{IEEEproof}
    We only need prove that following equations has no solution in $\operatorname{int}\left( \Omega(\Delta_1, \Delta_2) \right)$.
    \begin{equation*}
		\left\lbrace 
		\begin{aligned}
			\frac{\partial H(x, y; \Delta_1, \Delta_2)}{\partial x} = 0 \\
			\frac{\partial H(x, y; \Delta_1, \Delta_2)}{\partial y} = 0
		\end{aligned}
		\right. 
	\end{equation*}
    
	First, we have
	\begin{align*}
		& H(x, y; \Delta_1, \Delta_2) \\
		=& \frac{1}{2} \left[ F(x, y) + G(x, y; \Delta_1, \Delta_2) \right] \\
		=& \left[ w_2(x) - 1 \right] \left[ w_2(y) - 1 \right] + x + y + w_2(y) \, (2\Delta_1 - x) \\
		& + (2\Delta_2 - y) + 2 \sqrt{w_2(y) \, (2\Delta_2 - y) \, (2\Delta_1 - x) } \\
		=& \left[ w_2(x) - 1 \right] \left[ w_2(y) - 1 \right] + x + w_2(y) \, (2\Delta_1 - x) \\
		& + 2\Delta_2 + 2 \sqrt{w_2(y) \, (2\Delta_2 - y) \, (2\Delta_1 - x) }
	\end{align*}
	Since for all $x > 0$,
	\begin{equation*}
		w_{2}^{\prime}(x) = \frac{ w_{2}(x) }{ w_{2}(x) - 1}
	\end{equation*}
	then for all $(x,y) \in \operatorname{int}\left( \Omega(\Delta_1, \Delta_2) \right)$, it hold that 
	\begin{equation*}
		\begin{aligned}
			& \frac{\partial H(x, y; \Delta_1, \Delta_2) }{\partial x} \\
			=& \frac{ w_{2}(x) }{ w_{2}(x) - 1} \left[ w_2(y) - 1 \right] + 1 - w_2(y) - \frac{\sqrt{w_2(y) \, (2\Delta_2 - y)} }{ 2\Delta_1 - x } \\
			=& \frac{w_2(y) - 1}{w_2(x) - 1} - \frac{\sqrt{w_2(y) \, (2\Delta_2 - y)} }{\sqrt{2\Delta_1 - x}}
		\end{aligned}
	\end{equation*}
	and
	\begin{equation*}
		\begin{aligned}
			& \frac{\partial H(x, y; \Delta_1, \Delta_2) }{\partial y} \\
			=& \left[ w_2(x) - 1 \right] \frac{ w_{2}(y) }{ w_{2}(y) - 1} + \frac{ w_{2}(y) }{ w_{2}(y) - 1} (2\Delta_1 - x) \\
			& + \frac{\sqrt{2\Delta_1 - x}}{\sqrt{w_2(y) \, (2\Delta_2 - y)}} \left[ \frac{ w_{2}(y) }{ w_{2}(y) - 1} (2\Delta_2 - y) - w_2(y) \right] \\
			=& \frac{w_{2}(y)}{w_{2}(y) - 1}\left[ w_2(x) - 1 + (2\Delta_1 - x)\right] \\
			&- \frac{\sqrt{2\Delta_1 - x}}{\sqrt{w_2(y) \, (2\Delta_2 - y)}} \frac{w_{2}(y)}{w_{2}(y) - 1} \left[ w_2(y) - 1 - (2\Delta_2 - y)\right]
		\end{aligned}
	\end{equation*}
    
	Setting $\frac{\partial H(x, y; \Delta_1, \Delta_2) }{\partial x} = 0$ yields
	\begin{equation} \label{eq_partHpartx}
		\frac{\sqrt{2\Delta_1 - x}}{w_2(x) - 1} = \frac{\sqrt{w_2(y) \, (2\Delta_2 - y)}}{w_2(y) - 1}
	\end{equation}
	Setting $\frac{\partial H(x, y; \Delta_1, \Delta_2) }{\partial y} = 0$ yields
	\begin{equation} \label{eq_partHparty}
		\frac{\sqrt{2\Delta_1 - x}}{w_2(x) - 1 + (2\Delta_1 - x)} = \frac{\sqrt{w_2(y) \, (2\Delta_2 - y)}}{w_2(y) - 1 - (2\Delta_2 - y)}
	\end{equation}
	Combining \eqref{eq_partHpartx} and \eqref{eq_partHparty} gives
	\begin{equation*}
		\frac{w_2(x) - 1}{w_2(x) - 1 + (2\Delta_1 - x)} = \frac{w_2(y) - 1}{w_2(y) - 1 - (2\Delta_2 - y)}
	\end{equation*}
	which implies
	\begin{equation*}
		\frac{1}{1 +\dfrac{2\Delta_1 - x}{w_2(x) - 1}} = \frac{1}{1 + \dfrac{-(2\Delta_2 - y)}{w_2(y) - 1}}
	\end{equation*}
	Since $x \in (0, 2\Delta_1)$, it holds that $\dfrac{2\Delta_1 - x}{w_2(x) - 1} > 0$; meanwhile, since $y \in (0, 2\Delta_2)$, we have $\dfrac{-(2\Delta_2 - y)}{w_2(y) - 1} < 0$. Hence, the above equality cannot hold, and there is no critical point in $\operatorname{int}\left( \Omega(\Delta_1, \Delta_2) \right)$. The lemma is proved.
\end{IEEEproof}

\subsection{Lemma~\ref{lem_W2tGreatSqrt2t} and Its Proof} \label{subsection_W2tGreatSqrt2t}

\begin{lemma} \label{lem_W2tGreatSqrt2t}
	$w_2(t) - 1 > \sqrt{2t}$ holds for all $t > 0$.
\end{lemma}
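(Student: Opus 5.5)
The plan is to change variables so that the implicit function $w_2$ disappears, which turns the claim into an elementary inequality for the logarithm. Fix $t>0$ and set $w = w_2(t)$. By the lemma characterizing $w_1, w_2$, $w$ is the larger root of $x - \log x = 1 + t$, so $w \geq 1$. Moreover $w \neq 1$: at $x=1$ we have $f(1) = 1 < 1 + t$. Hence $w > 1$, and writing $u = w - 1 > 0$ the defining equation becomes
\begin{equation*}
t = u - \log(1+u).
\end{equation*}
Since both sides of the desired inequality are positive, the claim $w_2(t) - 1 > \sqrt{2t}$ is equivalent to $u^2 > 2t$. Substituting the expression for $t$, this reads
\begin{equation*}
\log(1+u) > u - \frac{u^2}{2}.
\end{equation*}

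To prove this inequality I will define $\phi(u) = \log(1+u) - u + \frac{u^2}{2}$ for $u \geq 0$. Then $\phi(0) = 0$, and
\begin{equation*}
\phi'(u) = \frac{1}{1+u} - 1 + u = \frac{u^2}{1+u}.
\end{equation*}
This derivative is strictly positive for $u > 0$, so $\phi$ is strictly increasing on $[0,\infty)$. Therefore $\phi(u) > 0$ for every $u > 0$, which gives $u^2 > 2t$ and hence $w_2(t) - 1 > \sqrt{2t}$.

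There is no genuine obstacle. The only point needing care is the first step: confirming that $w_2(t) > 1$ strictly for $t > 0$, so that $u > 0$ and the strict inequality survives. This follows because $f(x) = x - \log x$ attains its minimum value $1$ only at $x = 1$.
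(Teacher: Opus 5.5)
Your proof is correct and is essentially the paper's argument. The paper sets $x = w_2(t) > 1$, reduces the claim to $r(x) = \frac{1}{2}(x-1)^2 - (x - \ln x - 1) > 0$, and shows $r'(x) = (x-1)^2/x > 0$; your $\phi(u)$ is exactly $r(1+u)$, with $\phi'(u) = r'(1+u)$, so the only difference is the cosmetic shift $u = x - 1$.
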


\begin{IEEEproof}
	Let $x = w_2(t)$. Since $t > 0$, it follows from the definition of $w_2(t)$ that $x > 1$ and
	\begin{equation*}
		t = x - \ln x - 1
	\end{equation*}
	Then we have: 
	\begin{equation*}
		\begin{aligned}
			& w_2(t) - 1 > \sqrt{2t} ~ \text{holds for all } t > 0 \\
			\iff\ & x - 1 > \sqrt{2(x - \ln x - 1)} ~ \text{holds for all } x > 1 \\
			\iff\ & \frac{1}{2}(x - 1)^2 - (x - \ln x - 1) > 0 ~ \text{holds for all } x > 1
		\end{aligned}
	\end{equation*}
	Define $r(x) = \frac{1}{2}(x - 1)^2 - (x - \ln x - 1)$ for $x > 0$. Then
	\begin{equation*}
		r'(x) = x - 1 - \left(1 - \frac{1}{x}\right) = x + \frac{1}{x} - 2
	\end{equation*}
	Since $x + \frac{1}{x} - 2 = \frac{(x - 1)^2}{x} > 0$ for all $x > 1$, the function $r(x)$ is strictly increasing on $(1, \infty)$. Consequently, for all $x > 1$ we have $r(x) > r(1) = 0$, which establishes the desired inequality. The lemma is proved.
\end{IEEEproof}

\subsection{Proof of Lemma~\ref{lem_MaxH}} \label{subappendix_MaxH}

\begin{IEEEproof}
	For notational convenience we henceforth write $\Omega$ in place of $\Omega(\Delta_1, \Delta_2)$

    Since $H(x, y; \Delta_1, \Delta_2)$ is a continuous function on the compact set $\Omega$, it attains its maximum at some point $(x^*, y^*) \in \Omega$. We denote the maximum value by
    \begin{equation*}
	   H^*(\Delta_1, \Delta_2) = \max_{(x,y) \in \Omega} H(x, y; \Delta_1, \Delta_2)
    \end{equation*}

    Meanwhile, since $H(x, y; \Delta_1, \Delta_2)$ is continuously differentiable in $\operatorname{int}\left( \Omega\right)$, a necessary condition for a point $(x, y) \in \operatorname{int}\left( \Omega\right)$ to be a local maximum is that $(x, y)$ is a critical point. By Lemma~\ref{lem_MaxHNotOnInternal}, $H(x, y; \Delta_1, \Delta_2)$ cannot attain a maximum on $\operatorname{int}\left( \Omega\right)$. That is,
    \begin{equation} \label{ieq_HInt<Bound}
	   H(x, y; \Delta_1, \Delta_2) < H^*(\Delta_1, \Delta_2), \text{for all} (x, y) \in \operatorname{int}\left( \Omega \right)
    \end{equation}
    
    Then the maximum of $H$ must be attained on the boundary $\partial \Omega = \Omega \setminus \operatorname{int}(\Omega)$, which satisfies
    \begin{equation*}
	   \partial \Omega = \partial \Omega_{L} \cup \partial \Omega_{U} \cup \partial \Omega_{R} \cup \partial \Omega_{D} \cup \{(2\Delta_1, 2\Delta_2)\}
    \end{equation*}
    where 
\begin{equation*}
	\left\lbrace 
	\begin{aligned}
		\partial \Omega_{L} &= \{0\} \times (0, 2\Delta_2) \\
		\partial \Omega_{U} &= [0, 2\Delta_1) \times \{2\Delta_2\} \\
		\partial \Omega_{R} &= \{2\Delta_1\} \times [0, 2\Delta_2) \\
		\partial \Omega_{D} &= [0, 2\Delta_1) \times \{0\}
	\end{aligned}
	\right.
\end{equation*}
	
	Consider $(x, y) \in \partial \Omega_{L}$ first.  
	From
	\begin{equation*}
		\begin{aligned}
			& \frac{\partial H(x, y; \Delta_1, \Delta_2)}{\partial x} \bigg|_{(x, y) = (0, y_0)} \\
			=& \frac{ w_{2}(x) }{ w_{2}(x) - 1} \left[ w_2(y) - 1 \right] + 1 - w_2(y) - \frac{\sqrt{w_2(y) \, (2\Delta_2 - y)} }{ 2\Delta_1 - x }
		\end{aligned}
	\end{equation*}
	it follows that for any $y_0 \in (0, 2\Delta_2)$,
	\begin{equation*}
		\lim_{x \rightarrow 0^{+}} \frac{\partial H(x, y; \Delta_1, \Delta_2)}{\partial x} = +\infty
	\end{equation*}
	Hence, for each $(x, y) \in \{0\} \times (0, 2\Delta_2)$, there exists $x_0 \in (0, 2\Delta_1)$ such that
	\begin{equation*}
		\begin{aligned}
			H(0, y; \Delta_1, \Delta_2) &< H(x_0, y; \Delta_1, \Delta_2) \\
			&< H^*(\Delta_1, \Delta_2) \quad \text{(by~\eqref{ieq_HInt<Bound})}
		\end{aligned}
	\end{equation*}
	Therefore, the maximum of $H$ cannot be attained on $\partial \Omega_{L}$.
	
	Next, consider $(x, y) \in \partial \Omega_{U}$.  
	For such points,
	\begin{equation*}
		\begin{aligned}
			H(x, 2\Delta_2; \Delta_1, \Delta_2) =& \left[ w_2(x) - 1 \right] \left[ w_2(2\Delta_2) - 1 \right] \\
			& + x + w_2(2\Delta_2) \, (2\Delta_1 - x) + 2\Delta_2
		\end{aligned}
	\end{equation*}
	Differentiating with respect to $x$ yields
	\begin{equation*}
		\begin{aligned}
			&\frac{d H(x, 2\Delta_2; \Delta_1, \Delta_2)}{dx} \\
			=& \left[ w_2(2\Delta_2) - 1 \right] \frac{w_2(x)}{w_2(x) - 1} + 1 - w_2(2\Delta_2) \\
			=& \frac{w_2(2\Delta_2) - 1}{w_2(x) - 1} > 0
		\end{aligned}
	\end{equation*}
	Thus, $H(x, 2\Delta_2; \Delta_1, \Delta_2) < H(2\Delta_1, 2\Delta_2; \Delta_1, \Delta_2)$ for all $(x, y) \in \partial \Omega_{U}$,  
	so the maximum cannot be attained on $\partial \Omega_{U}$.
	
	Now let $(x, y) \in \partial \Omega_{R}$.  
	Then
	\begin{equation*}
		\begin{aligned}
			&H(2\Delta_1, y; \Delta_1, \Delta_2) \\
			=& \left[ w_2(2\Delta_1) - 1 \right] \left[ w_2(y) - 1 \right] + 2\Delta_1 + 2\Delta_2
		\end{aligned}
	\end{equation*}
	which is strictly increasing in $y$. Consequently,
	\begin{equation*}
		H(2\Delta_1, y; \Delta_1, \Delta_2) < H(2\Delta_1, 2\Delta_2; \Delta_1, \Delta_2)
	\end{equation*}
	with equality only at $y = 2\Delta_2$. Hence, the maximum is not attained on $\partial \Omega_{R}$.
	
	Finally, consider $(x, y) \in \partial \Omega_{D}$.  
	In this case,
	\begin{equation*}
		\begin{aligned}
			H(x, 0; \Delta_1, \Delta_2) =& 2\Delta_1 + 2\Delta_2 + 2 \sqrt{2\Delta_2 \, (2\Delta_1 - x) }
		\end{aligned}
	\end{equation*}
	which is strictly decreasing in $x$. Therefore,
	\begin{equation*}
		H(x, 0; \Delta_1, \Delta_2) \leq H(0, 0; \Delta_1, \Delta_2)
	\end{equation*}
	Moreover,
	\begin{equation*}
		\begin{aligned}
			&H(0, 0; \Delta_1, \Delta_2) \\
			=& 2\Delta_1 + 2\Delta_2 + 2 \sqrt{ 2\Delta_2 \, 2\Delta_1} \\
			=& 2\Delta_1 + 2\Delta_2 + \sqrt{2 \cdot (2\Delta_2)} \, \sqrt{2 \cdot (2\Delta_1)} \\
			<& 2\Delta_1 + 2\Delta_2 + \left[ w_2(2\Delta_1) - 1 \right] \left[ w_2(2\Delta_2) - 1 \right] \, \text{(by Lemma~\ref{lem_W2tGreatSqrt2t})} \\
			=& H(2\Delta_1, 2\Delta_2; \Delta_1, \Delta_2)
		\end{aligned}
	\end{equation*}
	Thus, the maximum cannot be attained on $\partial \Omega_{D}$.
	
	The only remaining candidate is the corner point $(2\Delta_1, 2\Delta_2)$.  
	Hence, $H$ attains its maximum precisely at this point, and we conclude that
	\begin{equation*}
		H^*(\Delta_1, \Delta_2) = H(2\Delta_1, 2\Delta_2; \Delta_1, \Delta_2) = \frac{1}{2} F(2\Delta_1, 2\Delta_2)
	\end{equation*}
	with equality if and only if $(x, y) = (2\Delta_1, 2\Delta_2)$.  
	This completes the proof.
\end{IEEEproof}

\section{Proof of Lemma~\ref{lem_KLSupremumForNormal}} \label{appendix_KLSupremumForNormal}

\begin{IEEEproof}
	Applying \eqref{eq_closedFormKLForGauss}, the condition $\KL\left(\N{N}_1 \,\|\, \N{N}(\vect{0}, \bm{I})\right) = \Delta_1 $ is equivalent to:
	\begin{equation} \label{eq_KL1Restriction}
		-\log|\bm{\Sigma}_1| + \operatorname{Tr}(\bm{\Sigma}_1) - n + \vect{\mu}_1^\top \vect{\mu}_1 = 2 \Delta_1 
	\end{equation}
	By the original equations (C.25) and (C.28) in \cite[Appendix C]{zhang2023PropertiesKullbackleiblerDivergence}, we have $-\log|\bm{\Sigma}_1| + \operatorname{Tr}(\bm{\Sigma}_1) - n \geq 0$. Combined with $\vect{\mu}_1^\top \vect{\mu}_1 \geq 0$, we introduce an auxiliary variable $x = -\log|\bm{\Sigma}_1| + \operatorname{Tr}(\bm{\Sigma}_1) - n \in [0, 2\Delta_1]$ such that constraint~\eqref{eq_KL1Restriction} is equivalent to:
	\begin{equation*}
		\left\{\begin{aligned}
			-\log|\bm{\Sigma}_1| + \operatorname{Tr}(\bm{\Sigma}_1)  &= n + x \\
			\vect{\mu}_1^\top \vect{\mu}_1 &= 2 \Delta_1 - x 
		\end{aligned} \right.
	\end{equation*}
	Similarly, we have that the condition $\KL\left(\N{N}(\vect{0}, \bm{I}) \,\|\, \N{N}_2 \right) = \Delta_2 $ is equivalent to:
	\begin{equation*}
		\left\{\begin{aligned}
			-\log|\bm{\Sigma}_2 ^ {-1}| + \operatorname{Tr}(\bm{\Sigma}_2 ^ {-1} )  &= n + y \\
			\vect{\mu}_2^\top \bm{\Sigma}_2 ^ {-1} \vect{\mu}_2 &= 2 \Delta_2 - y 
		\end{aligned} \right.
	\end{equation*}
	where the introduced auxiliary variable $y \in [0, 2\Delta_2]$.
	
	Combining~\eqref{eq_closedFormKLForGauss}, maximizing $\KL\left(\N{N}_1 \,\|\, \N{N}_2 \right)$ is equivalent to solving the optimization Problem~\ref{question_normalgaussTrans} stated in the main text.
	
	For any fixed $(x_{0},y_{0}) \in \Omega(\Delta_1, \Delta_2)$,  Problem~\ref{question_normalgaussTrans} can be decomposed into two subproblems coupled only through the decision variables $\bm{\Sigma}_2$, one involving $\vect{\mu_1}$, $\vect{\mu_2}$, and $\bm{\Sigma}_2$, denoted as Problem~\ref{problem_mu}, and the other involving $\bm{\Sigma}_1$ and $\bm{\Sigma}_2$, denoted as Problem~\ref{problem_sigma}.
	
	For the Problem~\ref{problem_mu}, applying Lemma~\ref{lem_MaxMuFun} we have : 
	\begin{equation} \label{ieq_Mu}
		\begin{aligned}
			&(\vect{\mu}_2 - \vect{\mu}_1)^\top \bm{\Sigma}_2^{-1} (\vect{\mu}_2 - \vect{\mu}_1) \\
			\leq& \left( \sqrt{w_2(y_{0}) (2 \Delta_1 - x_{0})} + \sqrt{2 \Delta_2 - y_{0}} \right)^2 \\
			=& G(x_{0},y_{0}; \Delta_1, \Delta_2) \text{~(By the definition of $G(x,y; \Delta_1, \Delta_2)$)}
		\end{aligned}
	\end{equation}
	Equality holds if
	\begin{equation*}
		\left\{ 
		\begin{aligned}
			\vect{\mu}_1 &= x_{1} \vect{e}_{1} \\
			\vect{\mu}_2 &= y_{1} \vect{e}_{1} \\
			\bm{\Sigma}_2^{-1} &= \bm{Q} \operatorname{diag} \left(w_2(y_{0}), 1, \dots, 1 \right) \bm{Q}^{\top}
		\end{aligned} 
		\right.
	\end{equation*}
	\textit{i.e.}
	\begin{equation*}
		\left\{ 
		\begin{aligned}
			\vect{\mu}_1 &= x_{1} \vect{e}_{1} \\
			\vect{\mu}_2 &= y_{1} \vect{e}_{1} \\
			\bm{\Sigma}_2 &= \bm{Q} \operatorname{diag} \left(w_2(y_{0})^{-1}, 1, \dots, 1 \right) \bm{Q}^{\top}
		\end{aligned} 
		\right.
	\end{equation*}
	where $(x_1, y_1) = \pm \left(  \sqrt{\varepsilon_1}, - \sqrt{\frac{\varepsilon_2}{w_2(y_{0})}} \right)$, $\bm{Q}$ is an orthogonal matrix and $\vect{e}_{1}$ is the first column of $\bm{Q}$.
    
	For the Problem~\ref{problem_sigma}, applying Lemma~\ref{lem_MaxSigmaFun} we have : 
	\begin{equation} \label{ieq_Sigma}
		-\log{|\bm{\Sigma}_2^{-1}||\bm{\Sigma}_1|} + \operatorname{Tr}(\bm{\Sigma}_2 ^ {-1} \bm{\Sigma}_1) - n \leq F(x_{0}, y_{0})
	\end{equation}
	Equality holds if and only if
	\begin{equation*}
		\left\{
		\begin{aligned}
			\bm{\Sigma}_1 &= \bm{Q} \operatorname{diag}\bigl(w_2(x_{0}), 1, \dots, 1\bigr) \bm{Q}^\top \\
			\bm{\Sigma}_2 &= \bm{Q} \operatorname{diag}\bigl(w_2(y_{0})^{-1}, 1, \dots, 1\bigr) \bm{Q}^\top
		\end{aligned}
		\right.
	\end{equation*}
	where $\bm{Q}$ is an orthogonal matrix.
	
	Since for any $(x_{0},y_{0}) \in \Omega(\Delta_1, \Delta_2)$, both conditions prescribe the same structure for the coupled matrix $\bm{\Sigma}_2$, they are simultaneously satisfiable.
	Consequently,
	\begin{equation*} 
		\begin{aligned}
			&\KL\left(\N{N}_1 \,\|\, \N{N}_2 \right) \\
			=& \frac{1}{2} \Bigl( -\log{|\bm{\Sigma}_2^{-1}||\bm{\Sigma}_1|} + \operatorname{Tr}(\bm{\Sigma}_2 ^ {-1} \bm{\Sigma}_1) - n \\
			& + (\vect{\mu_2} - \vect{\mu_1})^\top \bm{\Sigma}_2 ^ {-1} (\vect{\mu_2} - \vect{\mu_1}) \Bigr) \\
			\leq& \frac{1}{2} \left( F(x_{0},y_{0}) + G(x_{0},y_{0}; \Delta_1, \Delta_2) \right) ~ \text{(by \eqref{ieq_Sigma} and \eqref{ieq_Mu})}\\
			=& H(x_{0},y_{0}; \Delta_1, \Delta_2) \text{~(By the definition of $H(x,y; \Delta_1, \Delta_2)$)} \\
			\leq& \frac{1}{2} F(2\Delta_1, 2\Delta_2) ~ \text{(by Lemma~\ref{lem_MaxH})}\\
		\end{aligned}
	\end{equation*} 
	where the second inequality holds with equality if and only if $(x_{0},y_{0}) = (2\Delta_1, 2\Delta_2)$. In this case,
	\begin{equation*}
		\left\{
		\begin{aligned}
			\vect{\mu}_1^\top \vect{\mu}_1 &= 2 \Delta_1 - x_{0} = 0 \\
			\vect{\mu}_2^\top \bm{\Sigma}_2 ^ {-1} \vect{\mu}_2 &= 2 \Delta_2 - y_{0} = 0
		\end{aligned}
		\right.
		\quad \Leftrightarrow \quad
		\vect{\mu}_1 = \vect{\mu}_2 = \vect{0}
	\end{equation*}
	and consequently
	\begin{equation*}
		(\vect{\mu_2} - \vect{\mu_1})^\top \bm{\Sigma}_2 ^ {-1} (\vect{\mu_2} - \vect{\mu_1}) = G(x_{0},y_{0}; \Delta_1, \Delta_2) = 0
	\end{equation*}
	so equality in \eqref{ieq_Mu} holds automatically; meanwhile, the inequality \eqref{ieq_Sigma} becomes an equality if and only if
	\begin{equation*}
		\left\{
		\begin{aligned}
			\bm{\Sigma}_1 &= \bm{Q} \operatorname{diag}\bigl(w_2(2\Delta_1), 1, \dots, 1\bigr) \bm{Q}^\top \\
			\bm{\Sigma}_2 &= \bm{Q} \operatorname{diag}\bigl(w_2(2\Delta_2)^{-1}, 1, \dots, 1\bigr) \bm{Q}^\top
		\end{aligned}
		\right.
	\end{equation*}
	where $\bm{Q}$ is an orthogonal matrix.
	Thus, equality in the overall bound is achieved if and only if
	\begin{equation*}
		\left\{
		\begin{aligned}
			\vect{\mu}_1 &= \vect{\mu}_2 = \vect{0} \\
			\bm{\Sigma}_1 &= \bm{Q} \operatorname{diag}\bigl(w_2(2\Delta_1), 1, \dots, 1\bigr) \bm{Q}^\top \\
			\bm{\Sigma}_2 &= \bm{Q} \operatorname{diag}\bigl(w_2(2\Delta_2)^{-1}, 1, \dots, 1\bigr) \bm{Q}^\top
		\end{aligned}
		\right.
	\end{equation*}
	where $\bm{Q}$ is an orthogonal matrix.
	This completes the proof.
\end{IEEEproof}

\section{Proof of Theorem~\ref{theorem_KLSupremumForGeneral}} \label{section_KLSupremumForGeneral}

The proof of Theorem~\ref{theorem_KLSupremumForGeneral} is the same as the original Theorem 4 in \cite{zhang2023PropertiesKullbackleiblerDivergence}. Here we present the proof in the following for integrity.
The proof needs the following  Proposition~\ref{proposition_DiffPreservesKLDivergence} \cite{nielsen2020ElementaryIntroductionInformation} and Lemma~\ref{lem_MultivariateNormalDistribution} \cite{chatfield2018IntroductionMultivariateAnalysis}.
\begin{proposition} \cite{nielsen2020ElementaryIntroductionInformation} \label{proposition_DiffPreservesKLDivergence}
	Let $z = f(x)$ be a diffeomorphism, $X_1 \sim p_X$ and $X_2 \sim q_X$ be two random variables and $Z_1 = f(X_1) \sim p_Z$, $Z_2 = f(X_2) \sim q_Z$. Then $KL(p_X \| q_X) = KL(p_Z \| q_Z)$.
\end{proposition}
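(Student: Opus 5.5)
The plan is to prove the statement by a direct change of variables in the defining integral of the KL divergence. The key observation is that $p_Z$ and $q_Z$ are obtained from $p_X$ and $q_X$ through the \emph{same} map $f$. Consequently they pick up the \emph{same} Jacobian factor, and this factor cancels inside the logarithm.

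\textbf{Step 1: push-forward densities.} Since $f$ is a diffeomorphism, $f^{-1}$ is continuously differentiable. Writing $J_{f^{-1}}(\vect{z})$ for its Jacobian matrix, the change-of-variables formula for densities gives
\begin{equation*}
p_Z(\vect{z}) = p_X\bigl(f^{-1}(\vect{z})\bigr)\,\bigl|\det J_{f^{-1}}(\vect{z})\bigr|, \qquad q_Z(\vect{z}) = q_X\bigl(f^{-1}(\vect{z})\bigr)\,\bigl|\det J_{f^{-1}}(\vect{z})\bigr| .
\end{equation*}
The Jacobian determinant never vanishes, because $f^{-1}$ is itself a diffeomorphism. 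Hence, wherever the densities are positive,
\begin{equation*}
\frac{p_Z(\vect{z})}{q_Z(\vect{z})} = \frac{p_X(f^{-1}(\vect{z}))}{q_X(f^{-1}(\vect{z}))} .
\end{equation*}

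\textbf{Step 2: substitute in the KL integral.} Write
\begin{equation*}
\KL(p_Z\|q_Z)=\int p_Z(\vect{z})\log\frac{p_Z(\vect{z})}{q_Z(\vect{z})}\,d\vect{z} .
\end{equation*}
Insert the expression for $p_Z$ and the ratio identity from Step 1. Then substitute $\vect{z}=f(\vect{x})$, so that $d\vect{z} = |\det J_f(\vect{x})|\,d\vect{x}$. Because $J_{f^{-1}}(f(\vect{x})) = J_f(\vect{x})^{-1}$, the two determinants multiply to $1$. The integral therefore becomes exactly $\int p_X(\vect{x})\log\frac{p_X(\vect{x})}{q_X(\vect{x})}\,d\vect{x}=\KL(p_X\|q_X)$.

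\textbf{Step 3: degenerate cases.} The main obstacle is not the computation but making the argument hold when the integrand is not absolutely integrable, or when $\KL=+\infty$.

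\emph{Absolute continuity.} Null sets are preserved by diffeomorphisms, so $p_X\ll q_X$ if and only if $p_Z\ll q_Z$. Thus both divergences are infinite simultaneously.

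\emph{Finite case.} When both divergences are finite, I split the integrand into its positive and negative parts. Change of variables applies to each nonnegative part separately (Tonelli-type justification), so no integrability assumption is needed beyond what the definition uses.

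\emph{Alternative route.} A cleaner backup argument is to invoke the data processing inequality twice. Applying it with $f$ gives $\KL(p_Z\|q_Z)\le \KL(p_X\|q_X)$, and applying it with $f^{-1}$ gives the reverse inequality. This route sidesteps all measurability bookkeeping. I would mention it as an alternative if the direct computation becomes awkward in the infinite case.
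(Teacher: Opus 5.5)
The paper does not prove this proposition; it only cites \cite{nielsen2020ElementaryIntroductionInformation}, so there is no internal argument to compare with. Your change-of-variables proof is correct and is the standard argument for this invariance. The key point is that $p_Z$ and $q_Z$ pick up the same factor $|\det J_{f^{-1}}|$, so the likelihood ratio is transported unchanged, and the substitution $\vect{z}=f(\vect{x})$ cancels the remaining Jacobian.

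One small gap is in the case split of Step~3. ``Not absolutely continuous, hence both infinite'' versus ``both finite'' leaves out the case $p_X\ll q_X$ with $\KL(p_X\|q_X)=+\infty$. The tool you already name closes it without any finiteness assumption. The negative part of $p\log(p/q)$ always has integral at most $\int (q-p)^+\le 1$, since $\log t\le t-1$. The change-of-variables formula for nonnegative functions then equates the positive parts as elements of $[0,+\infty]$. So the two divergences agree in every case. With the convention $p\log(p/0)=+\infty$, this also absorbs the non-absolutely-continuous case, so that step is not strictly needed.

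Your data-processing alternative is correct and more general. It needs only a measurable bijection with measurable inverse, and no densities or Jacobians.

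For the only place the paper uses the proposition, the appendix proof of Theorem~\ref{theorem_KLSupremumForGeneral} (Appendix~\ref{section_KLSupremumForGeneral}), you can bypass it entirely. There the map is affine, $\vect{x}\mapsto A\vect{x}+\vect{b}$, and the distributions are Gaussian. Substituting $\vect{\mu}_i\mapsto A\vect{\mu}_i+\vect{b}$ and $\bm{\Sigma}_i\mapsto A\bm{\Sigma}_iA^\top$ into \eqref{eq_closedFormKLForGauss} leaves every term unchanged:
\begin{itemize}
\item the determinant product is unchanged, because the factors of $|\det A|$ cancel;
\item the trace is unchanged, because the new matrix $A^{-\top}\bm{\Sigma}_2^{-1}\bm{\Sigma}_1A^{\top}$ is similar to $\bm{\Sigma}_2^{-1}\bm{\Sigma}_1$;
\item the Mahalanobis term is unchanged, by direct cancellation.
\end{itemize}
That check is purely algebraic and avoids all measure-theoretic bookkeeping, whereas your argument gives the general statement as quoted.
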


\begin{lemma}\cite{chatfield2018IntroductionMultivariateAnalysis} \label{lem_MultivariateNormalDistribution}
	Let $\vect{X} \in \mathbb{R}^n$ be a random vector such that
	\[
	\vect{X} \sim \mathcal{N}(\vect{\mu}, \bm{\Sigma})
	\]
	where $\vect{\mu} \in \mathbb{R}^n$ and 
	$\bm{\Sigma} \in \mathbb{R}^{n \times n}$ is a symmetric positive definite covariance matrix.
	The invertible linear transformation $\vect{T}$ is defined as:
	\begin{equation*}
		\vect{T}: \vect{X} \mapsto \vect{X}^{'} = A \vect{X} + \vect{b}
	\end{equation*}
	where $A \in \mathbb{R}^{n \times n}$ is an invertible matrix and $\vect{b} \in \mathbb{R}^n$.
	Then $\vect{X}^{'} = \vect{T}( \vect{X} )$ follows an $n$-dimensional multivariate normal distribution:
	\[
	\vect{X}^{'} \sim \N{N}(A \vect{\mu} + \vect{b},\; A \bm{\Sigma} A^\top)
	\]
	where $A \bm{\Sigma} A^\top$ is a positive definite matrix.
\end{lemma}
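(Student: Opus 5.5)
The statement immediately above is Lemma~\ref{lem_MultivariateNormalDistribution}: an invertible affine map sends $\N{N}(\vect{\mu},\bm{\Sigma})$ to $\N{N}(A\vect{\mu}+\vect{b}, A\bm{\Sigma}A^\top)$. I plan to prove it with the change-of-variables formula for densities. This is self-contained and avoids characteristic functions.

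\textbf{Step 1: write down the density of $\vect{X}'$.} The map $\vect{T}(\vect{x})=A\vect{x}+\vect{b}$ is a diffeomorphism of $\mathbb{R}^n$. Its inverse is $\vect{x}=A^{-1}(\vect{x}'-\vect{b})$, with constant Jacobian determinant $|A|^{-1}$. The density of $\vect{X}'$ is therefore
\[
p'(\vect{x}') = p\bigl(A^{-1}(\vect{x}'-\vect{b})\bigr)\,\bigl|\det A\bigr|^{-1},
\]
where $p$ is the Gaussian density $(2\pi)^{-n/2}|\bm{\Sigma}|^{-1/2}\exp\bigl(-\tfrac12(\vect{x}-\vect{\mu})^\top\bm{\Sigma}^{-1}(\vect{x}-\vect{\mu})\bigr)$.

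\textbf{Step 2: rewrite the exponent.} Note that $A^{-1}(\vect{x}'-\vect{b})-\vect{\mu}=A^{-1}(\vect{x}'-A\vect{\mu}-\vect{b})$. The quadratic form then becomes
\[
(\vect{x}'-\vect{\mu}')^\top A^{-\top}\bm{\Sigma}^{-1}A^{-1}(\vect{x}'-\vect{\mu}'),
\]
with $\vect{\mu}'=A\vect{\mu}+\vect{b}$. Since $A^{-\top}\bm{\Sigma}^{-1}A^{-1}=(A\bm{\Sigma}A^\top)^{-1}$, this is exactly the Gaussian quadratic form with mean $\vect{\mu}'$ and covariance $A\bm{\Sigma}A^\top$.

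\textbf{Step 3: match the normalizing constant.} Multiplicativity of determinants gives $|A\bm{\Sigma}A^\top|^{1/2}=|\det A|\,|\bm{\Sigma}|^{1/2}$. The constant $(2\pi)^{-n/2}|\bm{\Sigma}|^{-1/2}|\det A|^{-1}$ is therefore the correct normalizer for $\N{N}(\vect{\mu}',A\bm{\Sigma}A^\top)$.

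\textbf{Step 4: positive definiteness of the new covariance.} Symmetry of $A\bm{\Sigma}A^\top$ is immediate. For $\vect{v}\neq\vect{0}$ we have $\vect{v}^\top A\bm{\Sigma}A^\top\vect{v}=(A^\top\vect{v})^\top\bm{\Sigma}(A^\top\vect{v})>0$, because $A^\top\vect{v}\neq\vect{0}$ by invertibility of $A$.

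There is no real obstacle here. The only point needing care is the determinant and absolute-value bookkeeping in the Jacobian factor, handled in Steps 1 and 3.
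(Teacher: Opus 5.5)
Your proof is correct and complete. The inverse map has a constant Jacobian. The identity $A^{-\top}\bm{\Sigma}^{-1}A^{-1}=(A\bm{\Sigma}A^\top)^{-1}$ turns the exponent into the Gaussian quadratic form with mean $A\vect{\mu}+\vect{b}$. The identity $\det(A\bm{\Sigma}A^\top)=(\det A)^2|\bm{\Sigma}|$ fixes the normalizer, and Step 4 gives positive definiteness.

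The paper does not prove this lemma at all. It imports it from Chatfield's textbook, so there is no argument of its own to compare against. The usual textbook route is via characteristic functions:
$\varphi_{\vect{X}'}(\vect{t})=e^{i\vect{t}^\top\vect{b}}\,\varphi_{\vect{X}}(A^\top\vect{t})=\exp\bigl(i\vect{t}^\top(A\vect{\mu}+\vect{b})-\tfrac12\vect{t}^\top A\bm{\Sigma}A^\top\vect{t}\bigr)$.
That route gives the distributional identity without using invertibility of $A$ or positive definiteness of $\bm{\Sigma}$, so it also covers singular or non-square $A$. It still needs your Step 4 for the positive-definiteness claim. Your density computation uses both hypotheses. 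In exchange, it is fully elementary and produces the explicit density. That is all the paper needs, since it only applies the lemma with $A=\bm{B}_2^{-1}$ and $\vect{b}=-\bm{B}_2^{-1}\vect{\mu}_2$.

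Incidentally, your formula with this $A$ gives $\bm{\Sigma}_1'=\bm{B}_2^{-1}\bm{\Sigma}_1\bm{B}_2^{-\top}$. The expression $\bm{B}_2^{-1}\bm{\Sigma}_1\bm{B}_2^{\top}$ displayed in the proof of Theorem~\ref{theorem_KLSupremumForGeneral} is a typo. The paper's final equality conditions are nonetheless consistent with the correct form.
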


\begin{IEEEproof}
	Introduce random vectors $\vect{X}_i \sim \N{N}_i$ for $i \in \{1,2,3\}$. Since $\bm{\Sigma}_2$ is a fixed positive definite matrix, there exists an invertible matrix $\bm{B}_2$ such that:
	\begin{equation*}
		\bm{\Sigma}_2 = \bm{B}_2 \bm{B}_2^{\top}
	\end{equation*}
	and the invertible linear transformation $\vect{T}$:
	\begin{equation*}
		\vect{T}: \vect{X} \mapsto \vect{X}^{'} = \vect{T}(\vect{X}) = \bm{B}_2^{-1} (\vect{X} - \vect{\mu}_2)
	\end{equation*}
	such that \cite{zhang2023PropertiesKullbackleiblerDivergence}:
	\begin{equation*}
		\vect{X}_2^{'} = \vect{T}(\vect{X}_2) \sim \N{N}(\vect{0},~\bm{I})
	\end{equation*}
	Applying transformation $\vect{T}$ to the random vectors $\vect{X}_i \sim \N{N}_i$ for $i \in \{1,3\}$, yielding:
	\begin{equation*}
		\left\{\begin{aligned}
			& \vect{X}_1^{'} \sim \N{N}(\bm{B}_2^{-1} (\vect{\mu}_1 - \vect{\mu}_2),~ \bm{B}_2^{-1} \bm{\Sigma}_1 \bm{B}_2^\top) = \N{N}(\bm{\mu}_1^{'}, \bm{\Sigma}_1^{'}) \\  
			& \vect{X}_3^{'} \sim \N{N}(\bm{B}_2^{-1} (\vect{\mu}_3 - \vect{\mu}_2),~ \bm{B}_2^{-1} \bm{\Sigma}_3 \bm{B}_2^\top) = \N{N}(\bm{\mu}_3^{'}, \bm{\Sigma}_3^{'}) \\ 
		\end{aligned} \right.
	\end{equation*}
	Thus we have:
	\begin{equation} \label{eq_mu'sigma'B}
		\left\{\begin{aligned}
			\bm{\mu}_1^{'} &= \bm{B}_2^{-1} (\vect{\mu}_1 - \vect{\mu}_2) \\ 
			\bm{\Sigma}_1^{'} &= \bm{B}_2^{-1} \bm{\Sigma}_1 \bm{B}_2^\top \\ 
			\bm{\mu}_3^{'} &= \bm{B}_2^{-1} (\vect{\mu}_3 - \vect{\mu}_2) \\ 
			\bm{\Sigma}_3^{'} &= \bm{B}_2^{-1} \bm{\Sigma}_3 \bm{B}_2^\top \\ 
		\end{aligned} \right.
	\end{equation}
	Then by Proposition~\ref{proposition_DiffPreservesKLDivergence} we have:
	\begin{equation*}
		\left\{
		\begin{aligned}
			\KL\left(\N{N}_1^{'} \, \| \, \N{N}_2^{'} \right) &= \KL\left(\N{N}_1 \, \| \, \N{N}_2 \right) = \Delta_1 \\ 
			\KL\left ( \N{N}_2^{'} \,\|\, \N{N}_3^{'} \right) &= \KL\left(\N{N}_2 \,\|\, \N{N}_3\right) = \Delta_2
		\end{aligned}
		\right.
	\end{equation*}
	that is,
	\begin{equation*}
		\left\{
		\begin{aligned}
			\KL\left(\N{N}_1^{'} \, \| \, \N{N}(\vect{0},~\bm{I}) \right) = \Delta_1 \\
			\KL\left ( \N{N}(\vect{0},~\bm{I}) \,\|\, \N{N}_3^{'} \right) = \Delta_2
		\end{aligned}
		\right.
	\end{equation*}
	By Lemma~\ref{lem_KLSupremumForNormal} we obtain:
	\begin{equation} \label{eq_SupermumOfKL13}
		\KL\left(\N{N}_1 \,\|\, \N{N}_3\right) = \KL\left ( \N{N}_1^{'} \,\|\, \N{N}_3^{'} \right) \leq H^*(\Delta_1, \Delta_2)
	\end{equation}
	equality is achieved if and only if the parameters satisfy
	\begin{equation} \label{eq_mu'sigma'Q}
		\left\{
		\begin{aligned}
			\bm{\mu}_1^{'} &= \vect{0} \\
			\bm{\mu}_3^{'} &= \vect{0} \\
			\bm{\Sigma}_1^{'} &= \bm{Q} \operatorname{diag}\bigl(w_2(2\Delta_1), 1, \dots, 1\bigr) \bm{Q}^\top \\
			\bm{\Sigma}_3^{'} &= \bm{Q} \operatorname{diag}\bigl(w_2(2\Delta_2)^{-1}, 1, \dots, 1\bigr) \bm{Q}^\top
		\end{aligned}
		\right.
	\end{equation}
	Combining \eqref{eq_mu'sigma'B} and \eqref{eq_mu'sigma'Q}, we conclude that equality in \eqref{eq_SupermumOfKL13} is achieved if and only if the parameters satisfy 
	\begin{equation*}
		\left\{
		\begin{aligned}
			\bm{\mu}_1 &= \bm{\mu}_2 \\
			\bm{\mu}_3 &= \bm{\mu}_2 \\
			\bm{\Sigma}_1 &= \bm{B}_2 \bm{Q} \operatorname{diag}\bigl(w_2(2\Delta_1), 1, \dots, 1\bigr) \bm{Q}^\top \bm{B}_2^\top \\
			\bm{\Sigma}_3 &= \bm{B}_2 \bm{Q} \operatorname{diag}\bigl(w_2(2\Delta_2)^{-1}, 1, \dots, 1\bigr) \bm{Q}^\top \bm{B}_2^\top
		\end{aligned}
		\right.
	\end{equation*}
	where invertible matrix $\bm{B}_2$ satisfies $\bm{\Sigma}_2 = \bm{B}_2 \bm{B}_2^{\top}$, $\bm{Q}$ is an orthogonal matrix.
	The theorem is thus proved.
\end{IEEEproof}

\section{Proof of Theorem~\ref{theorem_KLUpperBoundForEpsilon}} \label{section_KLUpperBoundForEpsilon}

\begin{IEEEproof}
	When $\epsilon > 0$ are small constants, by Equation (J.191) in \cite{zhang2023PropertiesKullbackleiblerDivergence}, we have:
	\begin{equation*}
		w_2(\epsilon) = -W_{-1}\!\big(-e^{-(1+\epsilon)}\big) = 1 + \sqrt{2 \epsilon} + O(\epsilon)
	\end{equation*}
	Since $\epsilon_1, \epsilon_2 > 0$ are fixed small constants, then we have:
	\begin{equation*}
		\begin{aligned}
			F(\epsilon_1, \epsilon_2) &= \left[ w_2(\epsilon_1) - 1 \right] \left[ w_2(\epsilon_2) - 1 \right] + \epsilon_1 + \epsilon_2 \\
			&= (\sqrt{2 \epsilon_1} + O(\epsilon_1))(\sqrt{2 \epsilon_2} + O(\epsilon_2)) + \epsilon_1 + \epsilon_2 \\
			&= \epsilon_1 + \epsilon_2 + 2 \sqrt{\epsilon_1 \epsilon_2} + o(\epsilon_1) + o(\epsilon_2)
		\end{aligned} 
	\end{equation*}
	Therefore, by Theorem~\ref{theorem_KLSupremumForGeneral}, we have:
	\begin{equation*}
		\begin{aligned}
			\KL\left(\mathcal{N}_1 \, \| \, \mathcal{N}_3 \right) \leq& \frac{1}{2} F(2\epsilon_1, 2\epsilon_2)\\
			=& \frac{1}{2} \left[ 2 \epsilon_1 + 2 \epsilon_2 + 2 \sqrt{(2 \epsilon_1) (2 \epsilon_2)} + o(\epsilon_1) + o(\epsilon_2) \right] \\
			=& \epsilon_1 + \epsilon_2  + 2 \sqrt{\epsilon_1 \epsilon_2} + o(\epsilon_1) + o(\epsilon_2) 
		\end{aligned} 
	\end{equation*}
	The theorem is proved.
\end{IEEEproof}

\section{More Numerical Experiment} \label{appendix_HExperiment}
\noindent
The proof of Lemma~\ref{lem_MaxH} is conducted based on the properties of function $H(x,y;\Delta_1, \Delta_2)$.
We perform numerical experiments to visually demonstrate the surface of $H$.

Initially, for ease of comparison across different parameter pairs $(\Delta_1, \Delta_2)$, we map the original domain $\Omega(\Delta_1, \Delta_2)$ to the unit square $[0, 2]^{2}$ and normalize the range so that the maximum value becomes $\frac{1}{2}$. Specifically, we introduce the normalized bivariate function
\begin{equation*}
	\begin{aligned}
		\bar{H}(\bar{x}, \bar{y}; \Delta_1, \Delta_2) &= \frac{H(\bar{x} \Delta_1 , \bar{y} \Delta_2; \Delta_1, \Delta_2) }{F(2 \Delta_1, 2 \Delta_2)} \\
	\end{aligned} 
\end{equation*} 
with $(\bar{x}, \bar{y}) \in [0, 2]^{2}$. Then for any fixed $(\Delta_1, \Delta_2)$ and any $(x, y) \in \Omega(\Delta_1, \Delta_2)$, we have
\begin{equation*}
	\begin{aligned}
		H(x, y; \Delta_1, \Delta_2) = 
		\bar{H}\left(\frac{x}{\Delta_1}, \frac{y}{\Delta_2}; \Delta_1, \Delta_2\right) \cdot F(2 \Delta_1, 2 \Delta_2)\\
	\end{aligned} 
\end{equation*} 
and 
\begin{equation*}
	\begin{aligned}
		\left( x^*, y^* \right) &= \left( \bar{x}^* \Delta_1, \bar{y}^* \Delta_2 \right) \\
	\end{aligned}
\end{equation*}
where 
\begin{equation*}
	(\bar{x}^*, \bar{y}^*) = \arg \max_{0 \leq \bar{x}^{'} \leq 2, 0 \leq \bar{y}^{'} \leq 2} \bar{H}(\bar{x}, \bar{y}; \Delta_1, \Delta_2) 
\end{equation*}

Moreover, to investigate the solutions of the following equations
\begin{equation*}
	\left\lbrace 
	\begin{aligned}
		\frac{\partial \bar{H}(\bar{x}, \bar{y}; \Delta_1, \Delta_2) }{\partial \bar{x}} = 0 \\
		\frac{\partial \bar{H}(\bar{x}, \bar{y}; \Delta_1, \Delta_2) }{\partial \bar{y}} = 0
	\end{aligned}
	\right.
\end{equation*}
we further define the univariate functions
\begin{equation*}
	\left\lbrace 
	\begin{aligned}
		\bar{x}^*{(\bar{y})} &= \arg \max_{0 \leq \bar{x}^{'} \leq 2} \bar{H}(\bar{x}^{'}, \bar{y}; \Delta_1, \Delta_2) \\
		\bar{y}^*{(\bar{x})} &= \arg \max_{0 \leq \bar{y}^{'} \leq 2} \bar{H}(\bar{x}, \bar{y}^{'}; \Delta_1, \Delta_2)
	\end{aligned}
	\right.
\end{equation*}
such that $(\bar{x}^*{(\bar{y})}, \bar{y})$ satisfies $\frac{\partial \bar{H}(\bar{x}, \bar{y}; \Delta_1, \Delta_2) }{\partial \bar{x}} = 0$ and $(\bar{x}, \bar{y}^*{(\bar{x})})$ satisfies $\frac{\partial \bar{H}(\bar{x}, \bar{y}; \Delta_1, \Delta_2) }{\partial \bar{y}} = 0$.

Based on this formulation, we conduct numerical experiments on the function $\bar{H}(\bar{x}, \bar{y}; \Delta_1, \Delta_2)$ and trace the curves $(\bar{x}, \bar{y}^*{\left( \bar{x} \right)})$ and $(\bar{x}^*{(\bar{y})}, \bar{y})$ for various parameter pairs $(\Delta_1, \Delta_2)$. The results are presented in Figure~\ref{fig_H}. 
\begin{figure*}[ht]
	\vskip 0.2in
	\begin{center}
		\centering
		\includegraphics[width=0.99 \linewidth]{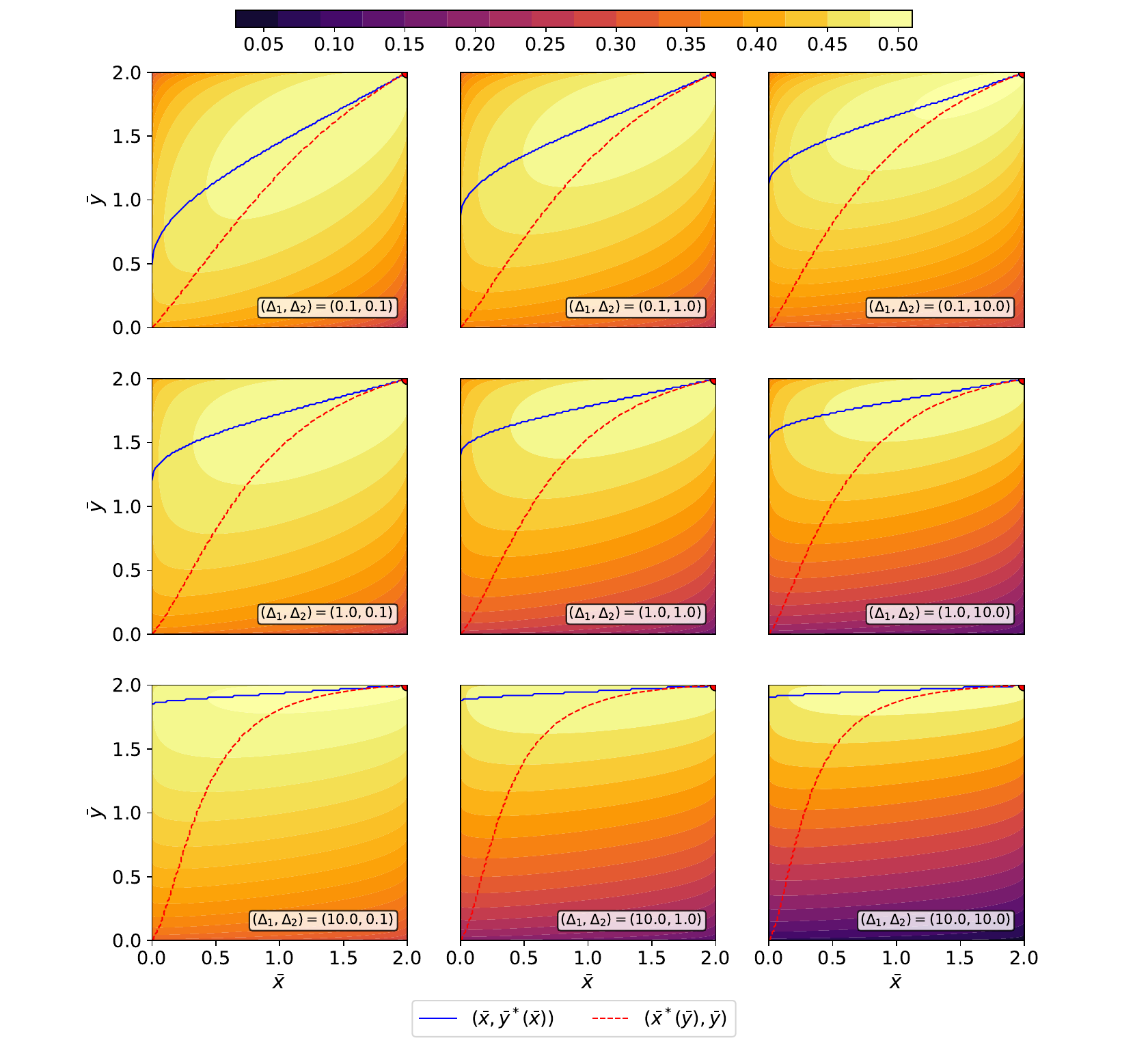}
		\caption{Heatmaps of $\bar{H}(\bar{x}, \bar{y}; \Delta_1, \Delta_2)$ over $(\bar{x}, \bar{y}) \in [0,2]^2$ for various parameter pairs $(\Delta_1, \Delta_2)$. 
			On each subplot, the blue curve $(\bar{x}, \bar{y}^*{\left( \bar{x} \right)})$ and the red curve $(\bar{x}^*{(\bar{y})}, \bar{y})$ trace the maximizers of $\bar{H}$ along vertical lines $\bar{x} = \text{const}$ and horizontal lines $\bar{y} = \text{const}$, respectively, satisfying $\frac{\partial \bar{H}(\bar{x}, \bar{y}; \Delta_1, \Delta_2)}{\partial \bar{y}} = 0$ and $\frac{\partial \bar{H}(\bar{x}, \bar{y}; \Delta_1, \Delta_2)}{\partial \bar{x}} = 0$, respectively. 
			The red point in the upper-right corner of each subplot denotes the global maximizer $(\bar{x}^*, \bar{y}^*)$.}
		\label{fig_H}
	\end{center}
\end{figure*}

As shown in Figure~\ref{fig_H}, all experiments consistently indicate that the curves $(\bar{x}, \bar{y}^*{\left( \bar{x} \right)})$ and $(\bar{x}^*{(\bar{y})}, \bar{y})$ have no intersection in $(0, 2)^{2}$, which implies that there is no critical point or maximizer in the interior of the domain and that the maximizer of $\bar{H}(\bar{x}, \bar{y}; \Delta_1, \Delta_2)$ is $(\bar{x}^*, \bar{y}^*) = (2, 2)$.
These observations imply that $(x^*, y^*) = (2\Delta_1, 2\Delta_2)$ and $H^*(\Delta_1, \Delta_2) = \frac{1}{2} F(2\Delta_1, 2\Delta_2)$, fully consistent with Lemma~\ref{lem_MaxH}.

\newpage

%
%
%
%
%

\end{document}